\documentclass[10pt]{article}

\usepackage[margin=1in]{geometry}
\usepackage[authoryear,round]{natbib}
\usepackage{parskip}
\usepackage[utf8]{inputenc}
\usepackage[T1]{fontenc}
\usepackage{lmodern,microtype,amsmath,amssymb,amsthm,mathtools}
\usepackage{algorithm,algpseudocode,booktabs,tabularx,placeins,needspace,flafter}
\usepackage[table,dvipsnames]{xcolor}
\definecolor{TableInk}{RGB}{40,62,82}
\definecolor{TableHead}{RGB}{232,239,245}
\definecolor{TableOurs}{RGB}{241,246,250}
\usepackage{xurl}
\usepackage[colorlinks,linkcolor=blue,citecolor=ForestGreen,urlcolor=blue]{hyperref}
\usepackage{etoc}
\usepackage{caption}
\usepackage[nameinlink,noabbrev]{cleveref}
\usepackage{nicefrac}
\allowdisplaybreaks
\ifdefined\pdfminorversion\pdfminorversion=7\fi
\floatstyle{ruled}
\restylefloat{algorithm}
\algrenewcommand\algorithmicrequire{\textbf{Input:}}
\algrenewcommand\algorithmicensure{\textbf{Output:}}

\newtheorem{theorem}{Theorem}
\newtheorem{assumption}{Assumption}
\newtheorem{lemma}{Lemma}[section]
\theoremstyle{remark}
\newtheorem{remark}{Remark}
\theoremstyle{plain}
\DeclareMathOperator*{\argmax}{arg\,max}
\newcommand{\E}{\mathbb E}
\newcommand{\Prob}{\mathbb P}
\newcommand{\one}{\mathbf 1}
\newcommand{\tO}{\widetilde{\mathcal O}}
\newcommand{\ind}[1]{\mathbf 1\{#1\}}
\newcommand{\norm}[1]{\lVert#1\rVert}
\newcommand{\cP}{\mathcal P}
\newcommand{\cC}{\mathcal C}
\newcommand{\cE}{\mathcal E}
\newcommand{\cH}{\mathcal H}
\newcommand{\cF}{\mathcal F}
\newcommand{\KL}{\operatorname{KL}}
\newcommand{\TV}{\operatorname{TV}}
\newcommand{\kl}{\operatorname{kl}}

\makeatletter
\providecommand{\theHALG@line}{\thealgorithm.\arabic{ALG@line}}
\renewcommand{\theHALG@line}{\thealgorithm.\arabic{ALG@line}}
\makeatother

\title{Toward Optimal Regret in Adversarial MDPs\\ with Stochastic Hard Constraints}
\author{%
\begin{tabular}{c@{\hspace{2em}}c}
Qian Zuo & Francesco Emanuele Stradi \\
{\small University of Edinburgh} & {\small Politecnico di Milano} \\
{\small\texttt{qian.zuo@ed.ac.uk}} & {\small\texttt{francescoemanuele.stradi@polimi.it}}
\end{tabular}%
}
\date{}
\hypersetup{
  pdftitle={Toward Optimal Regret in Adversarial MDPs with Stochastic Hard Constraints},
  pdfauthor={Qian Zuo and Francesco Emanuele Stradi}
}

\begin{document}
\raggedbottom
\etocdepthtag{main}
\maketitle

\begin{abstract}
We study episodic \emph{constrained Markov decision processes} with adversarial losses under stochastic \emph{hard} constraints. Specifically, starting from a known strictly feasible policy with margin $d$, we seek to obtain optimal regret while satisfying the expected cost constraints in every episode.
In this setting, \citet{stradi2025} show that a carefully designed mixing rule attains regret of order $\tO(\sqrt T/\min\{d,d^2\})$. Interestingly, they also provide a lower bound of order $\Omega(\sqrt T/\rho)$ for the same setting, where $\rho$ is the Slater margin of the offline problem and can be much larger than $d$. In this work, we build on their approach to obtain optimal regret dependence on these margins. 
Specifically, we propose MA-OPS, an algorithm that combines an optimistic search for the Slater margin with a pessimistic evaluation of the selected policies to safely learn a policy with a large feasibility margin. This policy is then used to minimize regret while satisfying the constraints at every episode.
In particular, we show that MA-OPS attains regret $\tO(\sqrt T/\rho+1/(d\rho))$. 
Finally, we provide a matching lower bound, showing that the dependence on $T$, $d$, $\rho$ in the regret bound is optimal up to logarithmic factors.
\end{abstract}

\section{Introduction}
Reinforcement learning has been successfully applied to many challenging problems, including robotic control, game playing, and aligning language models with human preferences \citep{kaufmann2023,silver2016,ouyang2022}. These problems are commonly modeled through Markov decision processes (MDPs), in which a learner sequentially interacts with an environment by selecting actions, observing state transitions, and incurring losses \citep{sutton2018}. In this paper, we consider episodic adversarial MDPs, where the transition kernel is fixed, while losses can change arbitrarily across episodes \citep{jin2020}. Thus, no statistical assumption on the losses is required, allowing the model to capture learning problems with changing objectives.

In many applications, the learner must also satisfy requirements on quantities such as resource consumption. Constrained MDPs (CMDPs, for short)~\citep{Altman1999ConstrainedMD} model these requirements by introducing costs whose expected cumulative values must remain below given thresholds. When such requirements must hold throughout the learning process, including during exploration, one needs algorithms that satisfy \emph{hard constraints}. Specifically, all constraints must be satisfied in every episode on a common high-probability event \citep{liu2021zero,stradi2025}. We study this problem in CMDPs with adversarial losses, unknown transitions, stochastic constraints, and bandit feedback.

\begin{table}[t]
\captionsetup{font=small}
\caption{Comparison of regret and cumulative positive constraint violation bounds under stochastic constraints and bandit feedback.
The first two rows concern constrained multi-armed bandits, and the remaining rows concern episodic CMDPs with unknown transitions.
Here, $d$ denotes the input policy safety margin, $\rho$ the Slater margin ($\rho\ge d$), and $T$ the number of episodes (rounds in bandits).
We show only the dependence on $T$, $d$, and $\rho$, omitting logarithmic factors.}
\label{tab:margin-comparison}
\centering
\small
\setlength{\tabcolsep}{2pt}
\renewcommand{\arraystretch}{1.13}
\renewcommand{\tabularxcolumn}[1]{m{#1}}
\begin{tabularx}{\textwidth}{lcccc>{\centering\arraybackslash}X}
\arrayrulecolor{TableInk}\toprule
\rowcolor{TableHead}
\textbf{Algorithm} & \textbf{Losses} & \shortstack{\textbf{Input} \textbf{margin}} & \textbf{Regret} & \textbf{Violation} & \textbf{Lower bound} \\
\midrule
OPB \citep{pacchiano2021} & Stoch. & $d$ & $\tO(\sqrt T/d)$ & $0$ & $\Omega(\max\{\sqrt T,1/d^2\})$ \\
SOLB \citep{genalti2025datadependent} & Adv. & $\rho$ & $\tO(\sqrt T/\rho)$ & $0$ & $\Omega(\sqrt T/\rho)$ \\
\midrule
OptPess-LP \citep{liu2021zero} & Stoch. & $d$ & $\tO(\sqrt T/d+1/\min\{d,d^2\})$ & $0$ & --- \\
DOPE+ \citep{yu2025improved} & Stoch. & $d$ & $\tO(\sqrt T/d+1/d^2)$ & $0$ & --- \\
\addlinespace[2pt]
S-OPS \citep{stradi2025} & Adv. & $d$ & $\tO(\sqrt T/\min\{d,d^2\})$ & $0$ & $\Omega(\sqrt T/\rho)$ \\
CV-OPS \citep{stradi2025} & Adv. & -- & $\tO(\sqrt T/\min\{\rho,\rho^2\}+1/\rho^4)$ & $\tO(1/\rho^4)$ & $\Omega(\sqrt T/\rho)$ \\
\addlinespace[3pt]
\rowcolor{TableOurs}
\textbf{S-OPS (Theorem~\ref{thm:sops})} & Adv. & $d$ & $\tO(\sqrt T/d)$ & $0$ & --- \\
\rowcolor{TableOurs}
\textbf{MA-OPS (Theorems~\ref{thm:combined},~\ref{thm:lower})} & Adv. & $d$ & $\tO(\sqrt T/\rho+1/(d\rho))$ & $0$ & $\Omega(\sqrt T/\rho+1/(d\rho))$ \\
\bottomrule
\end{tabularx}
\end{table}

A standard approach to ensure safety during exploration assumes that the learner is given a strictly feasible baseline policy together with upper bounds on its expected cumulative costs \citep{yu2025improved}. The learner can then mix this baseline with the algorithmically selected policies whose costs are not well estimated. The amount of exploration allowed by such an approach depends on the \emph{input policy safety margin} $d$, namely, the smallest margin by which the input policy satisfies all the constraints. When $d$ is small, ensuring safety may require selecting the baseline for many episodes, even when this policy incurs large losses.

However, the margin given as input to the learner may be much smaller than the largest one attainable in the CMDP. We denote the latter by the \emph{Slater margin} $\rho$, defined as the largest common slack with which any policy can satisfy all constraints, such that it holds
that $d\leq\rho$. 
For $d\leq1$, \citet{stradi2025} provide the Safe Optimistic Policy Search (S-OPS) algorithm, which attains $\tO(\sqrt T/d^2)$ regret over $T$ episodes, while establishing an $\Omega(\sqrt T/\rho)$ lower bound for the problem. Thus, these results leave open the question of the optimal joint dependence on $d$ and $\rho$ when constraints must be satisfied at every episode. Specifically, we focus on the following research question.
\begin{quote}
\centering
\emph{Is it possible to design an algorithm that attains optimal regret dependence on the initial input margin $d$ and the unknown Slater parameter $\rho$, while satisfying the constraints at every episode?}
\end{quote}
We answer this question affirmatively by introducing \emph{Margin-Adaptive Optimistic Policy Search} (MA-OPS). The algorithm first searches for a policy with a large safety margin while ensuring safety throughout this phase, and then uses this policy as the baseline for loss minimization. This allows us to attain a regret bound of order $\tO(\sqrt T/\rho+1/(d\rho))$, which we then prove to be optimal in $T,\rho,d$ up to logarithmic factors.

\subsection{Original Contribution}

Our contributions can be summarized as follows.
\begin{itemize}
\item \textbf{Linear $1/d$ dependence for S-OPS} We establish a $\tO(\sqrt T/d)$ regret bound for S-OPS with high probability (Theorem~\ref{thm:sops}). This improves the dependence from $1/d^2$ to $1/d$ for $d<1$ with respect to the bound provided in \citet{stradi2025}.
\item \textbf{Safe learning of the Slater margin} We develop MA-OPS, that is, the first algorithm capable of learning the Slater margin and the associated policy in constant time, while satisfying the constraints at each episode. MA-OPS then uses S-OPS instantiated with the aforementioned policy as %
subroutine to attain the final regret guarantees.
\item \textbf{Learning guarantees} Specifically, MA-OPS learns a margin of at least $\rho/2$ within $\tO(1/(d\rho))$ episodes, guaranteeing safety throughout (Theorem~\ref{thm:identification}). For every finite $T$, MA-OPS jointly guarantees $\tO(\sqrt T/\rho+1/(d\rho))$ regret and feasibility in every episode with high probability (Theorem~\ref{thm:combined}). 
\item \textbf{Matching lower bound} We prove an $\Omega(\sqrt T/\rho+1/(d\rho))$ regret lower bound for any safe algorithm (Theorem~\ref{thm:lower}). This matches the MA-OPS guarantees on $T,d,\rho$ up to logarithmic factors.
\end{itemize}
A summary of the performance of our algorithm compared to the state-of-the-art is provided in Table~\ref{tab:margin-comparison}.

\subsection{Related Work}\label{sec:related_work}

Online CMDPs have been widely studied in stochastic, adversarial, and non-stationary settings \citep{efroni2020exploration,qiu2020,stradi2024online,ding2023nonstationary}.
In stochastic CMDPs, \citet{efroni2020exploration} obtain $\widetilde{\mathcal O}(\sqrt T)$ regret and cumulative positive constraint violation using linear programming.
\citet{stradi2025strong} further achieve the same dependence on $T$ through policy optimization. For hard constraints, early work by \citet{liu2021zero} established $\widetilde{\mathcal O}(\sqrt T)$ regret for stochastic CMDPs with unknown transitions, using a known strictly feasible baseline to ensure feasibility in every episode.
Subsequent work improved the dependence on the number of states \citep{bura2022dope} and the episode length \citep{yu2025improved}.
Across these results, the leading regret term scales linearly with $1/d$, while the bounds of \citet{liu2021zero} and \citet{yu2025improved} include an additive $1/d^2$ term when the initial margin is at most one.
\citet{stradi2025} extended this line to adversarial losses and also studied learning a baseline when no feasible policy is initially known.
With a given baseline, they obtain $\widetilde{\mathcal O}(\sqrt T/d^2)$ regret for $d\le1$, omitting other problem parameters.
Without one, their algorithm learns a margin comparable to $\rho$ but permits violations during the search.
We study CMDPs with adversarial losses and stochastic hard constraints, starting from a known strictly feasible baseline.
We recover the linear $1/d$ dependence in the leading regret term and learn a baseline with margin comparable to $\rho$, while maintaining feasibility in every episode.

Another closely related topic is constrained bandits.
For constraints on individual actions, \citet{amani2019safe} obtain sharper regret bounds when the optimal action has a known positive safety margin.
This margin can vanish even when a strictly feasible baseline exists.
For per-round expected-cost constraints, \citet{pacchiano2021} and \citet{pacchiano2025} instead express stochastic regret bounds in terms of a known baseline's margin, with leading terms linear in $1/d$ and expected-regret lower bounds containing an inverse-square margin term.
For adversarial losses, \citet{genalti2025datadependent} obtain loss-dependent bounds.
Their main analysis assumes a known maximum-margin strategy and its expected costs, so the initial margin equals the Slater margin.
Our lower bound separates $d$ from $\rho$, quantifying the regret cost of limited initial information even when a larger margin is attainable.

Due to space limitations, we defer an additional discussion to Appendix~\ref{app:related}.

\section{Preliminaries}\label{sec:setting}
We study an episodic constrained Markov decision process (CMDP) \mbox{$\mathcal M=\langle X,A,P,\{\ell_t\}_{t=1}^T,\{G_t\}_{t=1}^T,\alpha\rangle$} with adversarial losses and stochastic constraint costs. The learner interacts with $\mathcal M$ for $T$ episodes of length $L$, subject to $m$ constraints. Here, $X$ and $A$ are finite state and action spaces. The transition kernel is $P:X\times A\times X\to[0,1]$, where $P(x'\mid x,a)$ denotes the probability of moving from state $x$ to state $x'$ after taking action $a$. For episode $t\in[T]$, $\ell_t\in[0,1]^{|X\times A|}$ is the loss vector and $G_t\in[0,1]^{|X\times A|\times m}$ is the constraint cost matrix. The vector $\alpha\in[0,L]^m$ contains the cost thresholds, with $\alpha_i$ denoting the threshold for constraint $i\in[m]$. We write $[n]=\{1,\ldots,n\}$ for any positive integer $n$.

At the beginning of each episode $t\in[T]$, an adversary chooses the loss vector $\ell_t$. The cost matrices $G_t$ are drawn i.i.d.\ from an unknown distribution $\mathcal G$. For constraint $i\in[m]$, we denote the $i$-th column of $G_t$ by $g_{t,i}\in[0,1]^{|X\times A|}$. At a state--action pair $(x,a)\in X\times A$, the loss is $\ell_t(x,a)$ and the cost for constraint $i$ is $g_{t,i}(x,a)$. We denote the expected cost vector by $g_i=\E[g_{t,i}]\in[0,1]^{|X\times A|}$ and the expected cost matrix by $G=[g_1,\ldots,g_m]\in[0,1]^{|X\times A|\times m}$. The learner knows $X,A,L$ and $\alpha$, whereas $P$ and $G$ are unknown.

We consider loop-free CMDPs, in which $X$ is partitioned into $L+1$ layers $X_0,\ldots,X_L$, with $X_0=\{x_0\}$ and $X_L=\{x_L\}$. Transitions occur only between consecutive layers, i.e., $P(x'\mid x,a)>0$ only if $x\in X_k$ and $x'\in X_{k+1}$ for some $k\in\{0,\ldots,L-1\}$. A finite-horizon CMDP can be represented in this form by duplicating its states across time steps.

A Markov policy $\pi:X\times A\to[0,1]$ defines a probability distribution over actions at each state, with $\pi(a\mid x)$ denoting the probability of choosing action $a$ in state $x$. Starting from $x_0$, the learner samples $a_k\sim\pi(\cdot\mid x_k)$ and the next state is drawn as $x_{k+1}\sim P(\cdot\mid x_k,a_k)$ for $k\in\{0,\ldots,L-1\}$. At the beginning of episode $t\in[T]$, the learner selects a policy $\pi_t$ using observations from previous episodes. Under bandit feedback, the learner observes the trajectory of visited state--action pairs $(x_k,a_k)$ for $k\in\{0,\ldots,L-1\}$, their losses $\ell_t(x_k,a_k)$, and their costs $g_{t,i}(x_k,a_k)$ for constraint $i\in[m]$.

Given $P$ and a policy $\pi$, we denote the induced occupancy measure by $q^{P,\pi}\in[0,1]^{|X\times A\times X|}$. For $x\in X_k$, $a\in A$, and $x'\in X_{k+1}$ with $k\in\{0,\ldots,L-1\}$, its entry gives the probability of visiting $(x,a,x')$ in one episode, i.e., $q^{P,\pi}(x,a,x')=\Prob\!\left[x_k=x,\,a_k=a,\,x_{k+1}=x'\mid P,\pi\right]$. For an occupancy measure $q$, the state--action marginal vector in $[0,1]^{|X\times A|}$ has entries $q(x,a)=\sum_{x'}q(x,a,x')$. Summing over actions gives the state marginal vector in $[0,1]^{|X|}$, with entries $q(x)=\sum_a q(x,a)$. Since each episode has $L$ steps, $\sum_{x,a}q(x,a)=L$. We denote the set of occupancy measures under $P$ by $\Delta(P)\subseteq[0,1]^{|X\times A\times X|}$. Each $q\in\Delta(P)$ induces a Markov policy $\pi^q$, with $\pi^q(a\mid x)=q(x,a)/q(x)$ at states where $q(x)>0$. %

The expected cumulative loss of policy $\pi$ in episode $t\in[T]$ is $\ell_t^\top q^{P,\pi}=\sum_{x,a}\ell_t(x,a)q^{P,\pi}(x,a)$. Similarly, its expected cumulative cost for constraint $i\in[m]$ is $\E_{P,\pi}\!\left[\sum_{k=0}^{L-1}g_i(x_k,a_k)\right]=g_i^\top q^{P,\pi}$. A policy is feasible if $g_i^\top q^{P,\pi}\leq\alpha_i$ for every constraint $i\in[m]$. For the policy $\pi_t$ selected in episode $t$, we write $q_t=q^{P,\pi_t}$.

\subsection{Online CMDPs with Hard Constraints}
We measure the learner's performance in terms of regret against the best feasible policy in hindsight. Specifically, let $q^\star\in\Delta(P)$ be a minimizer of $\sum_{t=1}^T\ell_t^\top q$ subject to $G^\top q\leq\alpha$. We define the \emph{cumulative regret} as:
\begin{equation}
R_T=\sum_{t=1}^T\ell_t^\top(q_t-q^\star).
\label{eq:regret}
\end{equation}
Our goal is to achieve a regret bound that is sublinear in $T$, i.e., $R_T=o(T)$, while ensuring that $g_i^\top q_t\leq\alpha_i$ for every episode $t\in[T]$ and every constraint $i\in[m]$ with high probability. To satisfy the constraints from the first episode, we assume that the learner knows a \emph{potentially suboptimal} strictly feasible policy \citep{liu2021zero,bura2022dope,stradi2025}.
\begin{assumption}[Strictly feasible policy]\label{ass:baseline}
The learner knows a Markov policy $\pi^\diamond$ and a vector $\beta\in[0,L]^m$ satisfying $G^\top q^{P,\pi^\diamond}\leq\beta<\alpha$.
\end{assumption}

The upper bounds $\beta$ on the expected cumulative costs of $\pi^\diamond$ determine the \emph{initial margin} $d$. In contrast, the \emph{Slater margin} $\rho$ is the largest slack that a policy can achieve across the constraints. Formally,
\begin{equation*}
    d=\min_{i\in[m]}(\alpha_i-\beta_i), \quad
\rho=\max_{q\in\Delta(P)}\min_{i\in[m]}(\alpha_i-g_i^\top q).
\end{equation*}
The initial margin $d$ is known to the learner, whereas the Slater margin $\rho$ is unknown. Assumption~\ref{ass:baseline} implies Slater's condition \citep{boyd2004convex} and $0<d\leq\rho\leq L$. We use $\mathcal O$ and $\Omega$ to suppress universal constants, and $\widetilde{\mathcal O}$ to additionally suppress logarithmic factors.

\section{Improving The Margin Dependence of S-OPS}\label{sec:direct_improvement}
When the initial margin $d$ is small, safe exploration can require frequent use of the baseline, limiting how often the policy selected for exploration is executed. Our key observation is that this execution probability also weights the policy's constraint violation bound in the regret analysis. This weight links the regret incurred by using the baseline to uncertainty along observed trajectories. Using this relation, we show that S-OPS~\citep{stradi2025} achieves a regret bound with linear dependence on $1/d$ while satisfying the constraints in every episode.

\subsection{Safe Optimistic Policy Search (S-OPS)}\label{sec:sops-algorithm}
S-OPS~\citep{stradi2025} searches for low-loss policies under optimistic constraints and mixes them with the baseline $\pi^\diamond$ using pessimistic cost bounds. In episode $t\in[T]$, S-OPS maintains an occupancy measure $\widehat q_t\in\Delta(\cP_{t-1})$ using online mirror descent, where $\Delta(\cP_{t-1})$ is the set of occupancy measures induced by kernels in the transition confidence set $\cP_{t-1}$. We denote the empirical cost vectors and their confidence widths by $\widehat g_{t-1,i},\xi_{t-1}\in[0,1]^{|X\times A|}$, respectively. The mirror descent update imposes $(\widehat g_{t-1,i}-\xi_{t-1})^\top\widehat q_t\leq\alpha_i$ for each constraint $i\in[m]$. On the confidence event, the resulting optimistic set contains the true occupancy of every feasible policy. However, the induced policy $\widehat\pi_t=\pi^{\widehat q_t}$ may violate the true constraints.

To choose a mixing probability that ensures safety, S-OPS first computes upper bounds on the expected costs of $\widehat\pi_t$.
To account for transition uncertainty, it uses $\widehat u_t\in[0,1]^{|X\times A|}$, where $\widehat u_t(x,a)=\max_{P'\in\cP_{t-1}}q^{P',\widehat\pi_t}(x,a)$.
S-OPS then combines these occupancy bounds with the pessimistic cost estimates $\widehat g_{t-1,i}+\xi_{t-1}$ to obtain $w_t\in[0,L]^m$, where $w_{t,i}=\min\{L,(\widehat g_{t-1,i}+\xi_{t-1})^\top\widehat u_t\}$.
At the beginning of each episode, the randomized policy $\pi_t$ selects the baseline $\pi^\diamond$ with probability:
\begin{equation}
\lambda_{t-1}=\max_{i:\,w_{t,i}>\alpha_i}
\frac{w_{t,i}-\alpha_i}{w_{t,i}-\beta_i},
\label{eq:main-sops-mixing}
\end{equation}
and $\widehat\pi_t$ with probability $1-\lambda_{t-1}$, then follows the selected policy for all $L$ steps.
We set $\lambda_{t-1}=0$ when $w_t\leq\alpha$.
This randomization gives $q^{P',\pi_t}=\lambda_{t-1}q^{P',\pi^\diamond}+(1-\lambda_{t-1})q^{P',\widehat\pi_t}$ for every kernel $P'$.
With high probability, the cost bounds and the mixing rule therefore ensure $G^\top q_t\leq\lambda_{t-1}\beta+(1-\lambda_{t-1})w_t\leq\alpha$.
Moreover, the probability of executing $\widehat\pi_t$ satisfies $1-\lambda_{t-1}>0$, allowing S-OPS to collect observations under this policy before it is known to be feasible.

After executing $\pi_t$, S-OPS uses the observed losses to construct the optimistic loss estimator $\widehat\ell_t\in[0,1/\gamma]^{|X\times A|}$, with implicit exploration parameter \mbox{$\gamma>0$}:
\begin{equation}
\widehat\ell_t(x,a)=\frac{\ell_t(x,a)I_t(x,a)}{u_t(x,a)+\gamma},
\label{eq:main-sops-estimator}
\end{equation}
where $u_t(x,a)=\max_{P'\in\cP_{t-1}}q^{P',\pi_t}(x,a)$ is the largest probability of visiting $(x,a)$ under $\pi_t$, and $I_t(x,a)$ is the indicator function which is $1$ when $(x,a)$ has been visited at episode $t$.
S-OPS also updates its cost and transition estimates after every episode.
A multiplicative update with $\widehat\ell_t$ and learning rate $\eta>0$, followed by a relative-entropy projection onto the updated optimistic set, gives $\widehat q_{t+1}$.
We provide the pseudocode of S-OPS in Algorithm~\ref{alg:sops} (Appendix~\ref{app:sops-specification}).

\subsection{Regret Guarantee}\label{sec:sops-guarantee}
We then obtain the following regret guarantee.
\begin{theorem}[Regret and constraint violation]\label{thm:sops}
Suppose Assumption~\ref{ass:baseline} holds. Let $\delta\in(0,1)$. S-OPS with \mbox{$\eta=\gamma=\sqrt{\frac{L\ln(8m|X|^2|A|T/\delta)}{T|X||A|}}$} guarantees, with probability at least $1-\delta$, $G^\top q_t\leq\alpha$ for every $t\in[T]$ and:
\[
R_T \leq \widetilde{\mathcal O}\!\left(
\frac{L^2|X|\sqrt{|A|T}}{d}
+\frac{L|X|^3|A|}{d}
\right).
\]
\end{theorem}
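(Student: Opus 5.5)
We work on a single good event $\cE$ of probability at least $1-\delta$. On $\cE$ the true kernel lies in $\cP_{t-1}$ for every $t$, and $|\widehat g_{t-1,i}-g_i|\le\xi_{t-1}$ entrywise; the implicit-exploration concentration bounds for $\widehat\ell_t$ also hold there. The feasibility claim then follows directly from the mixing rule~\eqref{eq:main-sops-mixing}. On $\cE$ we have $g_i^\top q^{P,\widehat\pi_t}\le(\widehat g_{t-1,i}+\xi_{t-1})^\top\widehat u_t$, and since $g_i^\top q\le L$ always, also $g_i^\top q^{P,\widehat\pi_t}\le w_{t,i}$. Hence $g_i^\top q_t\le\lambda_{t-1}\beta_i+(1-\lambda_{t-1})w_{t,i}\le\alpha_i$ by the choice of $\lambda_{t-1}$. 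The real work is the regret bound. We decompose
\[
R_T=\sum_t\ell_t^\top(q_t-q^{P,\widehat\pi_t})+\sum_t\ell_t^\top(q^{P,\widehat\pi_t}-q^\star).
\]
The first term equals $\sum_t\lambda_{t-1}\ell_t^\top(q^{P,\pi^\diamond}-q^{P,\widehat\pi_t})\le L\sum_t\lambda_{t-1}$. The second is the regret of the OMD iterates against $q^\star$, which lies in every optimistic set on $\cE$. We handle it with the standard bandit-OMD analysis with implicit exploration, as in \citet{jin2020}. The only new point is that the estimator uses $u_t$, the upper occupancy of the mixture $\pi_t$, and not that of $\widehat\pi_t$. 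Bias and variance terms are therefore controlled by $\sum_t\sum_{x,a}|u_t-q_t|$ plus $\eta$-weighted terms, and these are bounded by the usual $\tO(L|X|\sqrt{|A|T})$ transition-estimation arguments. A remaining discrepancy $\ell_t^\top(\widehat q_t-q^{P,\widehat\pi_t})$ has weight $(1-\lambda_{t-1})$ where observed and must be treated together with the $\lambda$ term below. We expect this OMD part to give $\tO(L^2|X|\sqrt{|A|T}+L|X|^3|A|)$ with no margin dependence.

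The key step is bounding $L\sum_t\lambda_{t-1}$ by $\tO(L^2|X|\sqrt{|A|T}/d+L|X|^3|A|/d)$, which gives linear rather than quadratic dependence on $1/d$. When $\lambda_{t-1}>0$, the denominator satisfies $w_{t,i}-\beta_i\ge\alpha_i-\beta_i\ge d$. Hence
\[
\lambda_{t-1}\le\frac{1}{d}\max_i(w_{t,i}-\alpha_i)^+.
\]
Since $\widehat q_t$ satisfies the optimistic constraint $(\widehat g_{t-1,i}-\xi_{t-1})^\top\widehat q_t\le\alpha_i$, we get
\[
w_{t,i}-\alpha_i\le 2\xi_{t-1}^\top\widehat u_t+(\widehat g_{t-1,i}-\xi_{t-1})^\top(\widehat u_t-\widehat q_t)\le 2\xi_{t-1}^\top\widehat u_t+\norm{\widehat u_t-\widehat q_t}_1.
\]
This is a width term along $\widehat\pi_t$. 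The difficulty, and the source of the earlier $1/d^2$, is that $\widehat\pi_t$ is executed only with probability $1-\lambda_{t-1}$. Summing widths along $\widehat\pi_t$ therefore only telescopes over the counts it actually generates, costing an extra $1/(1-\lambda)$ or $1/d$.

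Our remedy, following the observation at the start of this section, is to rearrange the bound as $\lambda_{t-1}(w_{t,i}-\beta_i)=w_{t,i}-\alpha_i$. This gives $d\lambda_{t-1}\le\lambda_{t-1}(w_{t,i}-\beta_i)=(1-\lambda_{t-1})(w_{t,i}-\alpha_i)+\lambda_{t-1}(w_{t,i}-\alpha_i)$, and we must then account for the extra $\lambda_{t-1}$ summand. We instead use the exact identity
\[
\lambda_{t-1}=\frac{w_{t,i}-\alpha_i}{w_{t,i}-\beta_i},\qquad 1-\lambda_{t-1}=\frac{\alpha_i-\beta_i}{w_{t,i}-\beta_i}.
\]
Dividing, $\lambda_{t-1}=(1-\lambda_{t-1})(w_{t,i}-\alpha_i)/(\alpha_i-\beta_i)\le(1-\lambda_{t-1})(w_{t,i}-\alpha_i)/d$. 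Now every width term is weighted by $(1-\lambda_{t-1})$, the probability that $\widehat\pi_t$ is executed. Thus $(1-\lambda_{t-1})\widehat u_t\le u_t$ up to the transition-uncertainty gap, and $(1-\lambda_{t-1})\norm{\widehat u_t-\widehat q_t}_1$ is dominated by mixture-level quantities. We then bound $\sum_t\xi_{t-1}^\top u_t$ and $\sum_t\norm{u_t-q_t}_1$, which are sums of confidence widths along the actually played $q_t$. By the standard martingale-plus-pigeonhole lemmas of \citet{jin2020}, these are $\tO(L|X|\sqrt{|A|T}+|X|^3|A|)$. Multiplying by $L/d$ gives the claimed bound.

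We expect the main obstacle to be justifying the step $(1-\lambda_{t-1})\widehat u_t\lesssim u_t$. The two maximizing kernels defining $\widehat u_t$ and $u_t$ may differ, though since $u_t$ maximizes the mixture occupancy, $u_t\ge(1-\lambda_{t-1})q^{P',\widehat\pi_t}$ for every $P'\in\cP_{t-1}$, so the inequality holds exactly. A further obstacle is that $\widehat q_t$ uses a kernel in $\cP_{t-1}$ different from $P$. We plan to write $\widehat u_t-\widehat q_t$ as a difference of occupancies of $\widehat\pi_t$ under two kernels in $\cP_{t-1}$ and bound it by $2(\widehat u_t-q^{P,\widehat\pi_t})$-type quantities. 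Multiplied by $(1-\lambda_{t-1})$, these again reduce to the played-policy sums via the same maximality argument.
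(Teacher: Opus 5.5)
Your handling of the baseline-use term follows the paper's argument. This includes the exact identity $\lambda_{t-1}=(1-\lambda_{t-1})(w_{t,i}-\alpha_i)/(\alpha_i-\beta_i)$, the bound $w_{t,i}-\alpha_i\le 2\xi_{t-1}^\top\widehat u_t+\norm{\widehat u_t-\widehat q_t}_1$, and $(1-\lambda_{t-1})\widehat u_t\le u_t$. Merging the discrepancy with the $\lambda$ term gives $\ell_t^\top(q_t-\widehat q_t)\le L\lambda_{t-1}+(1-\lambda_{t-1})\norm{q^{P,\widehat\pi_t}-\widehat q_t}_1$, which is the paper's $R^{(1)}+R^{(3)}$. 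Two steps, however, fail as written.

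\textbf{The mirror-descent and loss-estimation part is not margin-free.} The implicit-exploration analysis of \citet{jin2020} needs $\widehat q_t\le u_t$ in three places:
\begin{itemize}
\item the variance term, where it gives $\widehat q_t\widehat\ell_t^2\le\widehat\ell_t$;
\item the bias $\sum_{t,x,a}\widehat q_t(x,a)\ell_t(x,a)\frac{u_t(x,a)-q_t(x,a)+\gamma}{u_t(x,a)+\gamma}$;
\item the increment range of the martingale $\sum_t(\E_t[\widehat\ell_t]-\widehat\ell_t)^\top\widehat q_t$.
\end{itemize}
Here $u_t$ is the upper occupancy of the \emph{mixture}, while $\widehat q_t$ belongs to $\widehat\pi_t$ alone. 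So only $\widehat q_t\le u_t/(1-\lambda_{t-1})$ holds, and at pairs the baseline cannot reach the ratio really is of order $1/(1-\lambda_{t-1})$. These sums carry no $1-\lambda_{t-1}$ weights, so the execution-weight trick cannot absorb the factor.

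The missing ingredient is $1-\lambda_{t-1}\ge d/L$. It follows from $w_{t,i}\le L$ and monotonicity of $s\mapsto(s-\alpha_i)/(s-\beta_i)$, so $\lambda_{t-1}\le\max_i(L-\alpha_i)/(L-\beta_i)$. This gives $\widehat q_t\le(L/d)u_t$, and then:
\begin{itemize}
\item the variance term is at most $(\eta L/d)\sum_{t,x,a}\widehat\ell_t(x,a)$;
\item the bias is at most $(L/d)\bigl(\sum_t\norm{u_t-q_t}_1+\gamma|X||A|T\bigr)$;
\item the martingale increments are bounded by $L^2/d$.
\end{itemize}
With the stated $\eta=\gamma$ these are $\tO(L^2|X|\sqrt{|A|T}/d+L|X|^3|A|/d)$. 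The theorem survives only because the target already carries $1/d$ in every term; your expected margin-free bound is wrong.

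\textbf{Your resolution of the ``main obstacle'' also fails.} Maximality gives only $(1-\lambda_{t-1})\widehat u_t\le u_t$. Hence $(1-\lambda_{t-1})(\widehat u_t-q^{P,\widehat\pi_t})\le u_t-q_t+\lambda_{t-1}q^{P,\pi^\diamond}$, whose $\ell_1$ norm carries an extra $\lambda_{t-1}L$. Inserted into $\lambda_{t-1}\le(1-\lambda_{t-1})e_t/d$, this makes the inequality vacuous, since $L/d\ge1$.

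What you need is the paper's weighted-occupancy lemma. For a fixed Markov policy $\pi$, both $\norm{u^\pi-q^{P,\pi}}_1$ and $\norm{q^{P',\pi}-q^{P,\pi}}_1$ (for $P'\in\cP_{t-1}$) are bounded by confidence widths weighted by $q^{P,\pi}$ itself, plus squared widths. Because this bound is linear in $q^{P,\pi}$, weighting by $\lambda_{t-1}$ and $1-\lambda_{t-1}$ and summing over the two policies gives widths weighted by the played $q_t$. Martingale concentration and the visit-count argument then finish.

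The same linearity, applied through $u_t\le\lambda_{t-1}u_t^{\pi^\diamond}+(1-\lambda_{t-1})\widehat u_t$, is what bounds $\sum_t\norm{u_t-q_t}_1$. The lemmas of \citet{jin2020} do not cover this directly, because the executed mixture $\pi_t$ is not a Markov policy.
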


Theorem~\ref{thm:sops} guarantees sublinear regret with linear dependence on the inverse baseline margin $1/d$, while satisfying every constraint in any episode. The original S-OPS analysis assumes that the baseline policy’s expected costs are known exactly and gives regret $\tO(L^3|X|\sqrt{|A|T}\allowbreak/\min\{d,d^2\})$ \citep{stradi2025}. Our analysis improves the dependence on $1/d$ from quadratic to linear when $d<1$, and reduces the dependence on $L$ in the leading term. %
In stochastic CMDPs, the DOPE+ regret bound has a leading term linear in $1/d$ and an additive term quadratic in this inverse margin \citep{yu2025improved}. Theorem~\ref{thm:sops} shows that both regret terms depend only linearly on $1/d$ for adversarial losses.

\begin{remark}[Margin normalization]\label{rem:margin-normalization}
To compare our bounds with MAB ones where the costs are in $[0,1]$, we divide cumulative episode costs and constraint thresholds by $L$. The resulting margins are $d/L,\rho/L\in(0,1]$. Notice that the inverse normalized Slater margin satisfies $L/\rho\geq1$, even when $\rho>1$, and, thus, $(L/\rho)^2\geq L/\rho$. Therefore, comparing linear and quadratic dependence on the inverse Slater margin requires accounting for the dependence on the horizon $L$ as well.
\end{remark}

\subsection{Proof Sketch}\label{sec:sops-proof}
In the regret decomposition of \citet{stradi2025}, the regret incurred by using the baseline is at most $L\sum_{t=1}^T\lambda_{t-1}$. Their bound is quadratic in $1/d$ when $d<1$. We improve this bound by incorporating the execution probability of $\widehat\pi_t$ into the analysis.

For $t\in\{2,\ldots,T\}$, let $e_t$ be the largest estimated constraint violation, i.e., $e_t=\max_{i\in[m]}\{w_{t,i}-\alpha_i,0\}$. If $\lambda_{t-1}>0$, let $i_t$ attain the maximum in Equation~\eqref{eq:main-sops-mixing}. We rearrange the mixing rule to obtain:
\begin{equation}
\lambda_{t-1}(\alpha_{i_t}-\beta_{i_t})
=(1-\lambda_{t-1})(w_{t,i_t}-\alpha_{i_t}).
\label{eq:main-mixing-identity}
\end{equation}
Since $\alpha_{i_t}-\beta_{i_t}\geq d$, we obtain $\lambda_{t-1}\leq(1-\lambda_{t-1})e_t/d$. It therefore suffices to bound $\sum_{t=2}^{T}(1-\lambda_{t-1})e_t$.

Let $\cE_T$ be the event on which all cost and transition confidence bounds hold. On this event, the optimistic constraints bound $e_t$ by cost and transition estimation errors. For the cost error, we have $(1-\lambda_{t-1})\xi_{t-1}^\top q^{P,\widehat\pi_t}\leq\xi_{t-1}^\top q_t$. The right-hand side is the conditional expected sum of confidence widths along the observed trajectory.
For the transition error, we compare $\widehat q_t$ and $\widehat u_t$ with $q^{P,\widehat\pi_t}$ and bound their deviations by occupancy-weighted sums of confidence widths and their squares. These sums control the leading and lower-order terms, respectively. Weighting the bounds for $\widehat\pi_t$ and $\pi^\diamond$ by their execution probabilities also controls $\norm{u_t-q_t}_1$. Combining these estimates with the cost-error bound gives the following lemma.

\begin{lemma}\label{lem:main-weighted-errors}
Under the conditions of Theorem~\ref{thm:sops}, with probability at least $1-\delta/2$, the event $\cE_T$ holds and $\sum_{t=2}^T(1-\lambda_{t-1})e_t+\sum_{t=1}^T\norm{u_t-q_t}_1\leq\tO\!\left(L|X|\sqrt{|A|T}+|X|^3|A|\right)$.
\end{lemma}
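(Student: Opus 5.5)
We work on the intersection of two events. The first is the confidence event $\cE_T$: for all $t$ and $(x,a)$, $|\widehat g_{t-1,i}(x,a)-g_i(x,a)|\le\xi_{t-1}(x,a)$ and $P\in\cP_{t-1}$, with the usual Bernstein-type transition widths $\epsilon_{t-1}(x'\mid x,a)$. The second is a martingale concentration event for trajectory sums. Each is allocated probability $\delta/4$.

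\textbf{Step 1: reduce $e_t$ to occupancy-weighted widths.} Fix $t$ with $e_t>0$ and let $i$ be the maximizing constraint. Since $\widehat q_t$ satisfies the optimistic constraint $(\widehat g_{t-1,i}-\xi_{t-1})^\top\widehat q_t\le\alpha_i$, we have
\[
e_t\le(\widehat g_{t-1,i}+\xi_{t-1})^\top\widehat u_t-(\widehat g_{t-1,i}-\xi_{t-1})^\top\widehat q_t
\le 2\xi_{t-1}^\top q^{P,\widehat\pi_t}+2\norm{\widehat u_t-q^{P,\widehat\pi_t}}_1+\norm{\widehat q_t-q^{P,\widehat\pi_t}}_1 .
\]
This uses $\widehat g+\xi\le 1+\xi$ and a clipping at $1$ that we would build into $\xi$. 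Both $\widehat u_t$ and $\widehat q_t$ are occupancies of $\widehat\pi_t$ under kernels in $\cP_{t-1}$; note that $\widehat q_t\in\Delta(\cP_{t-1})$ induces $\widehat\pi_t$. So we apply the standard occupancy-difference lemma of \citet{jin2020}. For any $P'\in\cP_{t-1}$ and policy $\pi$, $\norm{q^{P',\pi}-q^{P,\pi}}_1$ is at most a first-order sum $\sum_{x,a}q^{P,\pi}(x,a)\,\tO(L\sqrt{|X|/N_{t-1}(x,a)})$ plus a second-order sum involving products of widths along two layers. The latter is the term yielding $|X|^3|A|$.

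\textbf{Step 2: transfer weights from $\widehat\pi_t$ to $\pi_t$.} Each bound from Step~1 is linear in $q^{P,\widehat\pi_t}$ with nonnegative coefficients. Since $q_t=\lambda_{t-1}q^{P,\pi^\diamond}+(1-\lambda_{t-1})q^{P,\widehat\pi_t}\ge(1-\lambda_{t-1})q^{P,\widehat\pi_t}$ entrywise, multiplying by $(1-\lambda_{t-1})$ replaces $q^{P,\widehat\pi_t}$ by $q_t$. This holds for the first-order terms. For the second-order term we use that it is built from products of the form $q^{P,\pi}(x,a)\,\epsilon\cdot q^{P,\pi}(\cdot\mid\cdot)$, which are conditional on the first factor, so it is also dominated under the mixture.

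The term $\norm{u_t-q_t}_1$ is handled the same way. By maximizing each entry separately, $u_t\le\lambda_{t-1}\widehat u^\diamond_t+(1-\lambda_{t-1})\widehat u_t$ entrywise, where $\widehat u_t^\diamond(x,a)=\max_{P'\in\cP_{t-1}}q^{P',\pi^\diamond}(x,a)$. Hence $\norm{u_t-q_t}_1\le\lambda_{t-1}\norm{\widehat u^\diamond_t-q^{P,\pi^\diamond}}_1+(1-\lambda_{t-1})\norm{\widehat u_t-q^{P,\widehat\pi_t}}_1$. The occupancy-difference lemma, applied entrywise with the maximizer kernel per entry as in \citet{jin2020}, gives again a width sum weighted by $q^{P,\pi^\diamond}$ or $q^{P,\widehat\pi_t}$. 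Both are at most $q_t$ after weighting.

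\textbf{Step 3: sum along trajectories.} Everything is now bounded by $\sum_t q_t^\top b_{t-1}$, with $b_{t-1}(x,a)=\tO(L\sqrt{|X|/\max\{1,N_{t-1}(x,a)\}})$, plus second-order analogues. We pass to realized visits via a Freedman/Azuma bound, since $q_t^\top b_{t-1}$ is the conditional expectation of $\sum_k b_{t-1}(x_k,a_k)$ and the summands are bounded by $\tO(L^2)$, costing $\tO(L^2\sqrt T)$. We then use pigeonhole: $\sum_t\sum_k 1/\sqrt{N_{t-1}(x_k,a_k)}\le\tO(\sqrt{|X||A|LT})$. With the layerwise sharpening $\sum_k\sqrt{|X_{k+1}|}$ this yields $\tO(L|X|\sqrt{|A|T})$. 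The cost widths $\xi$, of order $1/\sqrt N$, contribute at most the same. The second-order terms sum to $\tO(|X|^3|A|)$, up to log and $L$ factors absorbed as in \citet{jin2020}.

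\textbf{Main obstacle.} The delicate step is Step~2 for the second-order transition terms and for $u_t$. These are not obviously linear in the occupancy of the executed policy, because the per-entry maximizing kernels differ and the second-order product involves the policy twice. We would first try to write the \citet{jin2020} bound in its form $\sum_{x,a}\sum_{x''}q^{P,\pi}(x,a)\,\epsilon(\cdot\mid x,a)\,q^{P,\pi}(x''\mid x')\,\epsilon(\cdot\mid x'',\cdot)$. We would then bound the inner conditional occupancy by $1$ before weighting, leaving a single factor $q^{P,\pi}(x,a)$ that is linear and dominated by $q_t/(1-\lambda_{t-1})$. This may lose a factor of $|X|$, which the $|X|^3|A|$ term can accommodate.
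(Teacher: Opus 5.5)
Your overall architecture matches the paper's. You bound $e_t$ by $\norm{\widehat u_t-q^{P,\widehat\pi_t}}_1+\norm{\widehat q_t-q^{P,\widehat\pi_t}}_1$ plus cost widths. You multiply by $1-\lambda_{t-1}$ and use $(1-\lambda_{t-1})q^{P,\widehat\pi_t}\le q_t$. You split $u_t\le\lambda_{t-1}u_t^{\pi^\diamond}+(1-\lambda_{t-1})\widehat u_t$ entrywise. The gap is at the step you flag as the main obstacle, and your proposed fix does not work.

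In the second-order term $\sum q^{P,\pi}(x,a)\,\epsilon(x,a,x')\,q^{P,\pi}(z,b\mid x')\,\epsilon(z,b,z')$, the conditional occupancy $q^{P,\pi}(z,b\mid x')$ is the \emph{only} weight on the second width. Bounding it by $1$ leaves $\sum_{z,b,z'}\epsilon(z,b,z')$ over all later-layer pairs, with no occupancy weight. Pairs that the selected policies rarely or never reach keep widths of order one, so this factor does not decay with $t$. The term then becomes a non-decaying factor times $\sum_t q_t^\top b_{t-1}$. That is a $\sqrt T$ contribution with extra polynomial factors in $|X|,|A|$, not a $T$-independent $|X|^3|A|$ term.

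The other reading of your Step~2 also fails: replacing only the outer factor $(1-\lambda_{t-1})q^{P,\widehat\pi_t}(x,a)$ by $q_t(x,a)$. The sum $\sum_{x,a,x'}q_t(x,a)P(x'\mid x,a)q^{P,\widehat\pi_t}(z,b\mid x')$ is the occupancy of a policy that runs $\pi^\diamond$ and then switches to $\widehat\pi_t$. That occupancy is not dominated by $q_t(z,b)$.

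The missing idea is to make the second-order term linear in $q^{P,\pi}$ \emph{before} weighting, using the $\sqrt{P}$ structure of the Bernstein radius.
\begin{itemize}
\item On $\cE_T$ the paper shows $\sqrt{\widehat P_{t-1}(x'\mid x,a)}\le\sqrt{P(x'\mid x,a)}+3\xi_{t-1}(x,a)$. Hence $|P'(x'\mid x,a)-P(x'\mid x,a)|\le16(\sqrt{P(x'\mid x,a)}\,\xi_{t-1}(x,a)+\xi_{t-1}(x,a)^2)$ for every $P'\in\cP_{t-1}$.
\item It then applies Cauchy--Schwarz with weights $q(x,a)q^{P,\pi}(z,b\mid x')$. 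It uses the identities $\sum_{z,b}q^{P,\pi}(z,b\mid x')=1$ and $\sum_{x,a,x'}q(x,a)P(x'\mid x,a)q^{P,\pi}(z,b\mid x')=q(z,b)$.
\item This bounds all cross-layer products by $\mathcal O(|X|^2\sum_{x,a}q^{P,\pi}(x,a)\xi(x,a)^2)$. The AM--GM split in \citet{jin2020} plays the same role.
\end{itemize}
This bound is linear in $q^{P,\pi}$, so it survives multiplication by the execution probability.

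You must also convert $\sum_t\sum_{x,a}q_t(x,a)\xi_{t-1}(x,a)^2$ to realized visits with a multiplicative (exponential supermartingale) bound, not Azuma. Azuma would add $\mathcal O(L\sqrt T)$, hence $L|X|^2\sqrt T$ after the $|X|^2$ factor. The multiplicative bound, combined with the harmonic sum over visit counts, gives $\tO(|X||A|)$ for this sum and hence the $|X|^3|A|$ term.
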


Lemma~\ref{lem:main-weighted-errors} controls both the weighted constraint-violation bounds and the cumulative upper-occupancy error. Equation~\eqref{eq:main-mixing-identity} then bounds regret from baseline use by $L+(L/d)\sum_{t=2}^T(1-\lambda_{t-1})e_t$, where $L$ accounts for the initial episode. Retaining the execution weights gives linear dependence on $1/d$. The earlier analysis bounds unweighted errors using $1-\lambda_{t-1}\geq d/L$, introducing an additional factor $L/d$ when $d<1$.

We next control the mirror descent and loss-estimation terms. The estimated occupancy measure $\widehat q_t$ satisfies $(1-\lambda_{t-1})\widehat q_t=q^{P',\pi_t}-\lambda_{t-1}q^{P',\pi^\diamond}\leq u_t$ for some $P'\in\cP_{t-1}$. Together with $1-\lambda_{t-1}\geq d/L$, this implies $\widehat q_t/(u_t+\gamma)\leq L/d$. For the quadratic term in the mirror descent bound, we use $\widehat q_t(x,a)\widehat\ell_t(x,a)^2\leq(L/d)\widehat\ell_t(x,a)$. Implicit exploration bounds $\sum_{t,x,a}\widehat\ell_t(x,a)$ without further dependence on $d$ \citep{jin2020}. The ratio bound also controls the cumulative conditional bias by $(L/d)(\sum_t\norm{u_t-q_t}_1+\gamma|X||A|T)$, where Lemma~\ref{lem:main-weighted-errors} bounds the occupancy-error sum. Combining these bounds with standard concentration bounds for the remaining loss-estimation terms yields Theorem~\ref{thm:sops}. The full proofs are deferred to Appendix~\ref{app:sops}.
\section{MA-OPS: How to Safely Learn the Slater Margin}\label{sec:algorithms}
In this section, we introduce \emph{Margin-Adaptive Optimistic Policy Search} (MA-OPS), a meta-algorithm that safely learns a baseline with margin comparable to the unknown Slater margin $\rho$. %
By Theorem~\ref{thm:sops}, such a baseline makes subsequent regret scale with $1/\rho$. The main challenge is that $\rho$ is unknown, while policies considered during the search may violate the constraints. MA-OPS addresses this challenge by combining an optimistic margin search with pessimistic policy evaluation. Optimistic search selects a policy and estimates the largest attainable margin $\rho$. Pessimistic evaluation accounts for cost and transition uncertainty to obtain a lower bound on the selected policy's margin. The search ends once either the initial margin $d$ or the selected policy's pessimistic margin estimate is at least half the optimistic estimate of $\rho$. 

We present MA-OPS in Algorithm~\ref{alg:identification}.

\begin{algorithm}[t]
\caption{MA-OPS}\label{alg:identification}
\small
\algrenewcommand{\algorithmicindent}{1.05em}
\algrenewcommand{\algorithmiccomment}[1]{\hfill\textcolor{blue}{$\triangleright$~#1}}
\begin{algorithmic}[1]
\setlength{\itemsep}{1pt}
\Require $X,A,L,\alpha,T,\delta$ and $(\pi^\diamond,\beta)$ satisfying Assumption~\ref{ass:baseline}
\State $d\gets\min_{i\in[m]}(\alpha_i-\beta_i)$, $\delta_0\gets\delta/2$, $t\gets0$, $n\gets1$. Initialize all visit counts to zero
\State $(\widehat\pi^\diamond,\widehat\beta)\gets(\pi^\diamond,\beta)$
\State Set $\widehat\pi_1$ to the uniform policy and $w_{1,i}\gets L$ for $i\in[m]$
\State $\lambda_1\gets\max_{i\in[m]} (L-\alpha_i)/(L-\beta_i)$
\While{$t<T$ and $2d<L$}
  \State $t\gets t+1$ 
  \State Draw $\zeta_t\sim\operatorname{Bernoulli}(1-\lambda_n)$
  \State Execute $\begin{cases}
      \pi^\diamond, & \zeta_t=0,\\
      \widehat\pi_n, & \zeta_t=1
    \end{cases}$\label{line:id-execute}
  \State Observe the trajectory and feedback
  \State \textbf{if} $\zeta_t=0$ \textbf{then continue}\label{line:id-skip}
  \State $n\gets n+1$
  \State Update $\widehat G_{n-1},\Xi_{n-1},\cP_{n-1}$\label{line:id-update}
  \State Choose $\widehat q_n$ by \eqref{eq:optimistic-margin-search}\label{line:id-search}\Comment{Optimistic estimate of $\rho$}
  \State $\overline\rho_n\gets\min\{L,\min_{i\in[m]}[\alpha_i-(\widehat g_{n-1,i}-\xi_{n-1})^\top\widehat q_n]\}$\label{line:id-margin-value}
  \State \textbf{if} $2d\geq\overline\rho_n$ \textbf{then break}\label{line:id-original-stop}
  \State $\widehat\pi_n\gets\pi^{\widehat q_n}$
  \State Compute $w_n$ by \eqref{eq:id-cost} and set $\underline\rho_n\gets\min_{i\in[m]}(\alpha_i-w_{n,i})$\label{line:id-cost}\Comment{Pessimistic policy evaluation}
  \If{$2\underline\rho_n\geq\overline\rho_n$}\label{line:id-stop}\Comment{Adaptive stopping rule}
    \State $(\widehat\pi^\diamond,\widehat\beta)\gets(\widehat\pi_n,w_n)$
    \State \textbf{break}
  \EndIf
  \State $\lambda_n\gets
\max\limits_{\mathclap{i\in[m]:\,w_{n,i}>\alpha_i}}
(w_{n,i}-\alpha_i)/(w_{n,i}-\beta_i)$
\label{line:id-mix}\Comment{Safe exploration}
\EndWhile
\If{$t<T$}\label{line:sops-restart}
  \State Apply S-OPS with $(\widehat\pi^\diamond,\widehat\beta)$, horizon $T-t$, confidence $\delta_0$, and parameters from Theorem~\ref{thm:sops}\label{line:sops-call}
\EndIf
\end{algorithmic}
\end{algorithm}

\paragraph{Optimistic margin search}
We first notice that an underestimate of $\rho$ can cause the search to end with a baseline whose margin is smaller than $\rho/2$. MA-OPS therefore estimates $\rho$ optimistically while searching for a large-margin policy. Let $t$ denote the number of completed episodes and $n-1$ the number of exploration trajectories collected. For each constraint $i\in[m]$, MA-OPS computes the empirical cost vector $\widehat g_{n-1,i}\in[0,1]^{|X\times A|}$ from these trajectories. The estimates are combined with cost confidence widths $\xi_{n-1}\in[0,1]^{|X\times A|}$ to form the optimistic cost estimates $\widehat g_{n-1,i}-\xi_{n-1}$. MA-OPS then computes $\widehat q_n$ by solving the optimistic max--min problem:
\begin{equation}
\begin{multlined}
\hspace{-0.1cm}\argmax_{q\in\Delta(\cP_{n-1})}
\min\Bigl\{L,\min_{i\in[m]}\bigl[\alpha_i-(\widehat g_{n-1,i}-\xi_{n-1})^\top q\bigr]\Bigr\},
\end{multlined}
\label{eq:optimistic-margin-search}
\end{equation}
where $\cP_{n-1}$ is the transition confidence set constructed from the same trajectories. Let $\overline\rho_n$ be the optimal value and $\widehat\pi_n=\pi^{\widehat q_n}$ the induced policy. With high probability, $P\in\cP_{n-1}$ and $|\widehat g_{n-1,i}-g_i|\leq\xi_{n-1}$ hold at every episode for each constraint $i\in[m]$. On this event, an occupancy measure attaining $\rho$ belongs to $\Delta(\cP_{n-1})$, and replacing costs by their lower bounds cannot decrease its constraint slack. Since $\rho\leq L$, taking the minimum with $L$ preserves $\overline\rho_n\geq\rho$. The equivalent linear program is given in Appendix~\ref{app:occupancy}.

\paragraph{Policy evaluation and stopping}
Optimistic search uses cost lower bounds and a transition kernel that may differ from $P$, so it can overestimate the margin of $\widehat\pi_n$. MA-OPS therefore maximizes each expected cost over $\cP_{n-1}$ using the pessimistic cost estimates $\widehat g_{n-1,i}+\xi_{n-1}$. This gives $w_n\in[0,L]^m$, where:
\begin{equation}
w_{n,i}=\min\!\left\{L,\max_{P'\in\cP_{n-1}}
(\widehat g_{n-1,i}+\xi_{n-1})^\top q^{P',\widehat\pi_n}\right\}
\label{eq:id-cost}
\end{equation}
for each constraint $i\in[m]$. Each maximization evaluates the entire expected cost under a single transition kernel. The gap between the margin estimates is then controlled by cost and transition estimation error.

A strictly feasible policy can still have a margin much smaller than $\rho$. MA-OPS thus compares both the initial margin $d$ and the lower bound $\underline\rho_n:=\min_{i\in[m]}(\alpha_i-w_{n,i})$ with $\overline\rho_n/2$. We denote the occupancy measure of $\widehat\pi_n$ under $P$ by $q_n:=q^{P,\widehat\pi_n}\in\Delta(P)$. On the confidence event, it holds that $G^\top q_n\leq w_n$, and hence:
\begin{equation}
\underline\rho_n\leq\min_{i\in[m]}(\alpha_i-g_i^\top q_n)
\leq\rho\leq\overline\rho_n.
\label{eq:main-margin-sandwich}
\end{equation}
The search ends with the initial baseline when $2d\ge\overline\rho_n$. Otherwise, it ends with $\widehat\pi_n$ when $2\underline\rho_n\ge\overline\rho_n$. In either case, the selected baseline has a margin of at least $\rho/2$. Testing the initial baseline first avoids unnecessary policy evaluation and ensures that any replacement improves the known margin, since $d<\overline\rho_n/2\leq\underline\rho_n$. Since $\rho \leq L$, the search can be omitted when $2d\geq L$.

\paragraph{Safe exploration}
The cost upper bounds $w_n$ also determine how often $\widehat\pi_n$ can be executed while satisfying the constraints. We compute the baseline selection probability $\lambda_n\in[0,1)$ using the S-OPS mixing rule~\eqref{eq:main-sops-mixing} with $w_n$ (Line~\ref{line:id-mix}). We set $\lambda_n=0$ when $w_{n,i}\leq\alpha_i$ for every constraint $i\in[m]$. In each episode, we select $\pi^\diamond$ with probability $\lambda_n$ and $\widehat\pi_n$ with probability $1-\lambda_n$. We then execute the selected policy for all $L$ steps and collect the resulting trajectory and cost observations. With high probability, the randomized policy satisfies $G^\top q_t\leq\lambda_n\beta+(1-\lambda_n)w_n\leq\alpha$ in every episode of the search, even when $\widehat\pi_n$ is infeasible.

After executing $\widehat\pi_n$ ($\zeta_t=1$), MA-OPS updates the estimates (Line~\ref{line:id-update}) and repeats the optimistic search. During episodes in which $\pi^\diamond$ is executed ($\zeta_t=0$), the estimates, $n$, $\widehat\pi_n$, and $\lambda_n$ remain unchanged. Consequently, the next exploration trajectory is generated by $\widehat\pi_n$ under $P$, regardless of how many baseline episodes precede it. The empirical cost and confidence-width matrices $\widehat G_{n-1},\Xi_{n-1}\in[0,1]^{|X\times A|\times m}$ have $i$-th columns $\widehat g_{n-1,i}$ and $\xi_{n-1}$, respectively. The detailed definitions are given in Appendix~\ref{app:estimation}. Initially, $\widehat\pi_1$ selects actions uniformly, $w_{1,i}=L$ for every $i\in[m]$, and $\overline\rho_1=L$.

For the remaining $T-t$ episodes, we execute S-OPS with the selected baseline $\widehat\pi^\diamond$ and its cost upper bounds $\widehat\beta\in[0,L]^m$. Both phases use confidence parameter $\delta_0=\delta/2$. The learning rate $\eta$ and implicit-exploration parameter $\gamma$ are chosen as in Theorem~\ref{thm:sops} for horizon $T-t$ and confidence parameter $\delta_0$.

\section{Theoretical Guarantees}\label{sec:guarantees}
In this section, we establish the regret and constraint violation guarantees for MA-OPS. The full proofs are given in
Appendices~\ref{app:identification}--\ref{app:combined}.

\subsection{Main Result}\label{sec:main-result}
We first provide the theoretical guarantees of MA-OPS.
\begin{theorem}[Regret and constraint violation]\label{thm:combined}
Suppose Assumption~\ref{ass:baseline} holds. Let $\delta\in(0,1)$. With probability at least $1-\delta$, Algorithm~\ref{alg:identification} guarantees $G^\top q_t\leq\alpha$ for every $t\in[T]$ and regret:
\[
R_T\leq\tO\!\left(
\frac{L^2|X|\sqrt{|A|T}}{\rho}
+\frac{L^2|X|^3|A|}{d\rho}
\right).
\]
\end{theorem}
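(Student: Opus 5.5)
The proof splits the horizon at the stopping episode $\tau$ of the while loop. Let $t$ denote its value when the loop exits. The regret then decomposes as
\[
R_T=\sum_{s\le \tau}\ell_s^\top(q_s-q^\star)+\sum_{s>\tau}\ell_s^\top(q_s-q^\star).
\]
We work on the intersection of two events, each of probability at least $1-\delta_0$: the confidence event of the search phase (costs and transitions estimated from the exploration trajectories), and the event of Theorem~\ref{thm:sops} for the S-OPS call. A union bound gives probability $1-\delta$.

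\textbf{Safety.} During the search, on the confidence event we have $G^\top q^{P,\widehat\pi_n}\le w_n$. The mixing rule then gives $G^\top q_t\le \lambda_n\beta+(1-\lambda_n)w_n\le\alpha$, exactly as in S-OPS. The first exploration policy is covered by $w_1=L$. After the loop, safety follows from Theorem~\ref{thm:sops}, applied with baseline $(\widehat\pi^\diamond,\widehat\beta)$. This baseline is valid: either it is the original pair, or $G^\top q^{P,\widehat\pi_n}\le w_n=\widehat\beta<\alpha$, where the strict inequality holds because $\underline\rho_n\ge\overline\rho_n/2>d>0$.

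\textbf{Second phase.} By \eqref{eq:main-margin-sandwich} and the two stopping rules, the margin of $\widehat\beta$ is $\widehat d:=\min_i(\alpha_i-\widehat\beta_i)\ge \overline\rho_n/2\ge\rho/2$. If the loop never ran because $2d\ge L\ge\rho$, then $d\ge\rho/2$ directly. The remaining case is that the loop ran until $t=T$ without stopping, in which case there is no second phase. Theorem~\ref{thm:sops} with horizon $T-\tau$ then bounds the regret after $\tau$ by $\tO(L^2|X|\sqrt{|A|T}/\rho+L|X|^3|A|/\rho)$. Its comparator is the best feasible occupancy for the remaining losses. Since the regret is linear in the losses and the feasible set is fixed, $\sum_{s>\tau}\ell_s^\top q^\star$ is at least that minimum, so splitting the sum is legitimate.

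\textbf{First phase.} Its regret is at most $L\tau$, so the main obstacle is showing $\tau\le\tO(L|X|^3|A|/(d\rho))$ plus possibly a $\sqrt{T}/\rho$-compatible term. This is the content of the forthcoming Theorem~\ref{thm:identification}, but I would argue it as follows, in two parts.
\begin{itemize}
\item[(a)] \emph{Number of exploration rounds.} Let $N$ be the number of exploration rounds before stopping. For each $n<N$, the test $2\underline\rho_n<\overline\rho_n$ together with $\overline\rho_n\ge\rho$ gives $\overline\rho_n-\underline\rho_n>\overline\rho_n/2\ge\rho/2$. The gap $\overline\rho_n-\underline\rho_n$ is at most the estimation error $2\xi_{n-1}^\top\widehat u_n$ plus transition deviations, where $\widehat u_n$ is the upper occupancy of $\widehat\pi_n$. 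Summing these errors over the exploration trajectories gives $\tO(L|X|\sqrt{|A|N}+|X|^3|A|)$, using the same pigeonhole lemmas as Lemma~\ref{lem:main-weighted-errors}, applied to the trajectories of $\widehat\pi_n$, which are generated under $P$. Combining with $N\rho/2$ yields $N\le\tO(L^2|X|^2|A|/\rho^2+|X|^3|A|/\rho)$.
\item[(b)] \emph{Episodes per exploration round.} Between explorations the number of baseline episodes is geometric with mean $\lambda_n/(1-\lambda_n)$, and by \eqref{eq:main-mixing-identity} this mean is at most $e_n/d$. Thus $\tau\lesssim N+\sum_n e_n/d$ up to concentration, via Freedman or a sum of geometrics. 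Here $e_n=\max_i(w_{n,i}-\alpha_i)^+$.
\end{itemize}
The key point is bounding $\sum_n e_n$ \emph{without} paying $\sqrt N$ times $1/d$. Since the optimistic search gives $\alpha_i-(\widehat g-\xi)^\top\widehat q_n\ge\overline\rho_n\ge\rho$, we get $e_n\le(\text{error}_n-\rho)^+$, so only rounds with error exceeding $\rho$ contribute. I would bound $\sum_n(\text{error}_n-\rho)^+\le\sum_n\text{error}_n^2/\rho$, and the sum of squared widths is $\tO(L|X|^3|A|)$-type, logarithmic in $N$. This gives $\tau\le\tO(N+L^2|X|^3|A|/(d\rho))$. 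Because $N\lesssim L^2|X|^2|A|/\rho^2\le L^2|X|^2|A|/(d\rho)$ as $d\le\rho$, we obtain $L\tau$-type regret within the claimed $L^2|X|^3|A|/(d\rho)$, modulo tracking the exact power of $L$. I expect the bound $e_n\le\text{error}_n^2/\rho$, and in particular making the squared-width sum free of $\sqrt N$, to be the delicate step. If it fails, I would fall back on $\tau\le\tO(\sqrt{T}\cdot)$-compatible bounds absorbed into the leading term.
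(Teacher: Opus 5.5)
Your architecture matches the paper's: safety from the mixing rule, a returned pair with known margin at least $\rho/2$, a second phase handled by Theorem~\ref{thm:sops}, first-phase regret at most $L\tau$, $N$ controlled by the gap $\Delta_n=\overline\rho_n-\underline\rho_n>\rho/2$, and geometric waiting times with mean $\lambda_n/(1-\lambda_n)\le e_n/d$. The gap is in the step you call the key one, which rests on a misdiagnosis. Paying $\sqrt N/d$ is harmless, because your part (a) already bounds $\sqrt N$. Since $e_n=[-\underline\rho_n]_+=[\Delta_n-\overline\rho_n]_+\le\Delta_n$, write the cumulative gap bound as $\sum_{j\le n}\Delta_j\le C\sqrt n$ with $C=\tO(L|X|\sqrt{|A|})$. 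This is Lemma~\ref{lem:main-counts}; derived from the crude estimate $\Delta_n\le 4Lb_{n-1}^\top q_n+2\xi_{n-1}^\top q_n$, it has no lower-order term. Then $N\rho/2<C\sqrt N$ gives $\sqrt N<2C/\rho$, with the logarithmic dependence of $C$ on $n$ handled as in Lemma~\ref{lem:candidate-stopping}. Hence $\sum_{n\le N}e_n\le\sum_{n\le N}\Delta_n\le C\sqrt N<2C^2/\rho=\tO(L^2|X|^2|A|/\rho)$. With a supermartingale bound on the waiting times, $\tau\le N+(2/d)\sum_{n\le N}\Delta_n+(4L/d)\log(4/\delta)=\tO(L^2|X|^2|A|/(d\rho))$. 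So $L\tau=\tO(L^3|X|^2|A|/(d\rho))$, which lies within the claim since $L<|X|$. This bootstrapping is exactly the paper's proof of Theorem~\ref{thm:identification}. If you keep your extra $\tO(|X|^3|A|)$ lower-order term, the same computation goes through after a one-line case split, using $\rho\le L$.

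The route you propose instead is not established, and carried out directly it loses factors. $\mathrm{error}_n^2$ is the square of an occupancy-weighted sum of widths, not a sum of squared widths. Cauchy--Schwarz only gives $(b_{n-1}^\top q_n)^2\le L\sum_{x,a}q_n(x,a)b_{n-1}(x,a)^2$. With the crude error this yields $\sum_n\mathrm{error}_n^2=\tO(L^3|X|^2|A|)$ and $L\tau=\tO(L^4|X|^2|A|/(d\rho))$. That is not dominated by $L^2|X|^3|A|/(d\rho)$ unless $|X|\gtrsim L^2$. With the refined error, squaring the second-order term is worse still. The $\tO(|X|^3|A|)$ quantity you have in mind is the occupancy-weighted sum of squared widths in Lemma~\ref{lem:main-weighted-errors}, which is a different object. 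Your fallback fails outright: any bound on $\tau$ growing like $\sqrt T$ puts a $\sqrt T/d$-type term into $R_T$, which the theorem excludes.

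A smaller point concerns the union bound. Theorem~\ref{thm:sops} is stated for fixed inputs, but S-OPS starts at the random time $\tau$ with data-dependent $(\widehat\pi^\diamond,\widehat\beta,T-\tau)$. You must therefore apply it conditionally on $\cF_{\tau\wedge T}$, on the event that the returned pair satisfies Assumption~\ref{ass:baseline}, and then integrate, as the paper does. Your search-phase event must also include the martingale events behind the bounds on $N$ and on the waiting times.
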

Theorem~\ref{thm:combined} guarantees sublinear regret with a leading term proportional to $1/\rho$ and zero constraint violation, without requiring prior knowledge of $\rho$. The bound separates learning with a large-margin baseline from the cost of searching for one safely. The leading term follows from Theorem~\ref{thm:sops} with a known margin of at least $\rho/2$. The additive term captures the cost of searching safely from the initial margin $d$ and is independent of $T$, including its logarithmic factors.

\begin{remark}%
Compared with S-OPS in Theorem~\ref{thm:sops}, MA-OPS reduces the leading regret term from $\tO(L^2|X|\sqrt{|A|T}/d)$ to $\tO(L^2|X|\sqrt{|A|T}/\rho)$.
For stochastic CMDPs, OptPess-LP~\citep{liu2021zero} and DOPE+~\citep{yu2025improved} guarantee zero constraint violation given a known strictly feasible baseline.
With one constraint and $S$ states before time expansion, DOPE+ attains $\tO(L^{5/2}S\sqrt{|A|T}/d+L^5S^3|A|/d^2)$ regret.
Under $|X|=\Theta(LS)$, our bound is $\tO(L^3S\sqrt{|A|T}/\rho+L^5S^3|A|/(d\rho))$ for adversarial losses, with a leading term independent of $m$ up to logarithmic factors.
In constrained bandits, SOLB~\citep{genalti2025datadependent} gives loss-dependent guarantees and worst-case regret $\tO(|A|\sqrt T/\rho)$ for $T\geq12|A|/\rho$, assuming a known maximum-margin policy and its expected costs in the main analysis.
CV-OPS~\citep{stradi2025} starts without a known feasible policy and learns a baseline with margin at least $\rho/2$, allowing constraint violation during the search.
Starting from a baseline with margin $d$, MA-OPS identifies such a baseline under unknown transitions within $\tO(L^2|X|^2|A|/(d\rho))$ episodes with zero constraint violation throughout (Theorem~\ref{thm:identification}).
\end{remark}

\subsection{Proof Analysis}\label{sec:identification-guarantee}
In this section, we provide the main intuition on the proofs which lead to Theorem~\ref{thm:combined}.

We first bound the number of episodes required for safe baseline
search. The key step is to bound the cumulative gap between the
optimistic and pessimistic margins. Together with the stopping conditions in Algorithm~\ref{alg:identification}, this bound controls the number of exploration trajectories.
The mixing rule then bounds the number of baseline executions
in terms of the same gap. Finally, we combine the resulting
search regret bound with Theorem~\ref{thm:sops} to prove
Theorem~\ref{thm:combined}.

Let $\tau$ denote the number of episodes until baseline search terminates and $N$ the number of exploration trajectories collected. We have the following theorem.
\begin{theorem}[Safe baseline search]\label{thm:identification}
Suppose Assumption~\ref{ass:baseline} holds and let $\delta\in(0,1)$. With probability at least $1-\delta$, the baseline search terminates after at most $\tO(L^2|X|^2|A|/(d\rho))$ episodes and collects at most $\tO(L^2|X|^2|A|/\rho^2)$ exploration trajectories. On the same event, $G^\top q_t\leq\alpha$ holds for every search episode $t\leq\tau$. The returned pair $(\widehat\pi^\diamond,\widehat\beta)$ satisfies $G^\top q^{P,\widehat\pi^\diamond}\leq\widehat\beta$ and has known margin $\min_{i\in[m]}(\alpha_i-\widehat\beta_i)\geq\rho/2$.
\end{theorem}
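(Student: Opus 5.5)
We will work on a single confidence event $\cE$ of probability at least $1-\delta_0$. On $\cE$, $P\in\cP_{n-1}$ and $|\widehat g_{n-1,i}-g_i|\le\xi_{n-1}$ hold for all $n$ and $i$. The key point for building $\cE$ is that the $n$-th exploration trajectory is generated by $\widehat\pi_n$ under $P$, and $\widehat\pi_n$ is measurable with respect to the past exploration trajectories. The baseline episodes never enter the estimates, so the exploration trajectories form an adapted sequence. The usual Bernstein/Hoeffding confidence sets for costs and transitions, with a union bound over counts up to $T$, therefore apply.

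\textbf{Safety and the returned pair.} On $\cE$, $G^\top q^{P,\widehat\pi_n}\le w_n$ by \eqref{eq:id-cost}, since $P\in\cP_{n-1}$ and the costs are bounded above by the pessimistic estimates. The mixing rule with $w_n$ then gives $G^\top q_t\le\lambda_n\beta+(1-\lambda_n)w_n\le\alpha$, exactly as in S-OPS. The case $n=1$ is immediate from $w_1=L\mathbf 1$. The margin claim uses the sandwich \eqref{eq:main-margin-sandwich}, i.e.\ $\overline\rho_n\ge\rho$. If the loop breaks at Line~\ref{line:id-original-stop}, the baseline is kept and $d\ge\overline\rho_n/2\ge\rho/2$. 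If it breaks at Line~\ref{line:id-stop}, then $\widehat\beta=w_n\ge G^\top q^{P,\widehat\pi_n}$ and $\min_i(\alpha_i-w_{n,i})=\underline\rho_n\ge\overline\rho_n/2\ge\rho/2$. If $2d\ge L$, the search is skipped and $d\ge L/2\ge\rho/2$.

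\textbf{Number of exploration trajectories.} As long as the loop has not stopped, $\overline\rho_n-\underline\rho_n>\overline\rho_n/2\ge\rho/2$. So it suffices to bound $\sum_{n\le N}(\overline\rho_n-\underline\rho_n)$. Each gap is at most $\max_i\bigl[(\widehat g_{n-1,i}+\xi_{n-1})^\top \widehat u_n-(\widehat g_{n-1,i}-\xi_{n-1})^\top\widehat q_n\bigr]$, where $\widehat u_n$ is the maximizing occupancy over $\cP_{n-1}$ for $\widehat\pi_n$. We split this into three pieces: $2\xi_{n-1}^\top q_n$, $\norm{\widehat u_n-q_n}_1$, and $\norm{\widehat q_n-q_n}_1$, all comparing against $q_n=q^{P,\widehat\pi_n}$ (recall $\widehat q_n$ induces $\widehat\pi_n$ under some $P'\in\cP_{n-1}$). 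These are the same occupancy-weighted confidence-width sums as in Lemma~\ref{lem:main-weighted-errors}, now along the exploration trajectories themselves, with no $1-\lambda$ weights. The standard analysis gives $\sum_{n\le N}(\cdot)\le\tO(L|X|\sqrt{|A|N}+|X|^3|A|)$, or, using the Bernstein-type width sums, a bound of the form $\tO(\sqrt{L^2|X|^2|A|N}+\text{lower order})$. Hence $N\rho/2\le\tO(L|X|\sqrt{|A|N})+\ldots$, which yields $N\le\tO(L^2|X|^2|A|/\rho^2)$ plus a lower-order term. \emph{Main obstacle:} the additive $|X|^3|A|$ term would give $|X|^3|A|/\rho$, not the claimed form. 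I would try to absorb it through a sharper lower-order analysis, or accept it as dominated when $\rho\le L$. Matching the exact stated rate is the delicate step.

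\textbf{Total episodes.} For $n\ge2$, rearranging the mixing rule as in \eqref{eq:main-mixing-identity} gives $\lambda_n/(1-\lambda_n)\le e_n/d$ with $e_n=\max_i(w_{n,i}-\alpha_i)^+$. Since $\widehat q_n$ has optimistic slack $\overline\rho_n>0$, we get $e_n\le\overline\rho_n-\underline\rho_n$. The number of baseline episodes preceding exploration trajectory $n$ is geometric with mean $\lambda_n/(1-\lambda_n)$. For $n=1$, $\lambda_1/(1-\lambda_1)\le L/d$. Using a concentration bound for sums of geometric variables (or Freedman's inequality on the martingale of baseline counts), we obtain
\[
\tau\le N+\tO\Bigl(\tfrac{L}{d}+\tfrac1d\textstyle\sum_{n\le N}(\overline\rho_n-\underline\rho_n)\Bigr)\le\tO\Bigl(\tfrac{L|X|\sqrt{|A|N}}{d}\Bigr)=\tO\Bigl(\tfrac{L^2|X|^2|A|}{d\rho}\Bigr),
\]
where the last step plugs in the bound on $N$. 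The extra failure probability is absorbed into $\delta$.
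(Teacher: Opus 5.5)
Your safety argument, your returned-pair argument and your overall structure agree with the paper. That structure is: a gap $>\rho/2$ on every exploration trajectory, a cumulative-gap bound, and geometric waiting times with mean at most $\Delta_n/d$. The rate step, however, has the gap you flag, and neither fallback closes it. An additive $|X|^3|A|$ in $\sum_{n\le N}\Delta_n$ gives $N\le\tO(L^2|X|^2|A|/\rho^2+|X|^3|A|/\rho)$ and an extra $\tO(|X|^3|A|/d)$ in $\tau$. These terms are not dominated: $|X|^3|A|/\rho\le L^2|X|^2|A|/\rho^2$ holds only when $\rho|X|\le L^2$, which fails for wide layers (say $L=2$, $\rho=1$, $|X|$ large). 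The crude alternative, the second bound of Lemma~\ref{lem:weighted-occupancy}, gives $|X|^4$ instead of $L^2|X|^2$. The trouble comes from importing the S-OPS analysis, which is built for the per-coordinate upper occupancy bound $\widehat u_t$. There the maximizing kernel depends on $(x,a)$, so Lemma~\ref{lem:fixed-policy-deviation} only gives $\norm{u^\pi-q}_1\le|X|b^\top q$. The $|X|^3|A|$ term is the price of the second-order (squared-width) expansion used to trade that $|X|$ for $L$.

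The missing observation is that the search never needs this. Equation \eqref{eq:id-cost} maximizes each constraint's total cost over a single kernel, and $\widehat q_n$ is itself realized by a single kernel in $\cP_{n-1}$ (Lemma~\ref{lem:realization}). Both deviations therefore obey the first-order bound $\norm{q^{P',\widehat\pi_n}-q_n}_1\le Lb_{n-1}^\top q_n$ of Lemma~\ref{lem:fixed-policy-deviation}, which gives $\Delta_n\le4Lb_{n-1}^\top q_n+2\xi_{n-1}^\top q_n$ (Lemma~\ref{lem:margin-bounds}). Now combine $b_{n-1}(x,a)\lesssim\sqrt{|X_{k+1}|u/\max\{1,N_{n-1}(x,a)\}}$, the visit-count sum \eqref{eq:visit-count-sum}, Cauchy--Schwarz within each layer, $\sum_k\sqrt{|X_k||X_{k+1}|}\le|X|$, and Azuma along the exploration filtration. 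This yields $\sum_{j\le n}\Delta_j\le\tO(L|X|\sqrt{|A|n})$ with no additive term, and both stated rates then follow exactly as in your last step.

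There is also a smaller issue with your confidence event. The theorem is $T$-free, and its margin guarantee requires the search to stop through one of the two tests rather than through the cap $t<T$; your returned-pair paragraph does not treat that exit. So the search should be analyzed without the cap, using a confidence event valid for every search index rather than a union bound over counts up to $T$. The paper does this with $u=\log(m|X|^2|A|(n+1)^3/\delta)$ and Azuma failure probabilities $\delta/(8(n+1)^2)$. Likewise, the waiting times are unbounded, so Freedman's inequality does not apply directly. Use an exponential supermartingale with $\theta=d/(4L)$ instead, which is justified by $1-\lambda_n\ge d/L$ for all $n$.
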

Theorem~\ref{thm:identification} controls exploration trajectories used for estimation and the baseline episodes required for safety.  Using only $1-\lambda_n\geq d/L$ gives $\tO(L^3|X|^2|A|/(d\rho^2))$ episodes. 
We obtain the sharper bound by controlling both counts through the margin gap.

To control the exploration count, define $\Delta_n:=\overline\rho_n-\underline\rho_n$ for each search policy whose cost upper bounds have been computed. On the confidence event, every policy selected for exploration has $\Delta_n>\rho/2$, by the stopping rule. The following cumulative gap bound therefore limits the number of exploration trajectories.
\begin{lemma}[Cumulative margin gap]\label{lem:main-counts}
Under the conditions of Theorem~\ref{thm:identification}, with probability at least $1-\delta/2$, %
for every integer $n\in[N]$, it holds that $\sum_{j=1}^{n}\Delta_j\leq\tO(L|X|\sqrt{|A|n})$.
\end{lemma}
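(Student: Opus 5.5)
We bound each gap $\Delta_j$ by the cost and transition estimation errors along the occupancy of $\widehat\pi_j$ under the true kernel $P$. Then we use the fact that the exploration trajectories are generated exactly by $\widehat\pi_j$ under $P$. The cumulative error is then a standard sum of confidence widths along observed trajectories, which concentrates.

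First, we fix the confidence event $\cE$ on which $P\in\cP_{j-1}$ and $|\widehat g_{j-1,i}-g_i|\le\xi_{j-1}$ hold for all $j$ and $i$. This event has probability at least $1-\delta/4$ by the definitions in Appendix~\ref{app:estimation} and a union bound. For $j=1$ the gap is trivially $\Delta_1\le L$, which is absorbed. For $j\ge2$, let $i^\star$ attain the minimum defining $\underline\rho_j$. Then
\[
\Delta_j\le w_{j,i^\star}-(\widehat g_{j-1,i^\star}-\xi_{j-1})^\top\widehat q_j
\le (\widehat g_{j-1,i^\star}+\xi_{j-1})^\top q^{P_j',\widehat\pi_j}-(\widehat g_{j-1,i^\star}-\xi_{j-1})^\top\widehat q_j .
\]
Here $P_j'\in\cP_{j-1}$ is the maximizing kernel, and the truncations at $L$ only help. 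Next we use $\widehat q_j=q^{\widetilde P_j,\widehat\pi_j}$ for some $\widetilde P_j\in\cP_{j-1}$, since $\widehat\pi_j=\pi^{\widehat q_j}$. We add and subtract $q_j=q^{P,\widehat\pi_j}$ and use that costs lie in $[0,1]$. This gives
\[
\Delta_j\le 2\xi_{j-1}^\top q_j+\norm{q^{P_j',\widehat\pi_j}-q_j}_1+\norm{q^{\widetilde P_j,\widehat\pi_j}-q_j}_1,
\]
up to the factor $(1+\xi)$ absorbed in constants.

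Second, we control the occupancy deviations with the standard lemma for loop-free MDPs (as in \citet{jin2020}). For any $P'\in\cP_{j-1}$ it gives $\norm{q^{P',\pi}-q^{P,\pi}}_1\le\sum_{x,a}q^{P,\pi}(x,a)\,\tO(L\sqrt{|X|/\max\{1,N_{j-1}(x,a)\}})$ plus a second-order term. The second-order term is a product of widths, weighted by $q^{P,\pi}$, and is lower order.

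Third, we sum over $j\le n$. Because $\zeta_t=0$ episodes leave $n$, the estimates and $\widehat\pi_n$ untouched, the $j$-th exploration trajectory is drawn from $q_j$. The visit counts $N_{j-1}$ are built only from exploration trajectories $1,\dots,j-1$. Hence $\sum_j q_j^\top f_{j-1}$, for the predictable width vectors $f_{j-1}$, is within $\tO(L\sqrt n)$ of the realized sum $\sum_j\sum_k f_{j-1}(x_k^j,a_k^j)$ by Azuma/Freedman. We take a union bound over $n\le T$, which costs another $\delta/4$. The realized sum is then bounded by the pigeonhole argument $\sum_{x,a}\sum_{c\le N_n(x,a)}1/\sqrt{c}\le\sqrt{|X||A|nL}$, so the leading transition term becomes $\tO(L|X|\sqrt{|A|n})$ and the cost term becomes $\tO(\sqrt{L|X||A|n})$. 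The second-order terms are bounded by $\tO(L|X|^2|A|)$ via the same pigeonhole argument, now with $\sum 1/c$.

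The main obstacle is the second-order transition term: it is additive and independent of $n$, so it does not literally fit the form $\tO(L|X|\sqrt{|A|n})$. I would handle it by noting that $\Delta_j\le L$ always, which gives $\sum_{j\le n}\Delta_j\le Ln$. The second-order term therefore matters only when $n\lesssim|X|^3|A|$, and in that regime $Ln\le\tO(L|X|\sqrt{|A|n})\cdot\mathrm{poly}$; alternatively, I would state the bound with the additive $\tO(L|X|^2|A|)$ term and check that this suffices downstream. A second subtlety is that the maximizing kernels $P_j'$ and $\widetilde P_j$ depend on the data. This is harmless because the deviation bound holds uniformly over $\cP_{j-1}$ on $\cE$.
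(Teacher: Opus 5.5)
Your decomposition is the paper's. You bound $\Delta_j$ by cost widths plus single-kernel deviations of $\widehat\pi_j$'s occupancy (Lemma~\ref{lem:margin-bounds} gives $\Delta_j\le4Lb_{j-1}^\top q_j+2\xi_{j-1}^\top q_j$), transfer to exploration trajectories by a martingale argument, and count visits. As written, however, the argument does not reach the stated rate.

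\emph{The second-order term.} Your obstacle is left unresolved: your fallback concedes a polynomial factor, and the additive alternative proves a weaker statement. The missing observation is that no second-order term is needed here. For a single kernel, Lemma~\ref{lem:fixed-policy-deviation} gives $\norm{q^{P',\pi}-q^{P,\pi}}_1\le L\,b^\top q^{P,\pi}$ directly. Moreover $b(x,a)=\min\{2,2\sum_{x'}\epsilon(x,a,x')\}\le2$, so the Bernstein quadratic part $|X_{k+1}|u/N$ is absorbed pointwise. It lies below its square root when it is at most one, and otherwise $b\le2\le2\sqrt{|X_{k+1}|u/N}$. This is \eqref{eq:row-radius-rate}, which leaves a purely first-order width.

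Your global case split can also be repaired. The crude bound is $\Delta_j\le2L$, not $L$, since already $\Delta_1=2L-\min_i\alpha_i$. With an additive term of order $L|X|^2|A|$, as you estimate, the crossover is at $n\approx|X|^2|A|$, not $|X|^3|A|$. Below it, $2Ln\le2L|X|\sqrt{|A|n}$; above it, $L|X|^2|A|\le L|X|\sqrt{|A|n}$.

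\emph{The counting step.} Your pigeonhole step loses a factor $\sqrt L$. With per-pair width $L\sqrt{|X|/N}$ and the global bound $\sum_{x,a}\sqrt{N_n(x,a)}\le\sqrt{|X||A|nL}$, the transition term comes out as $L^{3/2}|X|\sqrt{|A|n}$. The stated rate requires the layered structure:
\begin{itemize}
\item the width at $(x,a)\in X_k\times A$ involves only $|X_{k+1}|$;
\item each trajectory visits exactly one pair per layer, so $\sum_{x\in X_k,a}N_n(x,a)=n$;
\item Cauchy--Schwarz within each layer then gives $\sqrt{|X_k||X_{k+1}||A|n}$, and $\sum_k\sqrt{|X_k||X_{k+1}|}\le|X|$.
\end{itemize}

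\emph{The union bound.} A union bound over $n\le T$ suffices only for the capped algorithm, where $N\le T$. The paper analyzes the search without the horizon cap, so $N$ is a priori unbounded. Lemma~\ref{lem:candidate-stopping} infers $N<\infty$ from the bound holding at every finite $n\in[N]$. The paper therefore spends $\delta/[8(n+1)^2]$ at each $n$, matched to the parameter $\log\frac{m|X|^2|A|(n+1)^3}{\delta}$. This also keeps $\log T$ out of the search cost.
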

Lemma~\ref{lem:main-counts} bounds the cumulative gap between the margin estimates.
To prove Lemma~\ref{lem:main-counts}, we compare the occupancies used for optimistic search and pessimistic cost evaluation with $q_n$. Both use $\widehat\pi_n$, so their deviations from $q_n$ are controlled by transition confidence widths weighted by $q_n$. Adding the cost widths bounds $\Delta_n$ by a conditional expected sum of widths along an exploration trajectory. We then use visit counts and martingale concentration to bound the cumulative gap.

We next bound the baseline executions needed to collect these trajectories. Since $\widehat\pi_n$ and $\lambda_n$ remain fixed until the next exploration trajectory, the number of preceding baseline episodes has conditional mean $\lambda_n/(1-\lambda_n)$. Writing $[z]_+=\max\{z,0\}$, the mixing rule gives:
\begin{equation}
\frac{\lambda_n}{1-\lambda_n}\leq\frac{[-\underline\rho_n]_+}{d}.
\label{eq:main-gap-mixing}
\end{equation}
Here $[-\underline\rho_n]_+$ is the largest constraint violation bound computed from $w_n$. If $\underline\rho_n\geq0$, the policy can be executed without the baseline even when the stopping condition is not yet met. Let $\tau_n$ denote the episode of the $n$-th exploration trajectory. The next lemma gives a bound on the baseline count $\tau_n-n$.
\begin{lemma}[Baseline executions]\label{lem:main-baseline-count}
Under the conditions of Theorem~\ref{thm:identification}, with probability at least $1-\delta/2$, the bound $\tau_n-n\leq(2/d)\sum_{j=1}^{n}[-\underline\rho_j]_+ +(4L/d)\log(4/\delta)$ holds for every integer $n\in [N]$.
\end{lemma}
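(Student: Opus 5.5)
The plan is to write $\tau_n - n$ as a sum of geometric waiting times and apply a martingale concentration bound. Between the $(j-1)$-th and the $j$-th exploration trajectory, the pair $(\widehat\pi_j,\lambda_j)$ is fixed. Indeed, $\lambda_j$ and $\widehat\pi_j$ are computed right after the $(j-1)$-th exploration trajectory; for $j=1$ they are the initial $\lambda_1$ and the uniform policy. The draws $\zeta_t$ are independent $\operatorname{Bernoulli}(1-\lambda_j)$ given the past. Hence the number $K_j$ of baseline episodes preceding the $j$-th exploration trajectory is, conditionally on $\cF_{j-1}$, a geometric random variable with success probability $1-\lambda_j$ and mean $\lambda_j/(1-\lambda_j)$. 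Here $\cF_{j-1}$ is the history up to and including the $(j-1)$-th exploration trajectory. Thus $\tau_n - n=\sum_{j=1}^n K_j$.

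First I bound the conditional means deterministically via \eqref{eq:main-gap-mixing}. For $j\ge2$, if $\lambda_j>0$, let $i$ be the maximizing constraint. The rearranged identity $\lambda_j(\alpha_i-\beta_i)=(1-\lambda_j)(w_{j,i}-\alpha_i)$ gives
\[
\frac{\lambda_j}{1-\lambda_j}\le \frac{w_{j,i}-\alpha_i}{d}\le\frac{[-\underline\rho_j]_+}{d}.
\]
This holds because $\alpha_i-\beta_i\ge d$ and $w_{j,i}-\alpha_i\le\max_k(w_{j,k}-\alpha_k)=-\underline\rho_j$. For $j=1$, $w_1=L\mathbf{1}$ and $\underline\rho_1=\min_i(\alpha_i-L)\le 0$, so the same computation applies with $[-\underline\rho_1]_+=L-\min_i\alpha_i$. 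Moreover, each $\lambda_j/(1-\lambda_j)\le L/d$, since $w_{j,i}-\alpha_i\le L$. The same bound gives $1-\lambda_j\ge d/(L+d)\ge d/(2L)$ when $d\le L$, which is guaranteed because $2d<L$ while the loop runs.

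Next I establish concentration. Each $K_j$ is geometric with success probability at least $p:=d/(2L)$, so $K_j-\E[K_j\mid\cF_{j-1}]$ is a martingale difference sequence with conditionally sub-exponential increments at scale $1/p$. I would use an exponential supermartingale: for $s=p/2$, one has $\E[e^{sK_j}\mid\cF_{j-1}]\le\exp\bigl(2s\,\E[K_j\mid\cF_{j-1}]\bigr)$, which follows from the explicit MGF of a geometric variable with $\lambda\le 1-p$. Then $M_n=\exp\bigl(s\sum_{j\le n}(K_j-2\E[K_j\mid\cF_{j-1}])\bigr)$ is a nonnegative supermartingale. Ville's inequality then gives, simultaneously for all $n$ with probability at least $1-\delta/2$,
\[
\sum_{j\le n}K_j\le 2\sum_{j\le n}\frac{\lambda_j}{1-\lambda_j}+\frac{\log(2/\delta)}{s}\le\frac{2}{d}\sum_{j\le n}[-\underline\rho_j]_+ +\frac{4L}{d}\log(4/\delta).
\]

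The main obstacle is verifying the MGF inequality with constant $2$ uniformly in $\lambda_j$. I would compute $\E[e^{sK}]=(1-\lambda)/(1-\lambda e^{s})$ and use $\log(1/(1-x))\le x/(1-x)$ together with $e^s-1\le 2s$ for $s\le1$. The ratio $(\lambda(e^s-1))/(1-\lambda e^s)$ must be compared with $2s\lambda/(1-\lambda)$, which requires $\lambda e^s$ to stay bounded away from $1$; this is where $s=p/2$ with $1-\lambda\ge p$ is used. A secondary point is handling $N$ as a random stopping index. Ville's inequality makes the bound uniform in $n$, so it holds for every $n\in[N]$, and the confidence event is not needed for this lemma.
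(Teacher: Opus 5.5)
Your argument is the paper's own. The waiting times are conditionally geometric with mean $\mu_j=\lambda_j/(1-\lambda_j)\le[-\underline\rho_j]_+/d$ by the mixing rule. The paper runs the same exponential supermartingale $\exp\{\theta\sum_j(H_j-2\mu_j)\}$ with exactly your parameter $\theta=s=d/(4L)$ and closes with Ville's inequality, at level $\delta/4$.

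One caution on the step you flagged as the main obstacle. With only $1-\lambda_j\ge d/(2L)=2s$, the estimate $e^s-1\le2s$ cannot give the constant $2$. At the worst case $\lambda_j=1-2s$ it yields only $\log\E[e^{sK_j}\mid\cH_{j-1}]\le\mu_j$, a factor $1/(2s)$ too large. There is no slack to spend, because $1-\lambda_je^s<1-\lambda_j$.

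Your claimed inequality is nonetheless true, and you can verify it in either of two ways:
\begin{itemize}
\item \emph{Direct check.} The condition $(e^s-1)(1-\lambda)\le2s(1-\lambda e^s)$ is linear in $\lambda$. At $\lambda=0$ it holds for $s\le1$. At $\lambda=1-2s$ it reduces exactly to $e^s(1-s)\le1$.
\item \emph{The paper's route.} Use the sharper bound $1-\lambda_j=(\alpha_i-\beta_i)/(w_{j,i}-\beta_i)\ge d/L$. Then $s\le(1-\lambda_j)/4$, so $\mu_j(e^s-1)\le1/3$ via $e^s-1\le4s/3$. Finally $-\log(1-x)\le x/(1-x)\le\tfrac32x$ gives the bound $2s\mu_j$ with room to spare.
\end{itemize}
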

Lemma~\ref{lem:main-baseline-count} limits the number of baseline episodes needed to collect these exploration trajectories. Its proof combines a supermartingale bound for the waiting times with Equation~\eqref{eq:main-gap-mixing}. %
We have $[-\underline\rho_n]_+\leq\Delta_n$ on the confidence event, so the margin gap also controls baseline use.

We prove Theorem~\ref{thm:identification} on the joint event of the two lemmas, of probability at least $1-\delta$. The stopping rule gives $\sum_{j=1}^n\Delta_j>n\rho/2$ for every $n\in [N]$. Lemma~\ref{lem:main-counts} then implies that $N$ is finite and $N\leq\tO(L^2|X|^2|A|/\rho^2)$. For $N\geq1$, Lemma~\ref{lem:main-counts} and the bound on $N$ give $\sum_{j=1}^N\Delta_j\leq\tO(L^2|X|^2|A|/\rho)$. Lemma~\ref{lem:main-baseline-count} and $d\leq\rho$ yield the bound on $\tau=\tau_N$. If $N=0$, then $\tau=0$. The stopping rules ensure that the returned baseline has a margin
of at least $\rho/2$. During the search, the algorithm uses the pessimistic
cost estimates to mix the exploration policy with the initial
baseline so that all constraints are satisfied in every episode.

To complete the proof of Theorem~\ref{thm:combined}, we combine the search regret bound $L\min\{T,\tau\}$ with Theorem~\ref{thm:sops}. If $\tau<T$, we condition on the history at search termination and apply Theorem~\ref{thm:sops} with the returned margin of at least $\rho/2$. Its guarantee holds for all feasible comparators with high probability, so both phases can use the same comparator in Equation~\eqref{eq:regret}. This gives the following bound:
\[
R_T \leq L\tau + \widetilde{\mathcal{O}}\!\left(
\frac{L^2|X|\sqrt{|A|(T-\tau)}}{\rho}
+
\frac{L|X|^3|A|}{\rho}
\right).
\]
If $\tau \geq T$, then $R_T \leq L\tau$.
Combining Theorem~\ref{thm:identification} with $d \leq L \leq |X|$
and allocating $\delta/2$ to each phase proves
Theorem~\ref{thm:combined}.

\section{Regret Lower Bound}\label{sec:lower}
In this section, we show that both margin-dependent terms in Theorem~\ref{thm:combined} are necessary for algorithms that satisfy the constraints in every episode. The full proofs are given in Appendix~\ref{app:lower}.
\begin{theorem}[Lower bound on the regret]\label{thm:lower}
Let $0<d\leq\rho\leq1/8$, $\delta\in(0,1/16]$, and let $T\geq1/(d\rho)$ be an integer. For any algorithm that, on every instance satisfying
Assumption~\ref{ass:baseline} with initial margin $d$ and
Slater margin $\rho$, ensures with probability at least $1-\delta$ that $G^\top q_t\leq\alpha$ holds for every $t\in[T]$, there exists an instance with $L=m=1$, $|A|=3$, and margins $d$ and $\rho$ on which, with probability at least $3/4-2\delta$, it holds:
\[
R_T=\Omega\!\left(\frac{\sqrt T}{\rho}+\frac1{d\rho}\right).
\]
\end{theorem}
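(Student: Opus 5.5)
The plan is to prove the two terms separately, each on its own hard instance, and then combine them. Since $\max\{a,b\}\geq(a+b)/2$, it suffices to show that on some instance $R_T=\Omega(\sqrt T/\rho)$ and on some (possibly different) instance $R_T=\Omega(1/(d\rho))$, each with probability at least $3/4-2\delta$. All instances use $L=m=1$ and $|A|=3$, so the CMDP is a constrained bandit with one state. The three actions are: a baseline action $a_0$ with known cost bound $\beta=\alpha-d$; a ``safe-good'' action $a_1$ whose cost is close to $\alpha-\rho$; and a third action $a_2$ used to create ambiguity. The costs are Bernoulli, with $\alpha$ of constant order, say $1/2$, so that KL divergences between cost distributions differing by $\epsilon$ are $\Theta(\epsilon^2)$.

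\textbf{The $\sqrt T/\rho$ term.} I would adapt the construction behind the $\Omega(\sqrt T/\rho)$ bound of \citet{stradi2025}, and of \citet{genalti2025datadependent}, keeping the margins at $d$ and $\rho$.

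1. The optimal constrained comparator mixes a zero-loss action, which is infeasible with cost $\alpha+c$, with a feasible action of slack $\rho$. The mixing weight on the infeasible action is then $\Theta(\rho/(\rho+c))$.
2. The adversary randomly perturbs the losses (two alternative loss sequences with gap $\epsilon=1/\sqrt T$), so that distinguishing which action is better costs $\Omega(\sqrt T)$ regret in the unconstrained sense.
3. The constraint forces every feasible policy to place weight only $O(\rho)$ on the informative action. The effective gap per unit of weight is therefore scaled by $1/\rho$, which yields $\Omega(\sqrt T/\rho)$.

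I expect this part to be mostly a reuse of the known argument. The only check is that the baseline with margin $d$ and the Slater margin $\rho$ coexist in the instance, which holds by including $a_0$ and $a_1$.

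\textbf{The $1/(d\rho)$ term, which is the main obstacle.} Consider two instances that differ only in the mean cost of $a_1$.

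1. In instance $I$, $g(a_1)=\alpha-\rho$, so $a_1$ achieves the Slater margin. In instance $I'$, $g(a_1)=\alpha+\rho$, so $a_1$ is infeasible; the Slater margin of $I'$ is kept at $\rho$ via $a_2$, say with $g(a_2)=\alpha-\rho$ but a large loss.
2. The losses are fixed: $\ell(a_0)=1$, $\ell(a_1)=0$, $\ell(a_2)=1$. In $I$ the comparator is $a_1$, and each unit of probability placed on $a_0$ costs regret $1$.
3. The safety requirement must hold on both instances with probability $1-\delta$. Playing $a_1$ with probability $p$ alongside $a_0$ gives expected cost $\alpha-(1-p)d+p\rho$ under $I'$. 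Safety therefore forces $p\lesssim d/\rho$ until the learner has enough evidence to rule out $I'$.
4. Distinguishing $I$ from $I'$ needs $\Omega(1/\rho^2)$ samples of $a_1$, since the KL divergence per sample is $\Theta(\rho^2)$. This is an information lower bound via Bretagnolle--Huber or a change-of-measure on the count of $a_1$ plays.
5. With samples accruing at rate at most $d/\rho$ per round, $\Omega(1/(d\rho))$ rounds are needed. In each of these rounds the weight on $a_0$ is at least $1-d/\rho\geq$ a constant, or we set things up so $d\leq\rho/2$; otherwise the $1/(d\rho)$ term is dominated by $1/\rho^2$, which is handled separately. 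This gives regret $\Omega(1/(d\rho))$ under $I$. The condition $T\geq1/(d\rho)$ ensures the horizon is long enough.

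The delicate step is making item 3 rigorous against adaptive algorithms. Safety is a high-probability statement about expected costs $g^\top q_t$, where $q_t$ is random. I would argue as follows. On the event $E$ that all rounds are safe under $I'$, we have $p_t\leq d/(d+\rho)$ deterministically, since $g^\top q_t$ is a function of the history. Then transport this to $I$: until the expected number of $a_1$ plays reaches $c/\rho^2$, the laws of the history under $I$ and $I'$ are close in total variation. Hence $E$ also has probability at least $1-\delta-1/4$ under $I$, giving the $3/4-2\delta$ probability.

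If instead the learner may use $a_2$ to explore, its feasibility in $I'$ must not make this cheaper. Since $a_2$ reveals nothing about $a_1$'s cost, it does not help identification.
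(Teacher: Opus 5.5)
\textbf{Gap in the $1/(d\rho)$ term.} Your construction fails at its key step. The two instances differ only in $a_1$, so $a_2$ has mean cost $\alpha-\rho$ in both. That makes $a_2$ a \emph{common} hedge with slack $\rho$. The policy $q=(0,\tfrac12,\tfrac12)$ costs exactly $\alpha$ under $I'$ and $\alpha-\rho$ under $I$.

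Safety under $I'$ therefore does not force $p_t\le d/(d+\rho)$. A learner that mixes $a_1$ with $a_2$ rather than with $a_0$ samples $a_1$ at a constant rate while staying safe in both instances. It separates them in $O(1/\rho^2)$ rounds, so your two-point argument gives nothing beyond $1/\rho^2$. Your closing remark misses this: $a_2$ does not need to carry information about $a_1$, it only needs to absorb $a_1$'s excess cost. Excluding such learners would require invoking safety on further instances in which $a_2$ is costly.

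The missing idea is the paper's swap. Take $\ell=(1,0,0)$, $d\le\rho/4$, and
\[
g^1=(\tfrac12-d,\tfrac12-\rho,\tfrac12+2\rho),\qquad g^2=(\tfrac12-d,\tfrac12+2\rho,\tfrac12-\rho).
\]
Both instances have Slater margin $\rho$. Summing the two constraints gives $(\rho+2d)(q(a_1)+q(a_2))\le 2d$, so any policy feasible in both puts mass at most $2d/(\rho+2d)\le\tfrac13$ on the non-baseline actions. No action can serve as a hedge in both instances.

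Your transport step also needs more care. The KL must be controlled in a way that does not rely on the $(1-\delta)$-safety event alone. Otherwise the failure event contributes a term of order $\delta\rho/d$ over $n\approx1/(d\rho)$ rounds, which is not small for constant $\delta$. The paper handles this as follows:
\begin{itemize}
\item It stops at the first episode $\sigma\le n=\lceil1/(768d\rho)\rceil$ whose policy is infeasible in \emph{either} instance, so $p_t\le p$ holds deterministically before $\sigma$.
\item It bounds the KL of the stopped history by $48\rho^2(n-1)p\le1/8$.
\item It notes that early exit is the union of a violation event in $M_1$ and one in $M_2$. This union therefore has $\mathbb P^1$-probability at most $2\delta+1/4$.
\end{itemize}

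\textbf{Gap in the $\sqrt T/\rho$ term.} Your sketch uses the wrong mechanism. Perturbing \emph{losses} while the constraint caps the weight on the informative action at $O(\rho)$ shrinks two things at once. The comparator's exposure to the gap becomes about $\rho\epsilon$ per episode. The information rate drops so that about $1/(\rho\epsilon^2)$ episodes are needed to resolve the gap. These balance at $\sqrt{\rho T}$, not $\sqrt T/\rho$.

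The $1/\rho$ factor comes from \emph{cost} uncertainty. The paper gives the zero-loss action $a_1$ mean cost $\tfrac12+\epsilon$ in $M_1$ and $\tfrac12$ in $M_2$, with $\epsilon=1/(8\sqrt T)$ and $\ell=(\tfrac12,0,\tfrac12)$. The argument then runs:
\begin{itemize}
\item Safety in $M_1$ forces $1-q_t(a_1)\ge\epsilon/(\epsilon+\rho)$ in every episode.
\item $\KL(\mathbb P^1\Vert\mathbb P^2)\le4T\epsilon^2=1/16$, so this hedging transfers to $M_2$ with probability at least $3/4-\delta$.
\item In $M_2$, $a_1$ is feasible and optimal, so $R_T\ge T\epsilon/(4\rho)=\sqrt T/(32\rho)$.
\end{itemize}

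Your reduction to the larger of the two terms is fine. So is the case split on $d$ versus $\rho$: when $d>\rho/4$, $1/(d\rho)\le4/\rho^2\le4\sqrt T/\rho$ by $T\ge1/(d\rho)\ge1/\rho^2$.
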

Theorem~\ref{thm:lower} shows that the regret bound in Theorem~\ref{thm:combined} is tight in $T$, $d$, and $\rho$, up to logarithmic factors.
The additive $1/(d\rho)$ term captures an unavoidable regret cost of exploration from the initial baseline under hard constraints.
MA-OPS confines the dependence on $d$ to this additive cost while attaining the optimal $\sqrt T/\rho$ leading term without prior knowledge of $\rho$.

\begin{remark}
For adversarial CMDPs, \citet{stradi2025} prove an $\Omega(\sqrt T/\rho)$ regret lower bound for algorithms whose cumulative positive constraint violation is $o(\sqrt T)$.
In constrained MABs with adversarial losses, \citet{genalti2025datadependent} derive a loss-dependent lower bound on expected regret with worst-case term $\Omega(\sqrt T/\rho)$.
For algorithms guaranteeing zero violation, we establish $\Omega(\sqrt T/\rho+1/(d\rho))$ regret with constant probability (Theorem~\ref{thm:lower}), identifying an additional cost associated with the initial margin $d$.
In stochastic MABs, \citet{pacchiano2025} prove an expected-regret lower bound containing $\Omega((1-r_b)/d^2)$ for sufficiently large $T$, where $r_b\in(0,1)$ is the baseline mean reward.
Their construction for this term has $d=\rho$ in our notation, whereas our $\Omega(1/(d\rho))$ term characterizes the joint margin dependence when $d<\rho$.
\end{remark}

The proof of Theorem~\ref{thm:lower} uses two pairs of instances with the associated margins $d$ and $\rho$, each sharing the same baseline and cost bounds.
For the $\sqrt T/\rho$ term, a low-loss action has mean cost at the threshold in one instance and $\Theta(1/\sqrt T)$ above it in the other.
Safety in the latter requires probability $\Omega(1/(\rho\sqrt T))$ on higher-loss actions per episode.
The observation distributions remain close over $T$ episodes, so change of measure transfers this loss bound with constant probability to the instance where this action is feasible and optimal.
For the $1/(d\rho)$ term when $d\leq\rho/4$, we exchange the cost distributions of two low-loss actions so that a different action has margin $\rho$ in each instance.
A policy feasible in both assigns only $\mathcal O(d/\rho)$ total probability to these actions.
Each such observation contributes $\mathcal O(\rho^2)$ relative entropy, so information per episode in the common feasible set is $\mathcal O(d\rho)$.
We compare the distributions of histories ending with the selection of the first policy outside this set.
By change of measure and safety in both instances, the learner remains in this set for $\Omega(1/(d\rho))$ episodes with constant probability, each incurring constant regret from baseline use.
Choosing the first construction when $d\sqrt T\geq1/2$ and the second otherwise yields the claimed bound.

\bibliographystyle{plainnat}
\bibliography{example_paper2}

@book{Altman1999ConstrainedMD,
  author    = {Altman, Eitan},
  title     = {Constrained {Markov} Decision Processes},
  publisher = {Chapman and Hall/CRC},
  year      = {1999},
  doi       = {10.1201/9781315140223}
}

@inproceedings{stradi2025,
  author={Stradi, Francesco Emanuele and Castiglioni, Matteo and Marchesi, Alberto and Gatti, Nicola},
  title={Learning Adversarial {MDPs} with Stochastic Hard Constraints},
  booktitle={Proceedings of the 42nd International Conference on Machine Learning},
  series={Proceedings of Machine Learning Research},
  volume={267},
  pages={56920--56951},
  year={2025},
  url={https://proceedings.mlr.press/v267/stradi25a.html}
}

@inproceedings{jin2020,
  author={Jin, Chi and Jin, Tiancheng and Luo, Haipeng and Sra, Suvrit and Yu, Tiancheng},
  title={Learning Adversarial {Markov} Decision Processes with Bandit Feedback and Unknown Transition},
  booktitle={Proceedings of the 37th International Conference on Machine Learning},
  series={Proceedings of Machine Learning Research},
  volume={119},
  pages={4860--4869},
  year={2020},
  url={https://proceedings.mlr.press/v119/jin20c.html}
}

@article{bollini2026,
  author={Bollini, Matteo and Genalti, Gianmarco and Stradi, Francesco Emanuele and Castiglioni, Matteo and Marchesi, Alberto},
  title={Replicable Constrained Bandits},
  journal={arXiv preprint arXiv:2602.14580},
  year={2026},
  url={https://arxiv.org/abs/2602.14580}
}

@inproceedings{pacchiano2021,
  author={Pacchiano, Aldo and Ghavamzadeh, Mohammad and Bartlett, Peter and Jiang, Heinrich},
  title={Stochastic Bandits with Linear Constraints},
  booktitle={Proceedings of the 24th International Conference on Artificial Intelligence and Statistics},
  series={Proceedings of Machine Learning Research},
  volume={130},
  pages={2827--2835},
  year={2021},
  url={https://proceedings.mlr.press/v130/pacchiano21a.html}
}

@book{lattimore2020,
  author={Lattimore, Tor and Szepesv{\'a}ri, Csaba},
  title={Bandit Algorithms},
  publisher={Cambridge University Press},
  year={2020},
  url={https://tor-lattimore.com/downloads/book/book.pdf}
}

@inproceedings{maurer2009,
  author={Maurer, Andreas and Pontil, Massimiliano},
  title={Empirical {Bernstein} Bounds and Sample Variance Penalization},
  booktitle={Proceedings of the 22nd Annual Conference on Learning Theory},
  year={2009},
  url={https://arxiv.org/abs/0907.3740}
}

@inproceedings{liu2021zero,
  author = {Liu, Tao and Zhou, Ruida and Kalathil, Dileep and Kumar, P. R. and Tian, Chao},
  title = {Learning Policies with Zero or Bounded Constraint Violation for Constrained {MDPs}},
  booktitle = {Advances in Neural Information Processing Systems},
  volume = {34},
  pages = {17183--17193},
  year = {2021},
  url = {https://proceedings.neurips.cc/paper_files/paper/2021/hash/8ec2ba5e96ec1c050bc631abda80f269-Abstract.html}
}

@inproceedings{bura2022dope,
  author = {Bura, Archana and HasanzadeZonuzy, Aria and Kalathil, Dileep and Shakkottai, Srinivas and Chamberland, Jean-Francois},
  title = {{DOPE}: Doubly Optimistic and Pessimistic Exploration for Safe Reinforcement Learning},
  booktitle = {Advances in Neural Information Processing Systems},
  volume = {35},
  pages = {1047--1059},
  year = {2022},
  url = {https://proceedings.neurips.cc/paper_files/paper/2022/hash/076a93fd42aa85f5ccee921a01d77dd5-Abstract-Conference.html}
}

@article{yu2025improved,
  author = {Yu, Kihyun and Lee, Duksang and Overman, William and Lee, Dabeen},
  title = {Improved Regret Bound for Safe Reinforcement Learning via Tighter Cost Pessimism and Reward Optimism},
  journal = {Reinforcement Learning Journal},
  volume = {6},
  pages = {493--546},
  year = {2025},
  url = {https://rlj.cs.umass.edu/2025/papers/Paper41.html}
}

@inproceedings{kitamura2025episodewise,
  author = {Kitamura, Toshinori and Ghosh, Arnob and Kozuno, Tadashi and Kumagai, Wataru and Kasaura, Kazumi and Hoshino, Kenta and Hosoe, Yohei and Matsuo, Yutaka},
  title = {Provably Efficient {RL} under Episode-Wise Safety in Constrained {MDPs} with Linear Function Approximation},
  booktitle = {Advances in Neural Information Processing Systems},
  volume = {38},
  pages = {47141--47190},
  year = {2025},
  url = {https://proceedings.neurips.cc/paper_files/paper/2025/hash/435e8fbbfc2c6072d4f3a5cb6e56a39a-Abstract-Conference.html}
}

@inproceedings{genalti2025datadependent,
  author = {Genalti, Gianmarco and Stradi, Francesco Emanuele and Castiglioni, Matteo and Marchesi, Alberto and Gatti, Nicola},
  title = {Data-Dependent Regret Bounds for Constrained {MABs}},
  booktitle = {Advances in Neural Information Processing Systems},
  volume = {38},
  pages = {1568--1605},
  year = {2025},
  url = {https://papers.nips.cc/paper_files/paper/2025/hash/0266d95023740481d22d437aa8aba0e9-Abstract-Conference.html}
}

@inproceedings{muller2024truly,
  author = {M{\"u}ller, Adrian and Alatur, Pragnya and Cevher, Volkan and Ramponi, Giorgia and He, Niao},
  title = {Truly No-Regret Learning in Constrained {MDPs}},
  booktitle = {Proceedings of the 41st International Conference on Machine Learning},
  series = {Proceedings of Machine Learning Research},
  volume = {235},
  pages = {36605--36653},
  year = {2024},
  url = {https://proceedings.mlr.press/v235/muller24b.html}
}

@inproceedings{stradi2025strong,
  author = {Stradi, Francesco Emanuele and Castiglioni, Matteo and Marchesi, Alberto and Gatti, Nicola},
  title = {Optimal Strong Regret and Violation in Constrained {MDPs} via Policy Optimization},
  booktitle = {The Thirteenth International Conference on Learning Representations},
  pages = {52056--52081},
  year = {2025},
  url = {https://proceedings.iclr.cc/paper_files/paper/2025/hash/8105a80cea247fa6f8441a8069b65f2a-Abstract-Conference.html}
}

@inproceedings{stradi2025policy,
  author = {Stradi, Francesco Emanuele and Lunghi, Anna and Castiglioni, Matteo and Marchesi, Alberto and Gatti, Nicola},
  title = {Policy Optimization for {CMDPs} with Bandit Feedback: Learning Stochastic and Adversarial Constraints},
  booktitle = {Proceedings of the 42nd International Conference on Machine Learning},
  series = {Proceedings of Machine Learning Research},
  volume = {267},
  pages = {56952--56989},
  year = {2025},
  url = {https://proceedings.mlr.press/v267/stradi25b.html}
}

@inproceedings{yu2026primaldual,
  author = {Yu, Kihyun and Bae, Seoungbin and Lee, Dabeen},
  title = {Primal-Dual Policy Optimization for Linear {CMDPs} with Adversarial Losses},
  booktitle = {The Fourteenth International Conference on Learning Representations},
  pages = {49875--49923},
  year = {2026},
  url = {https://proceedings.iclr.cc/paper_files/paper/2026/hash/51f547584cd1fcb87114ea022822a60d-Abstract-Conference.html}
}

@inproceedings{zimin2013online,
  author = {Zimin, Alexander and Neu, Gergely},
  title = {Online Learning in Episodic {Markovian} Decision Processes by Relative Entropy Policy Search},
  booktitle = {Advances in Neural Information Processing Systems},
  volume = {26},
  year = {2013},
  url = {https://proceedings.neurips.cc/paper/2013/hash/68053af2923e00204c3ca7c6a3150cf7-Abstract.html}
}

@inproceedings{luo2021policy,
  author = {Luo, Haipeng and Wei, Chen-Yu and Lee, Chung-Wei},
  title = {Policy Optimization in Adversarial {MDPs}: Improved Exploration via Dilated Bonuses},
  booktitle = {Advances in Neural Information Processing Systems},
  volume = {34},
  pages = {22931--22942},
  year = {2021},
  url = {https://proceedings.neurips.cc/paper/2021/hash/c1b8bf9e071c0dabb899e7a27f353762-Abstract.html}
}

@inproceedings{jin2021best,
  author = {Jin, Tiancheng and Huang, Longbo and Luo, Haipeng},
  title = {The Best of Both Worlds: Stochastic and Adversarial Episodic {MDPs} with Unknown Transition},
  booktitle = {Advances in Neural Information Processing Systems},
  volume = {34},
  pages = {20491--20502},
  year = {2021},
  url = {https://proceedings.neurips.cc/paper/2021/hash/abb9d15b3293a96a3ea116867b2b16d5-Abstract.html}
}

@inproceedings{li2024improved,
  author = {Li, Long-Fei and Zhao, Peng and Zhou, Zhi-Hua},
  title = {Improved Algorithm for Adversarial Linear Mixture {MDPs} with Bandit Feedback and Unknown Transition},
  booktitle = {Proceedings of The 27th International Conference on Artificial Intelligence and Statistics},
  series = {Proceedings of Machine Learning Research},
  volume = {238},
  pages = {3061--3069},
  year = {2024},
  url = {https://proceedings.mlr.press/v238/li24n.html}
}

@inproceedings{ni2025safe,
  author = {Ni, Tingting and Kamgarpour, Maryam},
  title = {A Safe Exploration Approach to Constrained {Markov} Decision Processes},
  booktitle = {Proceedings of The 28th International Conference on Artificial Intelligence and Statistics},
  series = {Proceedings of Machine Learning Research},
  volume = {258},
  pages = {3592--3600},
  year = {2025},
  url = {https://proceedings.mlr.press/v258/ni25a.html}
}

@inproceedings{sridharan2025unknown,
  author = {Sridharan, Karthik and Yoo, Seung Won Wilson},
  title = {Online Learning with Unknown Constraints},
  booktitle = {Proceedings of the 42nd International Conference on Machine Learning},
  series = {Proceedings of Machine Learning Research},
  volume = {267},
  pages = {56790--56819},
  year = {2025},
  url = {https://proceedings.mlr.press/v267/sridharan25a.html}
}

@inproceedings{stradi2024online,
  author = {Stradi, Francesco Emanuele and Germano, Jacopo and Genalti, Gianmarco and Castiglioni, Matteo and Marchesi, Alberto and Gatti, Nicola},
  title = {Online Learning in {CMDP}s: Handling Stochastic and Adversarial Constraints},
  booktitle = {Proceedings of the 41st International Conference on Machine Learning},
  series = {Proceedings of Machine Learning Research},
  volume = {235},
  pages = {46692--46721},
  year = {2024},
  url = {https://proceedings.mlr.press/v235/stradi24a.html}
}

@inproceedings{stradi2025taming,
  author = {Stradi, Francesco Emanuele and Lunghi, Anna and Castiglioni, Matteo and Marchesi, Alberto and Gatti, Nicola},
  title = {Taming Adversarial Constraints in {CMDPs}},
  booktitle = {Advances in Neural Information Processing Systems},
  volume = {38},
  pages = {9697--9745},
  year = {2025},
  url = {https://proceedings.neurips.cc/paper_files/paper/2025/hash/0df684f8bb7e8c0bcdb8ea49be5035b3-Abstract-Conference.html}
}

@inproceedings{bernasconi2024beyond,
  author = {Bernasconi, Martino and Castiglioni, Matteo and Celli, Andrea and Fusco, Federico},
  title = {Beyond Primal-Dual Methods in Bandits with Stochastic and Adversarial Constraints},
  booktitle = {Advances in Neural Information Processing Systems},
  volume = {37},
  pages = {8541--8568},
  year = {2024},
  url = {https://proceedings.neurips.cc/paper_files/paper/2024/hash/0fd5675f49141c79ad22d7a533c89b12-Abstract-Conference.html}
}

@inproceedings{bernasconi2025noregret,
  author = {Bernasconi, Martino and Castiglioni, Matteo and Celli, Andrea},
  title = {No-Regret is not enough! {Bandits} with General Constraints through Adaptive Regret Minimization},
  booktitle = {Proceedings of the 42nd International Conference on Machine Learning},
  series = {Proceedings of Machine Learning Research},
  volume = {267},
  pages = {3877--3898},
  year = {2025},
  url = {https://proceedings.mlr.press/v267/bernasconi25a.html}
}

@article{pacchiano2025,
  author = {Pacchiano, Aldo and Ghavamzadeh, Mohammad and Bartlett, Peter},
  title = {Contextual Bandits with Stage-wise Constraints},
  journal = {Journal of Machine Learning Research},
  volume = {26},
  number = {170},
  pages = {1--57},
  year = {2025},
  url = {https://www.jmlr.org/papers/v26/24-0267.html}
}

@article{kaufmann2023,
  author = {Kaufmann, Elia and Bauersfeld, Leonard and Loquercio, Antonio and M{\"u}ller, Matthias and Koltun, Vladlen and Scaramuzza, Davide},
  title = {Champion-level drone racing using deep reinforcement learning},
  journal = {Nature},
  volume = {620},
  pages = {982--987},
  year = {2023},
  doi = {10.1038/s41586-023-06419-4},
  url = {https://www.nature.com/articles/s41586-023-06419-4}
}

@article{silver2016,
  author = {Silver, David and Huang, Aja and Maddison, Chris J. and Guez, Arthur and Sifre, Laurent and van den Driessche, George and Schrittwieser, Julian and Antonoglou, Ioannis and Panneershelvam, Veda and Lanctot, Marc and Dieleman, Sander and Grewe, Dominik and Nham, John and Kalchbrenner, Nal and Sutskever, Ilya and Lillicrap, Timothy and Leach, Madeleine and Kavukcuoglu, Koray and Graepel, Thore and Hassabis, Demis},
  title = {Mastering the game of {Go} with deep neural networks and tree search},
  journal = {Nature},
  volume = {529},
  pages = {484--489},
  year = {2016},
  doi = {10.1038/nature16961},
  url = {https://www.nature.com/articles/nature16961}
}

@inproceedings{ouyang2022,
  author = {Ouyang, Long and Wu, Jeffrey and Jiang, Xu and Almeida, Diogo and Wainwright, Carroll and Mishkin, Pamela and Zhang, Chong and Agarwal, Sandhini and Slama, Katarina and Ray, Alex and Schulman, John and Hilton, Jacob and Kelton, Fraser and Miller, Luke and Simens, Maddie and Askell, Amanda and Welinder, Peter and Christiano, Paul F. and Leike, Jan and Lowe, Ryan},
  title = {Training language models to follow instructions with human feedback},
  booktitle = {Advances in Neural Information Processing Systems},
  volume = {35},
  pages = {27730--27744},
  year = {2022},
  url = {https://proceedings.neurips.cc/paper_files/paper/2022/hash/b1efde53be364a73914f58805a001731-Abstract-Conference.html}
}

@book{sutton2018,
  author = {Sutton, Richard S. and Barto, Andrew G.},
  title = {Reinforcement Learning: An Introduction},
  edition = {Second},
  publisher = {MIT Press},
  year = {2018},
  url = {https://mitpress.mit.edu/9780262039246/reinforcement-learning/}
}

@inproceedings{qiu2020,
  author = {Qiu, Shuang and Wei, Xiaohan and Yang, Zhuoran and Ye, Jieping and Wang, Zhaoran},
  title = {Upper Confidence Primal-Dual Reinforcement Learning for {CMDP} with Adversarial Loss},
  booktitle = {Advances in Neural Information Processing Systems},
  volume = {33},
  pages = {15277--15287},
  year = {2020},
  url = {https://proceedings.neurips.cc/paper_files/paper/2020/hash/ae95296e27d7f695f891cd26b4f37078-Abstract.html}
}

@inproceedings{zhu2025anytime,
  author = {Zhu, Jiahui and Yu, Kihyun and Lee, Dabeen and Liu, Xin and Wei, Honghao},
  title = {An Optimistic Algorithm for online {CMDPs} with Anytime Adversarial Constraints},
  booktitle = {Proceedings of the 42nd International Conference on Machine Learning},
  series = {Proceedings of Machine Learning Research},
  volume = {267},
  pages = {80347--80372},
  year = {2025},
  url = {https://proceedings.mlr.press/v267/zhu25ab.html}
}

@inproceedings{vaswani2022,
  author = {Vaswani, Sharan and Yang, Lin and Szepesv{\'a}ri, Csaba},
  title = {Near-Optimal Sample Complexity Bounds for Constrained {MDPs}},
  booktitle = {Advances in Neural Information Processing Systems},
  volume = {35},
  pages = {3110--3122},
  year = {2022},
  url = {https://proceedings.neurips.cc/paper_files/paper/2022/hash/14a5ebc9cd2e507cd811df78c15bf5d7-Abstract-Conference.html}
}

@article{stradi2025beyondslater,
  author = {Stradi, Francesco Emanuele and Chiefari, Eleonora Fidelia and Castiglioni, Matteo and Marchesi, Alberto and Gatti, Nicola},
  title = {Beyond {Slater}'s Condition in Online {CMDPs} with Stochastic and Adversarial Constraints},
  journal = {arXiv preprint arXiv:2509.20114},
  year = {2025},
  url = {https://arxiv.org/abs/2509.20114}
}

@inproceedings{ding2021safe,
  author = {Ding, Dongsheng and Wei, Xiaohan and Yang, Zhuoran and Wang, Zhaoran and Jovanovic, Mihailo},
  title = {Provably Efficient Safe Exploration via Primal-Dual Policy Optimization},
  booktitle = {Proceedings of The 24th International Conference on Artificial Intelligence and Statistics},
  series = {Proceedings of Machine Learning Research},
  volume = {130},
  pages = {3304--3312},
  year = {2021},
  url = {https://proceedings.mlr.press/v130/ding21d.html}
}

@inproceedings{wei2022tripleq,
  author = {Wei, Honghao and Liu, Xin and Ying, Lei},
  title = {{Triple-Q}: A Model-Free Algorithm for Constrained Reinforcement Learning with Sublinear Regret and Zero Constraint Violation},
  booktitle = {Proceedings of The 25th International Conference on Artificial Intelligence and Statistics},
  series = {Proceedings of Machine Learning Research},
  volume = {151},
  pages = {3274--3307},
  year = {2022},
  url = {https://proceedings.mlr.press/v151/wei22a.html}
}

@inproceedings{ding2023nonstationary,
  author = {Ding, Yuhao and Lavaei, Javad},
  title = {Provably Efficient Primal-Dual Reinforcement Learning for {CMDPs} with Non-stationary Objectives and Constraints},
  booktitle = {Proceedings of the AAAI Conference on Artificial Intelligence},
  volume = {37},
  pages = {7396--7404},
  year = {2023},
  doi = {10.1609/aaai.v37i6.25900}
}

@inproceedings{wei2023nonstationary,
  author = {Wei, Honghao and Ghosh, Arnob and Shroff, Ness and Ying, Lei and Zhou, Xingyu},
  title = {Provably Efficient Model-Free Algorithms for Non-stationary {CMDPs}},
  booktitle = {Proceedings of The 26th International Conference on Artificial Intelligence and Statistics},
  series = {Proceedings of Machine Learning Research},
  volume = {206},
  pages = {6527--6570},
  year = {2023},
  url = {https://proceedings.mlr.press/v206/wei23b.html}
}

@inproceedings{li2023conservative,
  author = {Li, Donghao and Huang, Ruiquan and Shen, Cong and Yang, Jing},
  title = {Near-optimal Conservative Exploration in Reinforcement Learning under Episode-wise Constraints},
  booktitle = {Proceedings of the 40th International Conference on Machine Learning},
  series = {Proceedings of Machine Learning Research},
  volume = {202},
  pages = {19527--19564},
  year = {2023},
  url = {https://proceedings.mlr.press/v202/li23k.html}
}

@inproceedings{amani2019safe,
  author = {Amani, Sanae and Alizadeh, Mahnoosh and Thrampoulidis, Christos},
  title = {Linear Stochastic Bandits Under Safety Constraints},
  booktitle = {Advances in Neural Information Processing Systems},
  volume = {32},
  year = {2019}
}

@book{boyd2004convex,
  author={Boyd, Stephen and Vandenberghe, Lieven},
  title={Convex Optimization},
  publisher={Cambridge University Press},
  year={2004}
}

@article{badanidiyuru2018knapsacks,
  author={Badanidiyuru, Ashwinkumar and Kleinberg, Robert and Slivkins, Aleksandrs},
  title={Bandits with Knapsacks},
  journal={Journal of the ACM},
  volume={65},
  number={3},
  year={2018},
  url={https://arxiv.org/abs/1305.2545}
}

@article{efroni2020exploration,
  author={Efroni, Yonathan and Mannor, Shie and Pirotta, Matteo},
  title={Exploration-Exploitation in Constrained {MDPs}},
  journal={arXiv preprint arXiv:2003.02189},
  year={2020},
  url={https://arxiv.org/abs/2003.02189}
}

@inproceedings{gangrade2025feasible,
  author={Gangrade, Aditya and Pacchiano, Aldo and Scott, Clayton and Saligrama, Venkatesh},
  title={Feasible Action Search for Bandit Linear Programs via {Thompson} Sampling},
  booktitle={Proceedings of the 42nd International Conference on Machine Learning},
  series={Proceedings of Machine Learning Research},
  volume={267},
  pages={18228--18256},
  year={2025},
  url={https://proceedings.mlr.press/v267/gangrade25a.html}
}

@inproceedings{hutchinson2024directional,
  author={Hutchinson, Spencer and Turan, Berkay and Alizadeh, Mahnoosh},
  title={Directional Optimism for Safe Linear Bandits},
  booktitle={Proceedings of The 27th International Conference on Artificial Intelligence and Statistics},
  series={Proceedings of Machine Learning Research},
  volume={238},
  pages={658--666},
  year={2024},
  url={https://proceedings.mlr.press/v238/hutchinson24a.html}
}

@inproceedings{hutchinson2025optimistic,
  author={Hutchinson, Spencer and Chen, Tianyi and Alizadeh, Mahnoosh},
  title={Optimistic Safety for Online Convex Optimization with Unknown Linear Constraints},
  booktitle={Proceedings of The 28th International Conference on Artificial Intelligence and Statistics},
  series={Proceedings of Machine Learning Research},
  volume={258},
  pages={2809--2817},
  year={2025},
  url={https://proceedings.mlr.press/v258/hutchinson25a.html}
}

@inproceedings{liu2025onlinecomplexity,
  author={Liu, Chang and Li, Yunfan and Yang, Lin F.},
  title={Near-Optimal Sample Complexity for Online Constrained {MDPs}},
  booktitle={Advances in Neural Information Processing Systems},
  volume={38},
  year={2025},
  url={https://proceedings.neurips.cc/paper_files/paper/2025/hash/445500190fe42e9f58f65199d2bd70d1-Abstract-Conference.html}
}

@inproceedings{liu2026linearcomplexity,
  author={Liu, Xingtu and Yang, Lin F. and Vaswani, Sharan},
  title={Sample Complexity Bounds for Linear Constrained {MDPs} with a Generative Model},
  booktitle={Proceedings of The 37th International Conference on Algorithmic Learning Theory},
  series={Proceedings of Machine Learning Research},
  volume={313},
  pages={1--70},
  year={2026},
  url={https://proceedings.mlr.press/v313/liu26b.html}
}

@article{mannor2009samplepath,
  author = {Mannor, Shie and Tsitsiklis, John N. and Yu, Jia Yuan},
  title = {Online Learning with Sample Path Constraints},
  journal = {Journal of Machine Learning Research},
  volume = {10},
  number = {20},
  pages = {569--590},
  year = {2009},
  url = {https://jmlr.org/papers/v10/mannor09a.html}
}

@inproceedings{wei2026averagecomplexity,
  author={Wei, Yukuan and Li, Xudong and Yang, Lin F.},
  title={Near-Optimal Sample Complexity Bounds for Constrained Average-Reward {MDPs}},
  booktitle={International Conference on Learning Representations},
  year={2026},
  url={https://proceedings.iclr.cc/paper_files/paper/2026/file/ceadbf97decaba4054a48f45d9a61c62-Paper-Conference.pdf}
}

@inproceedings{wu2016conservative,
  author={Wu, Yifan and Shariff, Roshan and Lattimore, Tor and Szepesv{\'a}ri, Csaba},
  title={Conservative Bandits},
  booktitle={Proceedings of The 33rd International Conference on Machine Learning},
  series={Proceedings of Machine Learning Research},
  volume={48},
  pages={1254--1262},
  year={2016},
  url={https://proceedings.mlr.press/v48/wu16.html}
}

@inproceedings{zheng2020constrained,
  title = {Constrained Upper Confidence Reinforcement Learning},
  author = {Zheng, Liyuan and Ratliff, Lillian},
  booktitle = {Proceedings of the 2nd Conference on Learning for Dynamics and Control},
  pages = {620--629},
  volume = {120},
  series = {Proceedings of Machine Learning Research},
  year = {2020},
  publisher = {PMLR},
  url = {https://proceedings.mlr.press/v120/zheng20a.html}
}

\clearpage
\appendix
\numberwithin{equation}{section}
\begingroup
\hypersetup{linktoc=page}
\etocsettagdepth{main}{none}
\etocsettagdepth{appendix}{subsection}
\etocdepthtag{appendix}
\renewcommand{\contentsname}{Appendix Contents}
\setcounter{tocdepth}{2}
\pdfbookmark[0]{Appendix Contents}{appendix-contents}
\tableofcontents
\endgroup
\clearpage
\section{Additional Related Work}\label{app:related}

In this section, we provide further discussion of related work on online learning and reinforcement learning.

\paragraph{Online CMDPs with cumulative violation}
In stochastic CMDPs with unknown transitions, \citet{efroni2020exploration} use linear programming to bound strong regret and cumulative positive violation with $\widetilde{\mathcal O}(\sqrt T)$ dependence on $T$.
For primal-dual policy optimization, \citet{muller2024truly} establish sublinear bounds for both quantities, and \citet{stradi2025strong} attain the $\widetilde{\mathcal O}(\sqrt T)$ rate in $T$.
Under stochastic constraints, \citet{stradi2025beyondslater} remove Slater's condition for regret and cumulative positive violation.
For cumulative signed violation, guarantees range from sublinear bounds \citep{qiu2020} to nonpositive violation after sufficiently many episodes under Slater's condition \citep{wei2022tripleq}.
For adversarial linear CMDPs, \citet{yu2026primaldual} control cumulative violation with full-information losses and bandit cost observations.
With changing constraints, the hindsight optimum under average constraints can be unattainable with fully adversarial inputs \citep{mannor2009samplepath}.
This motivates competitive guarantees \citep{stradi2024online,stradi2025policy} or restrictions on temporal variation for dynamic regret \citep{ding2023nonstationary,wei2023nonstationary}.
With non-stationary rewards and costs, \citet{stradi2025taming} use a fixed comparator feasible on average, whereas \citet{zhu2025anytime} study stochastic rewards and adversarial constraints against a fixed policy feasible in every episode.

\paragraph{Online CMDPs with hard constraints}
\citet{zheng2020constrained} study feasibility during learning under known dynamics and average-cost constraints.
For stochastic episodic CMDPs with unknown transitions, \citet{liu2021zero}, \citet{bura2022dope}, and \citet{yu2025improved} use a known strictly feasible baseline.
Known-baseline guarantees also extend to linear function approximation \citep{kitamura2025episodewise} and discounted CMDPs \citep{ni2025safe}.
Conservative reinforcement learning constrains each episode's expected reward relative to a baseline \citep{li2023conservative}.
A known baseline provides more information than Slater's condition alone, which also serves to bound optimal dual multipliers in primal-dual analyses \citep{ding2021safe}.
The main known-baseline analyses use exact expected costs \citep{liu2021zero,bura2022dope,yu2025improved,stradi2025}, so input and actual baseline margins coincide.
Our formulation allows upper bounds on baseline costs.
A small $d$ may therefore reflect conservative cost information even when the actual baseline margin is large.

\paragraph{Constrained bandits}
Budget and long-term formulations couple resource use across rounds \citep{badanidiyuru2018knapsacks,bernasconi2024beyond,bernasconi2025noregret}, while conservative bandits impose a cumulative reward requirement relative to a baseline \citep{wu2016conservative}.
For per-round constraints, feasibility may apply to an individual action \citep{amani2019safe,hutchinson2024directional} or to the expected cost of a randomized decision \citep{pacchiano2025}.
The latter allows positive probability on individually infeasible actions and is the relevant formulation for our policy mixtures.
Several results for randomized decisions assume known baseline costs \citep{pacchiano2021,pacchiano2025}.
\citet{pacchiano2025} also assume a known baseline reward in their main upper-bound analysis, treating unknown baseline values separately.
For adversarial losses, \citet{genalti2025datadependent} derive data-dependent bounds using a baseline with known costs.
Their main analysis uses a maximum-margin baseline, while their results also extend to a general strictly feasible baseline.
Beyond bandits, related work studies per-round feasibility in online convex optimization with linear constraints \citep{hutchinson2025optimistic} and through regression and online learning oracles for more general constraint classes \citep{sridharan2025unknown}.

\paragraph{Sample complexity under constraints}
Sample-complexity guarantees have been established with a generative model in discounted tabular CMDPs \citep{vaswani2022}, discounted linear CMDPs \citep{liu2026linearcomplexity}, and average-reward CMDPs \citep{wei2026averagecomplexity}, as well as through online interaction \citep{liu2025onlinecomplexity}.
For exact feasibility of the output, these bounds exhibit inverse-square dependence on the Slater margin without requiring every data-collection policy to be feasible.
For unknown linear constraints, \citet{gangrade2025feasible} find a nearly maximum-margin action or detect infeasibility while controlling cumulative positive violation during the search.
We bound the regret of learning a larger-margin baseline while keeping every episode feasible.

\paragraph{Adversarial MDPs}
Occupancy-measure methods provide a basis for learning in episodic MDPs \citep{zimin2013online}.
For unknown transitions and bandit feedback, \citet{jin2020} combine online mirror descent, transition confidence sets, and upper occupancy bounds.
Subsequent developments include policy optimization with dilated bonuses \citep{luo2021policy}, guarantees for stochastic and adversarial losses \citep{jin2021best}, and linear mixture models \citep{li2024improved}.
These works study MDPs without episode-wise cost constraints.

\section{Occupancy Measures and Confidence Bounds}\label{app:confidence}
We establish the confidence and occupancy error bounds used in the analyses of S-OPS and baseline search.
\subsection{Occupancy Measures}\label{app:occupancy}
We use the occupancy notation from Section~\ref{sec:setting}. Inner products, coordinatewise comparisons, and $\ell_1$ norms use state--action marginals, while flow constraints and relative entropy use the full transition occupancies. At a state with $q(x)=0$, define $\pi^q(a\mid x)=1/|A|$.
Following \citet[Appendix A.1]{jin2020}, we represent an occupancy by its nonnegative entries $q(x,a,x')$ on consecutive layers, with marginals $q(x,a)=\sum_{x'}q(x,a,x')$. These entries satisfy
\begin{equation}
\begin{gathered}
\begin{aligned}
\sum_{a\in A,\,x'\in X_1}q(x_0,a,x')&=1,\\
\sum_{a\in A,\,x'\in X_{k+1}}q(x,a,x')
&=\sum_{y\in X_{k-1},\,a\in A}q(y,a,x),
\quad x\in X_k,\quad 1\leq k<L,
\end{aligned}\\
\bigl(\widehat P(x'\mid x,a)-\epsilon(x,a,x')\bigr)q(x,a)
\leq q(x,a,x')
\leq\bigl(\widehat P(x'\mid x,a)+\epsilon(x,a,x')\bigr)q(x,a),
\end{gathered}
\label{eq:occupancy-flow}
\end{equation}
where the last line holds for every valid transition and uses the estimates and confidence bounds from Appendix~\ref{app:estimation}. These linear constraints define the convex polytope $\Delta(\cP)$. Adding $q(x,a)=\pi(a\mid x)\sum_bq(x,b)$ gives $\Delta(\cP,\pi)$. Consequently, both the optimistic margin problem and each fixed-policy cost maximization in \eqref{eq:id-cost} are linear programs. Each fixed-policy problem maximizes the total expected cost over a single kernel $P'\in\cP$. For the upper occupancy bounds in S-OPS, the maximizing kernel may depend on $(x,a)$.
\begin{lemma}\label{lem:realization}
Fix a Markov policy $\pi$, and let $\cP$ be a transition confidence set defined in Appendix~\ref{app:estimation}. For every $q\in\Delta(\cP,\pi)$, there exists $P^q\in\cP$ such that $q=q^{P^q,\pi}$.
\end{lemma}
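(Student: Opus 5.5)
The natural construction defines $P^q$ directly from $q$ by normalizing its transition entries. For each $(x,a)$ with $q(x,a)>0$, set $P^q(x'\mid x,a)=q(x,a,x')/q(x,a)$ for $x'$ in the next layer. For each $(x,a)$ with $q(x,a)=0$, set $P^q(\cdot\mid x,a)$ equal to any kernel in the confidence set at that pair; the empirical estimate $\widehat P(\cdot\mid x,a)$ itself is one such choice, since $\epsilon\geq0$. The proof then has two parts: $P^q\in\cP$, and $q^{P^q,\pi}=q$.

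\textbf{Membership in $\cP$.} At pairs with $q(x,a)>0$, I would check three things.
\begin{itemize}
\item Nonnegativity holds because $q(x,a,x')\geq0$.
\item The row sums to one because $q(x,a)=\sum_{x'}q(x,a,x')$.
\item Dividing the last line of \eqref{eq:occupancy-flow} by $q(x,a)$ gives $|P^q(x'\mid x,a)-\widehat P(x'\mid x,a)|\leq\epsilon(x,a,x')$.
\end{itemize}
Since the confidence set of Appendix~\ref{app:estimation} is, as I assume, a product over $(x,a)$ of per-pair constraints of exactly this entrywise form (intersected with the simplex), $P^q\in\cP$ follows. The main obstacle is confirming this rectangularity, meaning that $\cP$ imposes no coupling across pairs. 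If $\cP$ had extra constraints, I would have to verify them separately; with the Jin et al.\ style sets, it is rectangular.

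\textbf{Realization $q^{P^q,\pi}=q$.} I would proceed by induction over layers, proving that the state marginals agree, $q^{P^q,\pi}(x)=q(x)$ for all $x\in X_k$.
\begin{itemize}
\item \emph{Base case.} At $X_0$, the first flow equation gives $q(x_0)=1=q^{P^q,\pi}(x_0)$.
\item \emph{Within a layer.} Given equal state marginals on $X_k$, the policy constraint $q(x,a)=\pi(a\mid x)q(x)$ defining $\Delta(\cP,\pi)$ gives $q(x,a)=\pi(a\mid x)q^{P^q,\pi}(x)=q^{P^q,\pi}(x,a)$.
\item \emph{Transition entries.} Next, $q^{P^q,\pi}(x,a,x')=q^{P^q,\pi}(x,a)P^q(x'\mid x,a)=q(x,a,x')$. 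When $q(x,a)>0$ this is the definition of $P^q$. When $q(x,a)=0$, both sides vanish, since $0\leq q(x,a,x')\leq q(x,a)$ by the row-sum identity.
\item \emph{Inductive step.} The flow conservation in \eqref{eq:occupancy-flow} identifies $q(x')$ for $x'\in X_{k+1}$ with the incoming mass $\sum_{y,a}q(y,a,x')$. This equals the corresponding sum for $q^{P^q,\pi}$, which completes the induction.
\end{itemize}
The convention $\pi^q(a\mid x)=1/|A|$ at unreachable states plays no role, because all quantities are multiplied by zero there. Hence $q=q^{P^q,\pi}$ with $P^q\in\cP$.
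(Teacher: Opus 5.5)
Your argument matches the paper's proof: normalize $q(x,a,\cdot)$ at pairs with $q(x,a)>0$, use that $\cP$ imposes separate entrywise constraints on each state--action pair, and induct over layers using flow conservation and the fixed-policy constraint $q(x,a)=\pi(a\mid x)q(x)$. One correction to your parenthetical: at an unvisited pair, the $\max\{1,N(x,a)\}$ normalization makes $\widehat P(\cdot\mid x,a)$ the zero vector, which is not a transition distribution, so there you should instead take any distribution on the next layer, which is admissible because $\epsilon(x,a,\cdot)=1$ whenever $N(x,a)\leq1$.
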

\begin{proof}
For $q(x,a)>0$, set $P^q(x'\mid x,a)=q(x,a,x')/q(x,a)$. The flow and confidence constraints ensure that $P^q(\cdot\mid x,a)$ is a probability distribution satisfying the confidence bounds at $(x,a)$. If $q(x,a)=0$, choose any transition distribution satisfying these bounds. Such a distribution exists by Appendix~\ref{app:estimation}. Since the confidence constraints apply separately to each state--action pair, the constructed kernel belongs to $\cP$. Nonnegativity gives $q(x,a)P^q(x'\mid x,a)=q(x,a,x')$ in both cases.
The initial state has probability $q(x_0)=1$. Suppose each state $x\in X_k$ is reached with probability $q(x)$, where $k<L-1$. The fixed-policy constraints and flow conservation then give
\begin{align*}
\Prob_{P^q,\pi}(x_{k+1}=x')
&=\sum_{x\in X_k,a}q(x)\pi(a\mid x)P^q(x'\mid x,a)\\
&=\sum_{x\in X_k,a}q(x,a,x')=q(x').
\end{align*}
By induction, $q(x)$ equals the probability of visiting each nonterminal state $x$. Thus $q^{P^q,\pi}(x,a,x')=q(x)\pi(a\mid x)P^q(x'\mid x,a)=q(x,a,x')$.
\end{proof}
In particular, each $\widehat q\in\Delta(\cP)$ is induced by $\pi^{\widehat q}$ and some $P'\in\cP$.
\paragraph{Optimistic margin search}
We introduce a scalar $z\in\mathbb R$ and write \eqref{eq:optimistic-margin-search} as the linear program
\begin{equation}
\begin{aligned}
\max_{q,z}\quad & z\\
\text{subject to}\quad
&q\in\Delta(\cP_{n-1}),\\
&(\widehat g_{n-1,i}-\xi_{n-1})^\top q+z\leq\alpha_i,
\qquad i\in[m],\\
&z\leq L.
\end{aligned}
\label{eq:optimistic-margin-lp}
\end{equation}
For fixed $q$, maximizing over $z$ recovers the objective in \eqref{eq:optimistic-margin-search}. This objective is continuous on the nonempty compact set $\Delta(\cP_{n-1})$, so both problems attain the same maximum.
\subsection{Estimators and Confidence Bounds}\label{app:estimation}
Let $\cF_t$ denote the $\sigma$-algebra generated by the interaction history through episode $t$ and the loss vectors $\ell_1,\ldots,\ell_t$. The Markov policies and mixing probabilities used in episode $t$ are $\cF_{t-1}$-measurable. We write $\E_t[Z]=\E[Z\mid\cF_{t-1},\ell_t]$. Conditional on $\cF_{t-1}$ and $\ell_t$, the cost matrix $G_t$ has distribution $\mathcal G$. It is independent of the policy draw and the random variables used to sample actions and state transitions.\par
Index the episodes used for estimation by $j$. Let $I_j(x,a)$ and $J_j(x,a,x')$ indicate whether episode $j$ in this sequence visits $(x,a)$ and takes the transition $(x,a,x')$, respectively. With $N(x,a)=\sum_j I_j(x,a)$ and $M(x,a,x')=\sum_jJ_j(x,a,x')$, define
\begin{equation}\begin{aligned}
\widehat g_i(x,a)&=\frac{\sum_jG_{j,i}(x,a)I_j(x,a)}{\max\{1,N(x,a)\}},\\
\widehat P(x'\mid x,a)&=\frac{M(x,a,x')}{\max\{1,N(x,a)\}}.
\label{eq:estimates}
\end{aligned}\end{equation}
Here $G_{j,i}(x,a)$ is the cost of constraint $i$ at $(x,a)$ in episode $j$ of the estimation sequence. For S-OPS, $j$ counts all episodes in the current phase, so $G_{j,i}=g_{j,i}$. During baseline search, $j$ counts exploration episodes, so $G_{j,i}=g_{\tau_j,i}$, where $\tau_j$ is their global episode index. For a logarithmic parameter $u$, set
\begin{align}
\xi(x,a)&=\min\left\{1,\sqrt{\frac{4u}{\max\{1,N(x,a)\}}}\right\},\label{eq:cost-radius}\\*
\epsilon(x,a,x')&=\min\left\{1,
2\sqrt{\frac{\widehat P(x'\mid x,a)u}{\max\{1,N(x,a)-1\}}}
+\frac{14u}{3\max\{1,N(x,a)-1\}}\right\}.
\label{eq:transition-radius}
\end{align}
The confidence set $\cP$ consists of all transition kernels $P'$ satisfying $|P'(x'\mid x,a)-\widehat P(x'\mid x,a)|\leq\epsilon(x,a,x')$ for $x\in X_k$, $a\in A$, $x'\in X_{k+1}$, and $0\leq k<L$. At every visited state--action pair, the empirical transition distribution satisfies these bounds. At an unvisited pair, every transition distribution satisfies them. Thus $\cP$ is nonempty. With the logarithmic parameters used below, initialization gives $\widehat G_0=0$ and $\xi_0(x,a)=1$ for every $(x,a)$. The set $\cP_0$ then contains every transition kernel allowed by the layered model.
For S-OPS, define the confidence event
\begin{equation}
\cE_T=\{P\in\cP_t,\ |\widehat G_t-G|\leq\Xi_t
\text{ for every }0\leq t\leq T\}.
\label{eq:sops-confidence-event}
\end{equation}
For baseline search, we allow the phase to continue beyond $T$ until a stopping condition is met. Let $\cE_{\mathrm{s}}$ be the event that $P\in\cP_{n-1}$ and $|\widehat G_{n-1}-G|\leq\Xi_{n-1}$ at initialization and after every update. The next lemma bounds the failure probability in each phase, with $\delta$ denoting its confidence parameter.
\begin{lemma}\label{lem:confidence}
Under the observation model in Section~\ref{sec:setting}, S-OPS with $u=\log\frac{8m|X|^2|A|T}{\delta}$ satisfies $\Prob(\cE_T)\geq1-\delta/8$, and baseline search with $u=\log\frac{m|X|^2|A|(n+1)^3}{\delta}$ satisfies $\Prob(\cE_{\mathrm{s}})\geq1-\delta/8$.
\end{lemma}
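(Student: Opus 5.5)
The plan is to prove the two parts of Lemma~\ref{lem:confidence} in parallel. Each confidence event is an intersection of a cost event and a transition event, taken over all $(x,a)$, all $x'$ and all $i\in[m]$. I would bound each family by a union bound of per-count concentration inequalities and spend $\delta/16$ on costs and $\delta/16$ on transitions.

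\textbf{Cost bounds.} Fix $i$, $(x,a)$, and a visit count $k\geq1$. Let $Y_1,Y_2,\ldots$ be the costs $G_{j,i}(x,a)$ observed at the successive visits to $(x,a)$ in the estimation sequence. Conditional on the history up to each visit, the cost matrix is drawn from $\mathcal G$ independently of whether and when the visit happens (Appendix~\ref{app:estimation}). A standard optional-skipping argument therefore makes $Y_1,Y_2,\ldots$ i.i.d.\ in $[0,1]$ with mean $g_i(x,a)$.

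Hoeffding then gives $|\frac1k\sum_{s\le k}Y_s-g_i(x,a)|\leq\sqrt{4u/k}$ with probability at least $1-2e^{-8u}$. When $N(x,a)=0$ the radius $\xi=1$ holds trivially, and the $\min\{1,\cdot\}$ truncation is harmless because costs lie in $[0,1]$.

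\textbf{Transition bounds.} For transitions I would apply the empirical Bernstein inequality of \citet{maurer2009} to the Bernoulli indicators $\ind{x_{k+1}=x'}$ at successive visits, which are i.i.d.\ with mean $P(x'\mid x,a)$ by the same skipping argument. The Bernoulli sample variance is at most $\widehat P(x'\mid x,a)$ (up to a factor $N/(N-1)$). This yields $|P-\widehat P|\leq 2\sqrt{\widehat P u/(N-1)}+14u/(3(N-1))$ with failure probability at most $2e^{-u}$, exactly the radius~\eqref{eq:transition-radius}. As in \citet[Lemma 2]{jin2020}, the factor $7/3$ of Maurer--Pontil is doubled.

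\textbf{Union bound for S-OPS.} In S-OPS, counts range over $k\le T$. The union is over $m|X||A|T$ cost events and $|X|^2|A|T$ transition events. With $u=\log(8m|X|^2|A|T/\delta)$, the total failure probability is at most $(2m|X||A|T+2|X|^2|A|T)e^{-u}\leq \delta/8\cdot\frac{(2m+2|X|)}{ m|X|}\cdot\frac{1}{8}$-type quantities. Some care with constants is needed here. Using $e^{-8u}$ for costs and $|X|\geq2$ keeps everything below $\delta/8$. If the constant is tight, I would fall back on the stronger exponent available for the cost term.

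\textbf{Union bound for baseline search.} The search phase may run past $T$, so the parameter depends on the count, $u_n=\log(m|X|^2|A|(n+1)^3/\delta)$. A union over all $n\geq1$ is then controlled by $\sum_n (n+1)^{-3}\leq \zeta(3)-1<1/4$. Multiplied by the per-$n$ factors, this should give at most $\delta/8$. The delicate point is that the radius at update $n$ uses $u_n$ but is evaluated at a count $N(x,a)\le n$. I would index the events by the pair (count $k$, update $n$), apply the bound at count $k$ with parameter $u_n$, and union over $k\le n$. This costs a factor $n$, which is absorbed by the cube since $n\cdot n^{-3}$ is summable.

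\textbf{Main obstacle.} I expect the main obstacle to be this bookkeeping: making the optional-skipping/i.i.d.\ reduction rigorous under adaptive and randomized policy selection (especially baseline episodes that are skipped in the estimation sequence), and checking that the numerical constants close at $\delta/8$.
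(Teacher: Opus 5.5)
Your overall route matches the paper's. You reduce the costs and next-state indicators at successive visits of $(x,a)$ to i.i.d.\ samples, apply Hoeffding and the Maurer--Pontil empirical Bernstein bound at each fixed visit count, and take a union bound. The gap is in that union bound: as you set it up, it does not reach $\delta/8$ in either part. The lemma fixes both $u$ and $\delta/8$, so there is no slack to absorb constants.

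\textbf{S-OPS.} You count $|X|^2|A|T$ transition events, each failing with probability $2e^{-u}$. This gives $2|X|^2|A|T\cdot\delta/(8m|X|^2|A|T)=\delta/(4m)$, which is $\delta/4$ when $m=1$. Your proposed fallback, the exponent $e^{-8u}$ for costs, does not help, because the transition term is the bottleneck. The missing ingredient is the loop-free structure:
\begin{itemize}
\item costs only need the $|X|-1$ non-terminal states;
\item transitions only need consecutive-layer triples, and splitting layers into even and odd ones gives $\sum_{k}|X_k||X_{k+1}|\le|X|^2/4$.
\end{itemize}
Together with $|X|-1\le|X|^2/4$ and $m\ge1$, the number of cost and transition entries is at most $m|X|^2|A|/2$. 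Multiplying by $T$ positive counts and $2e^{-u}$ then gives exactly $\delta/8$.

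\textbf{Baseline search.} Indexing events by pairs (count $k$, update $n$) and paying a factor $n$ fails even with the sharp entry count. With per-entry failure $2e^{-u_n}$ and $m|X|^2|A|/2$ entries, the total is $\delta\sum_{n\ge2}(n-1)(n+1)^{-3}\approx0.24\,\delta>\delta/8$. Summability of $n\cdot n^{-3}$ is not enough; the constant matters.

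The paper instead indexes by the visit count $r\ge1$ alone, with $u_r=\log(m|X|^2|A|(r+2)^3/\delta)$. If $N_{n-1}(x,a)=r$, then $n\ge r+1$, so $u_n\ge u_r$. Both radii in \eqref{eq:cost-radius}--\eqref{eq:transition-radius} are nondecreasing in $u$. Hence the single event at count $r$ with parameter $u_r$ implies the confidence bound at every update where the count equals $r$. This removes the factor $n$ and gives total failure $\delta\sum_{k\ge3}k^{-3}<\delta\int_2^\infty x^{-3}\,dx=\delta/8$.
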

\begin{proof}
Fix $(x,a)\in X_k\times A$. For $1\leq r\leq N(x,a)$, let $j_r=\min\{j:\sum_{s=1}^j I_s(x,a)=r\}$ be the index of its $r$-th visit. By the observation model and the use of a policy fixed throughout each episode, visiting $(x,a)$ is independent of the current cost matrix conditional on past observations. For each fixed $i$, we can therefore view the costs $G_{j_r,i}(x,a)$ as successive samples from an i.i.d. sequence with mean $g_i(x,a)$. For each fixed $x'\in X_{k+1}$, the indicators $J_{j_r}(x,a,x')$ are similarly obtained from successive Bernoulli samples with mean $P(x'\mid x,a)$. We apply concentration at each fixed visit count $r$ and then take a union bound over the counts.
\paragraph{Cost estimates}
For each $i\in[m]$, let $\widehat g_i(x,a)$ denote the average of the first $r\geq1$ cost samples. Hoeffding's inequality gives, for any $u>0$,
\begin{equation*}
\Prob\!\left(|\widehat g_i(x,a)-g_i(x,a)|>\sqrt{4u/r}\right)\leq2e^{-8u}\leq2e^{-u}.
\end{equation*}
With no observations, $\widehat g_i(x,a)=0$ and $\xi(x,a)=1$. Since both the estimate and the mean cost belong to $[0,1]$, truncating the confidence bound at one preserves the inequality.
\paragraph{Transition probabilities}
Fix $x'\in X_{k+1}$ and write $p=P(x'\mid x,a)$. For $r\geq2$, let $\widehat p$ be the empirical mean of the first $r$ next-state indicators. Their sample variance is $r\widehat p(1-\widehat p)/(r-1)$. We apply the empirical Bernstein inequality of \citet[Theorem~4]{maurer2009} to the indicator and its complement, each with failure probability $e^{-u}$. With probability at least $1-2e^{-u}$,
\begin{align*}
|\widehat p-p|
&\leq\sqrt{\frac{2\widehat p(1-\widehat p)(u+\log2)}{r-1}}
+\frac{7(u+\log2)}{3(r-1)}\\
&\leq2\sqrt{\frac{\widehat p u}{r-1}}+\frac{14u}{3(r-1)}.
\end{align*}
Both choices of $u$ satisfy $u\geq1$. The second inequality then follows from $1-\widehat p\leq1$ and $u+\log2\leq2u$. Since $|\widehat p-p|\leq1$, truncating the bound at one gives \eqref{eq:transition-radius}. For $r=0,1$, $\epsilon(x,a,x')=1$.
\paragraph{Union bound}
For each coordinate and fixed positive visit count, the confidence bound fails with probability at most $2e^{-u}$. The layer sizes satisfy
\begin{align*}
\sum_{k=0}^{L-1}|X_k||X_{k+1}|
&\leq\left(\sum_{j=0}^{\lfloor L/2\rfloor}|X_{2j}|\right)
\left(\sum_{j=0}^{\lfloor(L-1)/2\rfloor}|X_{2j+1}|\right)
\leq\frac{|X|^2}{4}.
\end{align*}
Together with $|X|-1\leq|X|^2/4$ and $m\geq1$, this bounds the total number of cost and transition entries by
\[
m|A|(|X|-1)+|A|\sum_{k=0}^{L-1}|X_k||X_{k+1}|
\leq\frac{m|X|^2|A|}{2}.
\]
For S-OPS, take $u=\log\frac{8m|X|^2|A|T}{\delta}$ and sum over at most $T$ positive visit counts. The resulting failure probability satisfies
\[
\Prob(\cE_T^c)\leq m|X|^2|A|T\cdot\frac{\delta}{8m|X|^2|A|T}=\frac\delta8.
\]
For the search estimates, use $u=\log(m|X|^2|A|(r+2)^3/\delta)$ at visit count $r\geq1$. If $N_{n-1}(x,a)=r$, then $n\geq r+1$. The logarithmic parameter used at search index $n$ is therefore at least this large, so the same confidence bounds apply. The total failure probability over all visit counts is at most
\begin{align*}
m|X|^2|A|\sum_{r=1}^\infty\frac{\delta}{m|X|^2|A|(r+2)^3}
&=\delta\sum_{k=3}^\infty\frac1{k^3}\\
&<\delta\int_2^\infty x^{-3}\,dx=\frac\delta8.
\end{align*}
Thus $\Prob(\cE_{\mathrm{s}})\geq1-\delta/8$.
\end{proof}
\subsection{Occupancy Error Bounds}\label{app:occupancy-errors}
For a fixed policy, we bound the difference between its true occupancy measure and those induced by kernels in the confidence set. We also bound the error in the upper occupancy bounds used by S-OPS.
For a transition confidence set $\cP$ with confidence bounds $\epsilon$ and $x\in X_k$, define
\begin{equation}
b(x,a)=\min\left\{2,2\sum_{x'\in X_{k+1}}\epsilon(x,a,x')\right\}.
\label{eq:row-width-definition}
\end{equation}
For a Markov policy $\pi$, write $q=q^{P,\pi}$ and $u^\pi(x,a)=\max_{P'\in\cP}q^{P',\pi}(x,a)$.

\begin{lemma}\label{lem:fixed-policy-deviation}
If $P\in\cP$, then $\norm{q^{P',\pi}-q}_1\leq Lb^\top q$
for all $P'\in\cP$, and $\norm{u^\pi-q}_1\leq |X|b^\top q$.
\end{lemma}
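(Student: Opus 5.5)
We prove the first inequality with the standard simulation (telescoping) argument for layered MDPs, and the second by applying the first to each coordinate separately with its own maximizing kernel. Fix $P'\in\cP$ and write $q'=q^{P',\pi}$. For each layer $k$, consider the state marginals $q'(x)$ and $q(x)$ for $x\in X_{k+1}$. Both are obtained by propagating the previous layer through $\pi$ and the respective kernel. Hence
\[
q'(x')-q(x')=\sum_{x\in X_k,a}\bigl(q'(x,a)-q(x,a)\bigr)P'(x'\mid x,a)+\sum_{x\in X_k,a}q(x,a)\bigl(P'(x'\mid x,a)-P(x'\mid x,a)\bigr).
\]
Summing absolute values over $x'\in X_{k+1}$, the first term is at most the layer-$k$ $\ell_1$ error, since $P'(\cdot\mid x,a)$ is a distribution. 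For the second term we use that both $P$ and $P'$ lie in $\cP$. This gives $|P'(x'\mid x,a)-P(x'\mid x,a)|\leq2\epsilon(x,a,x')$, and also the trivial bound $\norm{P'(\cdot\mid x,a)-P(\cdot\mid x,a)}_1\leq2$, so the row error is at most $b(x,a)$.

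Let $D_k=\sum_{x\in X_k,a}|q'(x,a)-q(x,a)|$. Since the policy is shared, $D_k$ equals the state-marginal error on $X_k$, and $D_0=0$. The recursion then gives
\[
D_{k+1}\leq D_k+\sum_{x\in X_k,a}q(x,a)b(x,a),
\]
and unrolling it yields $D_k\leq\sum_{h<k}\sum_{x\in X_h,a}q(x,a)b(x,a)\leq b^\top q$. Summing over the $L$ layers $k=0,\dots,L-1$ gives $\norm{q'-q}_1\leq Lb^\top q$.

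For the second claim, note that for each $(x,a)$ with $x\in X_k$ there is a maximizer $P'_{x,a}\in\cP$ with $u^\pi(x,a)=q^{P'_{x,a},\pi}(x,a)$. Since $P\in\cP$, we have $u^\pi(x,a)\geq q(x,a)$, so $|u^\pi(x,a)-q(x,a)|=q^{P'_{x,a},\pi}(x,a)-q(x,a)$. The argument above bounds the state-level deviation at layer $k$ uniformly: for any $P'\in\cP$ and any $y\in X_k$,
\[
|q^{P',\pi}(y)-q(y)|\leq\sum_{h<k}\sum_{x\in X_h,a}q(x,a)b(x,a)\leq b^\top q.
\]
Multiplying by $\pi(a\mid x)$ and summing over actions gives $\sum_a|u^\pi(x,a)-q(x,a)|\leq b^\top q$ for each state $x$, since each action's maximizing kernel satisfies the same state-level bound. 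Summing over the at most $|X|$ states then yields $\norm{u^\pi-q}_1\leq|X|b^\top q$.

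The main point needing care is the second bound, because the maximizing kernel varies across coordinates. A direct sum of the per-kernel $\ell_1$ bound would cost $|X||A|L$. We avoid this by using only the pointwise state-level bound, which holds uniformly over kernels and introduces no factor of $L$, and then summing over states rather than over layers. The remaining facts (the truncation $b\leq2$ and $P\in\cP$) were established earlier.
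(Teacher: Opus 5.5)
Your proof is correct and follows essentially the same route as the paper. The recursion $D_{k+1}\leq D_k+\sum_{x\in X_k,a}q(x,a)b(x,a)$ with $D_0=0$, summed over the $L$ layers, gives the first bound. For the second, the paper uses the same uniform-in-$P'$ state-level bound, moves it to state--action pairs via $q^{P',\pi}(y,a)=\pi(a\mid y)q^{P',\pi}(y)$, and sums over at most $|X|$ states; the paper takes the maximum over kernels at the state level while you allow a separate maximizer per action, which is equivalent because the bound holds uniformly over $\cP$.
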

\begin{proof}
Fix $P'\in\cP$ and write $q'=q^{P',\pi}$. Since both transition kernels belong to $\cP$, the confidence constraints give $\norm{P'(\cdot\mid x,a)-P(\cdot\mid x,a)}_1\leq b(x,a)$. For $0\leq k<L-1$ and $y\in X_{k+1}$, the flow constraints give
\begin{align*}
q'(y)-q(y)
={}&\sum_{x\in X_k,a}\bigl(q'(x,a)-q(x,a)\bigr)P'(y\mid x,a)+\sum_{x\in X_k,a}q(x,a)\bigl(P'(y\mid x,a)-P(y\mid x,a)\bigr).
\end{align*}
Both occupancies use $\pi$, so $q'(x,a)-q(x,a)=\pi(a\mid x)(q'(x)-q(x))$. Taking absolute values and summing over $y\in X_{k+1}$ gives
\[
\sum_{y\in X_{k+1}}|q'(y)-q(y)|
\leq\sum_{x\in X_k}|q'(x)-q(x)|
+\sum_{x\in X_k,a}q(x,a)b(x,a).
\]
The initial distributions agree. Iterating this inequality and summing over $k$ therefore yields
\begin{align*}
\norm{q'-q}_1
&=\sum_{k=0}^{L-1}\sum_{x\in X_k}|q'(x)-q(x)|\\
&\leq\sum_{h=0}^{L-2}(L-1-h)\sum_{x\in X_h,a}q(x,a)b(x,a)
\leq Lb^\top q.
\end{align*}
For the upper occupancy bound, fix $y\in X_k$. The same recurrence applies to every $P'\in\cP$, and $P\in\cP$, so
\[
0\leq\max_{P'\in\cP}q^{P',\pi}(y)-q(y)
\leq\sum_{h=0}^{k-1}\sum_{x\in X_h,a}q(x,a)b(x,a).
\]
Since $q^{P',\pi}(y,a)=\pi(a\mid y)q^{P',\pi}(y)$, summing first over actions and then over states gives
\begin{align*}
\norm{u^\pi-q}_1
&=\sum_{k=0}^{L-1}\sum_{y\in X_k}
\left(\max_{P'\in\cP}q^{P',\pi}(y)-q(y)\right)\\
&\leq\sum_{k=0}^{L-1}|X_k|\sum_{h=0}^{k-1}\sum_{x\in X_h,a}q(x,a)b(x,a)
\leq |X|b^\top q.
\end{align*}
\end{proof}
We next bound $b(x,a)$ in terms of the visit count $N(x,a)$. If $r=N(x,a)\geq2$, $x\in X_k$, and the logarithmic parameter is $u\geq1$, then Cauchy--Schwarz and $\sum_{x'}\widehat P(x'\mid x,a)=1$ give
\begin{equation}
\begin{aligned}
b(x,a)
&\leq\min\left\{2,
4\sqrt{\frac{|X_{k+1}|u}{r-1}}
+\frac{28|X_{k+1}|u}{3(r-1)}\right\}\\
&\leq\mathcal O\!\left(\min\left\{1,
\sqrt{\frac{|X_{k+1}|u}{\max\{1,r\}}}\right\}\right).
\label{eq:row-radius-rate}
\end{aligned}
\end{equation}
For the last inequality, use $r-1\geq r/2$ and separate the cases $|X_{k+1}|u/r\leq1$ and $|X_{k+1}|u/r>1$. For $r=0,1$, the last bound holds because $b(x,a)\leq2$ and $|X_{k+1}|u\geq1$.
\paragraph{Cumulative visit counts}
For a finite sequence of episodes, write $N_t(x,a)=\sum_{s=1}^t I_s(x,a)$. For each $(x,a)$,
\begin{equation}
\sum_{t=1}^T\frac{I_t(x,a)}{\sqrt{\max\{1,N_{t-1}(x,a)\}}}
=\sum_{r=0}^{N_T(x,a)-1}\frac1{\sqrt{\max\{1,r\}}}
\leq3\sqrt{N_T(x,a)},
\label{eq:visit-count-sum}
\end{equation}
where an empty sum is zero. The equality follows by indexing the visits to $(x,a)$, and the inequality follows by integration. During baseline search, we apply this bound to the $n$ exploration episodes.

\section{Analysis of S-OPS}\label{app:sops}
We first establish zero constraint violation on $\cE_T$. We then combine the estimation bounds with the mirror descent analysis to prove the regret guarantee in Theorem~\ref{thm:sops}.

\subsection{Algorithm Details}\label{app:sops-specification}
Algorithm~\ref{alg:sops} uses baseline cost upper bounds $\beta$ satisfying Assumption~\ref{ass:baseline} and the parameters in Theorem~\ref{thm:sops}. The estimates and confidence sets are defined in Appendix~\ref{app:estimation}.

\begin{algorithm}[!t]
\caption{S-OPS}\label{alg:sops}
\small
\begin{algorithmic}[1]
\Require $X,A,\alpha,T,\delta,\eta,\gamma,\pi^\diamond,\beta$
\For{$k\in\{0,\ldots,L-1\}$, $(x,a,x')\in X_k\times A\times X_{k+1}$}
  \State $N_0(x,a)\gets0$, $M_0(x,a,x')\gets0$
  \State $\widehat q_1(x,a,x')\gets\dfrac{1}{|X_k||A||X_{k+1}|}$
\EndFor
\State $\pi_1\gets\begin{cases}
  \pi^\diamond, & \text{w. probability }\lambda_0:=\displaystyle\max_{i\in[m]}\left\{\frac{L-\alpha_i}{L-\beta_i}\right\},\\[2pt]
  \pi^{\widehat q_1}, & \text{w. probability }1-\lambda_0
\end{cases}$
\For{$t\in[T]$}
  \State Execute $\pi_t$ for one episode (Section~\ref{sec:setting}) and receive feedback
  \State Build \emph{upper occupancy bounds} for $k\in\{0,\ldots,L-1\}$:
  \Statex \hfill$\displaystyle u_t(x_k,a_k)\gets\max_{P'\in\cP_{t-1}}q^{P',\pi_t}(x_k,a_k)$\hfill\mbox{}
  \State Build \emph{optimistic loss estimator} for $(x,a)\in X\times A$:
  \Statex \hfill$\displaystyle\widehat\ell_t(x,a)\gets\begin{cases}
    \dfrac{\ell_t(x,a)}{u_t(x,a)+\gamma}, & \text{if }I_t(x,a)=1,\\[2pt]
    0, & \text{otherwise}
  \end{cases}$\hfill\mbox{}
  \For{$k\in\{0,\ldots,L-1\}$}
    \State $N_t(x_k,a_k)\gets N_{t-1}(x_k,a_k)+1$
    \State $M_t(x_k,a_k,x_{k+1})\gets M_{t-1}(x_k,a_k,x_{k+1})+1$
  \EndFor
  \State Build $\cP_t$, $\widehat G_t$, and $\Xi_t$ using Appendix~\ref{app:estimation}
  \State Build \emph{unconstrained occupancy} for all $(x,a,x')$:
  \Statex \hfill$\displaystyle\widetilde q_{t+1}(x,a,x')\gets\widehat q_t(x,a,x')e^{-\eta\widehat\ell_t(x,a)}$\hfill\mbox{}
  \If{$\operatorname{PROJ}(\widetilde q_{t+1},\widehat G_t,\Xi_t,\cP_t)$ \emph{is feasible}}
    \State $\widehat q_{t+1}\gets\operatorname{PROJ}(\widetilde q_{t+1},\widehat G_t,\Xi_t,\cP_t)$
    \State $\widehat\pi_{t+1}\gets\pi^{\widehat q_{t+1}}$
    \State Build $\widehat u_{t+1}\in[0,1]^{|X\times A|}$ so that for all $(x,a)$:
    \Statex \hfill$\displaystyle\widehat u_{t+1}(x,a)\gets\max_{P'\in\cP_t}q^{P',\widehat\pi_{t+1}}(x,a)$\hfill\mbox{}
    \State Define $\overline m:=\{i\in[m]:(\widehat g_{t,i}+\xi_t)^\top\widehat u_{t+1}>\alpha_i\}$
    \State $\sigma\gets\displaystyle\max_{i\in\overline m}\left\{\frac{\min\{(\widehat g_{t,i}+\xi_t)^\top\widehat u_{t+1},L\}-\alpha_i}{\min\{(\widehat g_{t,i}+\xi_t)^\top\widehat u_{t+1},L\}-\beta_i}\right\}$
    \State $\lambda_t\gets\begin{cases}
      \sigma, & \text{if }\exists i\in[m]:(\widehat g_{t,i}+\xi_t)^\top\widehat u_{t+1}>\alpha_i,\\
      0, & \text{if }\forall i\in[m]:(\widehat g_{t,i}+\xi_t)^\top\widehat u_{t+1}\leq\alpha_i
    \end{cases}$
  \Else
    \State $\widehat q_{t+1}\gets\text{take any }q\in\Delta(\cP_t)$, $\quad\lambda_t\gets1$
  \EndIf
  \State $\pi_{t+1}\gets\begin{cases}
    \pi^\diamond, & \text{with probability }\lambda_t,\\
    \pi^{\widehat q_{t+1}}, & \text{with probability }1-\lambda_t
  \end{cases}$
\EndFor
\end{algorithmic}
\end{algorithm}

\paragraph{Loss estimation}
Write $\widehat\pi_t=\pi^{\widehat q_t}$. In episode $t$, the randomized policy $\pi_t$ selects $\pi^\diamond$ with probability $\lambda_{t-1}$ and $\widehat\pi_t$ otherwise, then follows the selected policy for all $L$ steps. For every transition kernel $P'$, its occupancy satisfies
\begin{equation}
q^{P',\pi_t}=\lambda_{t-1}q^{P',\pi^\diamond}
+(1-\lambda_{t-1})q^{P',\widehat\pi_t}.
\label{eq:mixture}
\end{equation}
Let $I_t(x,a)$ indicate whether episode $t$ visits $(x,a)$. We estimate the loss using an upper occupancy bound for this mixture:
\begin{equation}\begin{aligned}
u_t(x,a)&=\max_{P'\in\cP_{t-1}}q^{P',\pi_t}(x,a),\\
\widehat\ell_t(x,a)&=\frac{\ell_t(x,a)I_t(x,a)}{u_t(x,a)+\gamma}.
\label{eq:sops-loss-estimator}
\end{aligned}\end{equation}
Counts at unvisited state--action pairs and transitions retain their previous values. We update $\widehat G_t,\Xi_t,\cP_t$ using observations from all episodes $1,\ldots,t$, including baseline episodes, with the estimates and confidence bounds in \eqref{eq:estimates}--\eqref{eq:transition-radius} and $u=\log\frac{8m|X|^2|A|T}{\delta}$.

\paragraph{Occupancy update}
We use $\widehat\ell_t$ in the multiplicative update
\begin{equation}
\widetilde q_{t+1}(x,a,x')=\widehat q_t(x,a,x')e^{-\eta\widehat\ell_t(x,a)},
\label{eq:sops-update}
\end{equation}
for every valid transition. The projection uses the unnormalized relative entropy
\begin{equation}\begin{aligned}
D(q\|q')&=\sum_{x,a,x'}q(x,a,x')\log\frac{q(x,a,x')}{q'(x,a,x')}-\sum_{x,a,x'}(q(x,a,x')-q'(x,a,x')).
\label{eq:entropy}
\end{aligned}\end{equation}
We use $0\log(0/z)=0$ for $z\geq0$ and $z\log(z/0)=+\infty$ for $z>0$.
Write $\Xi_t=(\xi_t,\ldots,\xi_t)$. The updated cost and transition estimates define the optimistic projection set
\begin{equation}
\cC_t=\{q\in\Delta(\cP_t):(\widehat G_t-\Xi_t)^\top q\leq\alpha\}.
\label{eq:projection-set}
\end{equation}
We write $\operatorname{PROJ}(\widetilde q_{t+1},\widehat G_t,\Xi_t,\cP_t)$ for a minimizer of $D(q\|\widetilde q_{t+1})$ over $\cC_t$. When the projection problem attains a finite minimum, Algorithm~\ref{alg:sops} uses a minimizing occupancy measure. Otherwise, it sets $\lambda_t=1$.

\paragraph{Cost bounds}
When the projection problem attains a finite minimum, let
\[
\widehat u_{t+1}(x,a)=\max_{P'\in\cP_t}q^{P',\widehat\pi_{t+1}}(x,a).
\]
We then choose the baseline probability for the next episode using the pessimistic cost bounds:
\begin{align}
w_{t+1,i}&=\min\{(\widehat g_{t,i}+\xi_t)^\top\widehat u_{t+1},L\},\label{eq:sops-cost}\\*
\lambda_t&=\begin{cases}
0, & w_{t+1}\leq\alpha,\\[2pt]
\displaystyle\max_{i:\,w_{t+1,i}>\alpha_i}
\frac{w_{t+1,i}-\alpha_i}{w_{t+1,i}-\beta_i}, & \text{otherwise}.
\end{cases}
\label{eq:sops-mixing}
\end{align}
We set the auxiliary value $\sigma$ in Algorithm~\ref{alg:sops} to zero when $\overline m=\varnothing$, consistently with the zero branch for $\lambda_t$. The algorithm initializes this rule with the cost upper bound $L$. For the analysis, also define $\widehat u_1(x,a)=\max_{P'\in\cP_0}q^{P',\widehat\pi_1}(x,a)$.

\subsection{Zero Constraint Violation}

\begin{lemma}\label{lem:sops-safety}
On $\cE_T$, every projection in S-OPS attains a finite minimum. Moreover, for every $t\in[T]$, we have $G^\top q_t\leq\alpha$, $1-\lambda_{t-1}\geq d/L$, and $\widehat q_t\leq(L/d)u_t$.
\end{lemma}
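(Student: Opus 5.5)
We argue on $\cE_T$ and check the three properties together with projection feasibility at each step.

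\emph{Projection feasibility.} On $\cE_T$ we have $P\in\cP_t$ and $\widehat G_t-\Xi_t\leq G$. Hence $q^{P,\pi^\diamond}\in\Delta(\cP_t)$ and
\[
(\widehat G_t-\Xi_t)^\top q^{P,\pi^\diamond}\leq G^\top q^{P,\pi^\diamond}\leq\beta<\alpha,
\]
so $\cC_t$ is a nonempty compact polytope. The iterate $\widetilde q_{t+1}$ is strictly positive on every valid transition: $\widehat q_1$ is uniform, and multiplicative updates preserve positivity, provided each projected $\widehat q_t$ has support on the valid transitions, which I would argue inductively. Therefore $D(\cdot\|\widetilde q_{t+1})$ is finite and continuous on $\cC_t$, and it attains a finite minimum. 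In particular, the fallback branch $\lambda_t=1$ is never triggered.

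\emph{Lower bound on $1-\lambda_{t-1}$.} For $t=1$, $\lambda_0=\max_i(L-\alpha_i)/(L-\beta_i)$, so $1-\lambda_0=\min_i(\alpha_i-\beta_i)/(L-\beta_i)\geq d/L$. For $t\geq2$, either $\lambda_{t-1}=0$, or
\[
1-\lambda_{t-1}=\min_{i:\,w_{t,i}>\alpha_i}\frac{\alpha_i-\beta_i}{w_{t,i}-\beta_i}.
\]
Using $w_{t,i}\leq L$ and $\beta_i\geq0$, each ratio is at least $d/L$.

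\emph{Safety.} By \eqref{eq:mixture} with $P'=P$, we have $G^\top q_t=\lambda_{t-1}G^\top q^{P,\pi^\diamond}+(1-\lambda_{t-1})G^\top q^{P,\widehat\pi_t}$. Since $P\in\cP_{t-1}$, the true occupancy satisfies $q^{P,\widehat\pi_t}\leq\widehat u_t$ coordinatewise, and $g_i\leq\widehat g_{t-1,i}+\xi_{t-1}$. Together with the trivial bound $g_i^\top q\leq L$, this gives $g_i^\top q^{P,\widehat\pi_t}\leq w_{t,i}$, with $w_{1,i}=L$ at initialization. Hence $G^\top q_t\leq\lambda_{t-1}\beta+(1-\lambda_{t-1})w_t$, and it remains to check each coordinate:
\begin{itemize}
\item If $w_{t,i}\leq\alpha_i$, the right-hand side is a convex combination of two quantities at most $\alpha_i$.
\item If $w_{t,i}>\alpha_i$, the bound is at most $\alpha_i$ exactly when $\lambda_{t-1}\geq(w_{t,i}-\alpha_i)/(w_{t,i}-\beta_i)$, which holds by the maximum in \eqref{eq:sops-mixing}.
\end{itemize}
For $t=1$ the same computation applies with $w_1=L$.

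\emph{Ratio bound.} By Lemma~\ref{lem:realization}, $\widehat q_t=q^{P',\widehat\pi_t}$ for some $P'\in\cP_{t-1}$. Then \eqref{eq:mixture} gives
\[
(1-\lambda_{t-1})\widehat q_t\leq q^{P',\pi_t}\leq u_t,
\]
and dividing by $1-\lambda_{t-1}\geq d/L$ yields $\widehat q_t\leq(L/d)u_t$.

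The main obstacle is the projection step: we must ensure the relative-entropy minimizer exists with finite value. This needs the support of $\widetilde q_{t+1}$ to cover $\cC_t$, where an entry with $z\log(z/0)$ would make the divergence infinite. I would handle it by showing inductively that all iterates have full support on valid transitions. If that fails, I would instead note that the baseline's occupancy lies in $\cC_t$ and exhibit some feasible point within the support.
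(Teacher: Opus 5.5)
\textbf{There is a genuine gap in the projection step, which you flagged yourself; the other three parts are correct.} Your bound on $1-\lambda_{t-1}$, the safety argument, and the ratio bound $\widehat q_t\leq(L/d)u_t$ follow the paper's proof. The paper bounds $\lambda_{t-1}\leq\lambda_0$ via monotonicity of $s\mapsto(s-\alpha_i)/(s-\beta_i)$, which is equivalent to your direct estimate.

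The gap is in the first claim, that every projection attains a finite minimum. You reduce it to the assertion that each projected iterate $\widehat q_t$ is positive on every valid transition, and you defer this to an induction you do not carry out. The inductive step is the whole difficulty: nothing in your argument shows that the relative-entropy minimizer over $\cC_t$ keeps every coordinate positive. If some coordinate vanished, $\widetilde q_{t+2}$ would vanish there too. Then $D(\cdot\|\widetilde q_{t+2})$ would be $+\infty$ on part of $\cC_{t+1}$, and your finiteness/continuity argument would no longer apply. Your fallback, exhibiting a feasible point such as $q^{P,\pi^\diamond}$ inside the support, has the same problem. Knowing that this point lies in the support of $\widetilde q_{t+1}$ again requires knowing that earlier projections did not zero out its coordinates.

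The missing idea is the infinite slope of $z\log z$ at $0$. Let $r$ minimize $D(\cdot\|\widetilde q_{t+1})$ over $\cC_t$, and let $q\in\cC_t$ have $D(q\|\widetilde q_{t+1})<\infty$ and be positive somewhere $r$ vanishes. Then $r_s=(1-s)r+sq\in\cC_t$ satisfies $D(r_s\|\widetilde q_{t+1})-D(r\|\widetilde q_{t+1})=s\log s\sum_{j:r_j=0}q_j+\mathcal O(s)<0$ for small $s>0$, contradicting minimality.

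The paper applies this to a fixed feasible comparator $q$, which lies in $\cC_t$ on $\cE_T$. It then propagates $D(q\|\widehat q_t)<\infty$ by induction. Existence of a finite minimum then needs only compactness, lower semicontinuity, and finiteness at $q$. That same finiteness is reused later in the mirror-descent analysis.

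Your full-support route can also be completed with this perturbation argument. You would additionally need a point of $\cC_t$ that is positive on every valid transition. One such point is a small-weight mixture of $q^{P,\pi^\diamond}$ with the uniform-policy occupancy under a strictly positive kernel in $\cP_t$:
\begin{itemize}
\item the baseline occupancy $q^{P,\pi^\diamond}$ satisfies the optimistic constraint strictly, since $\beta<\alpha$, so a small mixture stays in $\cC_t$;
\item a strictly positive kernel in $\cP_t$ exists because every width $\epsilon(x,a,x')$ is positive.
\end{itemize}
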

\begin{proof}
Fix any $q\in\Delta(P)$ with $G^\top q\leq\alpha$. On $\cE_T$, we have $q\in\Delta(\cP_t)$ and
\[
(\widehat G_t-\Xi_t)^\top q\leq G^\top q\leq\alpha.
\]
Thus $q\in\cC_t$ at every episode.
\paragraph{Projection step}
At initialization, $D(q\|\widehat q_1)<\infty$ because $\widehat q_1$ is positive on every valid transition. Suppose $D(q\|\widehat q_t)<\infty$. The multiplicative update has the same support, so $D(q\|\widetilde q_{t+1})<\infty$. Since $\cC_t$ is compact and the objective is lower semicontinuous and finite at $q$, the projection attains a finite minimum. If a minimizing occupancy $r$ were zero at a coordinate where $q$ is positive, the feasible perturbation $r_s=(1-s)r+sq$ would satisfy
\[
D(r_s\|\widetilde q_{t+1})-D(r\|\widetilde q_{t+1})
=s\log s\sum_{j:\,r_j=0}q_j+\mathcal O(s)<0
\]
for sufficiently small $s>0$, where $j$ indexes valid transitions. The negative $s\log s$ term contradicts minimality. Hence $D(q\|\widehat q_{t+1})<\infty$. Induction establishes this property at every iteration for every feasible comparator.
\paragraph{Exploration probability}
For every constraint with $w_{t,i}>\alpha_i$, the function $s\mapsto(s-\alpha_i)/(s-\beta_i)$ is increasing on $(\alpha_i,L]$. Hence $\lambda_{t-1}\leq\max_i(L-\alpha_i)/(L-\beta_i)=\lambda_0$. Therefore
\[
1-\lambda_{t-1}\geq1-\lambda_0
=\min_i\frac{\alpha_i-\beta_i}{L-\beta_i}\geq\frac dL.
\]
Choose $P_t\in\cP_{t-1}$ such that $\widehat q_t=q^{P_t,\widehat\pi_t}$. Equation~\eqref{eq:mixture} gives
\begin{equation}
\widehat q_t=q^{P_t,\widehat\pi_t}
\leq\frac{q^{P_t,\pi_t}}{1-\lambda_{t-1}}
\leq\frac Ld q^{P_t,\pi_t}\leq\frac Ld u_t.
\label{eq:sops-domination}
\end{equation}
\paragraph{Zero constraint violation}
In the first episode, $g_i^\top q_1\leq\lambda_0\beta_i+(1-\lambda_0)L\leq\alpha_i$. At every later episode, the confidence bounds imply $g_i^\top q^{P,\widehat\pi_t}\leq w_{t,i}$. If $w_{t,i}\leq\alpha_i$, then $g_i^\top q_t\leq\lambda_{t-1}\beta_i+(1-\lambda_{t-1})w_{t,i}\leq\alpha_i$. Otherwise $\lambda_{t-1}\geq(w_{t,i}-\alpha_i)/(w_{t,i}-\beta_i)$, so
\begin{align*}
g_i^\top q_t
&\leq\lambda_{t-1}\beta_i+(1-\lambda_{t-1})w_{t,i}\\
&=w_{t,i}-\lambda_{t-1}(w_{t,i}-\beta_i)\leq\alpha_i.
\end{align*}
\end{proof}
\subsection{Occupancy-Weighted Estimation Errors}\label{app:sops-errors}\label{app:weighted-occupancy}
We prove Lemma~\ref{lem:main-weighted-errors} by combining bounds on weighted occupancy errors and cumulative cost confidence widths.

Suppose the learner selects between Markov policies $\pi_{t,0},\pi_{t,1}$ with probabilities $p_{t,0},p_{t,1}$, all $\cF_{t-1}$-measurable. The resulting occupancy measure is $q_t=\sum_{j=0}^1p_{t,j}q^{P,\pi_{t,j}}$. Construct the S-OPS confidence sets using observations from all episodes and set $u_{t,j}(x,a)=\max_{P'\in\cP_{t-1}}q^{P',\pi_{t,j}}(x,a)$.
\begin{lemma}[Weighted occupancy error bounds]\label{lem:weighted-occupancy}
With probability at least $1-3\delta/8$, $\cE_T$ holds and
\[
\begin{aligned}
&\sum_{t=1}^T\sum_{j=0}^1p_{t,j}
\left(\norm{u_{t,j}-q^{P,\pi_{t,j}}}_1+
\norm{q^{P_{t,j},\pi_{t,j}}-q^{P,\pi_{t,j}}}_1\right)\\
&\qquad\leq\mathcal O\!\left(
L|X|\sqrt{|A|T\log\frac{8m|X|^2|A|T}{\delta}}
+|X|^3|A|\log^2\frac{8m|X|^2|A|T}{\delta}\right)
\end{aligned}
\]
for all $P_{t,j}\in\cP_{t-1}$. On the same event, the left-hand side is also at most
$\mathcal O\!\left(|X|^2\sqrt{|A|T\log\frac{8m|X|^2|A|T}{\delta}}\right)$.
\end{lemma}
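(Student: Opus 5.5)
The plan is to reduce both terms to occupancy-weighted sums of the row widths $b_{t-1}$ from \eqref{eq:row-width-definition}, and then to a sum of widths along observed trajectories. I work on $\cE_T$, whose probability is at least $1-\delta/8$ by Lemma~\ref{lem:confidence}, together with one martingale concentration event of probability at least $1-\delta/4$. This gives total failure probability at most $3\delta/8$.

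\textbf{Step 1: reduction to $b_{t-1}^\top q_t$.} On $\cE_T$ we have $P\in\cP_{t-1}$. Lemma~\ref{lem:fixed-policy-deviation}, applied to each $\pi_{t,j}$ with confidence set $\cP_{t-1}$, gives
\[
\norm{u_{t,j}-q^{P,\pi_{t,j}}}_1+\norm{q^{P_{t,j},\pi_{t,j}}-q^{P,\pi_{t,j}}}_1\leq(|X|+L)\,b_{t-1}^\top q^{P,\pi_{t,j}}.
\]
Weighting by $p_{t,j}$ and summing over $j$ turns the right-hand side into $(|X|+L)\,b_{t-1}^\top q_t$. Since $L\leq|X|$, it suffices to bound $\sum_t b_{t-1}^\top q_t$. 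This crude bound will give the second (coarse) claim. The sharper first claim needs a refined version, which Step 3 describes.

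\textbf{Step 2: from expectations to observed trajectories.} The key fact is that $b_{t-1}^\top q_t=\E_t\bigl[\sum_{x,a}b_{t-1}(x,a)I_t(x,a)\bigr]$. This holds because the policy mixture is $\cF_{t-1}$-measurable and $q_t$ is the true visitation probability, including the randomization over $j$. The increments lie in $[0,2L]$, so Freedman's inequality (or Azuma) gives
\[
\sum_t b_{t-1}^\top q_t\leq 2\sum_t\sum_{x,a}b_{t-1}(x,a)I_t(x,a)+\mathcal O(L\log(1/\delta)).
\]
I then apply \eqref{eq:row-radius-rate} followed by \eqref{eq:visit-count-sum}:
\[
\sum_t\sum_{x,a}b_{t-1}I_t\lesssim\sum_{k}\sum_{x\in X_k,a}\sqrt{|X_{k+1}|u}\cdot 3\sqrt{N_T(x,a)}.
\]
By Cauchy--Schwarz, together with $\sum_{x\in X_k,a}N_T(x,a)=T$ for each layer, this is at most $\sqrt{u}\sum_k\sqrt{|X_k||A||X_{k+1}|T}$. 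This is bounded by $\mathcal O(|X|\sqrt{|A|Tu})$, and multiplying by $|X|$ gives the coarse bound $\mathcal O(|X|^2\sqrt{|A|Tu})$.

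\textbf{Step 3: the refined bound (main obstacle).} The factor $|X|$ in $\norm{u-q}_1$ is too lossy to reach $L|X|\sqrt{|A|Tu}$. Here I would follow the argument of \citet[Lemma 4]{jin2020} rather than Lemma~\ref{lem:fixed-policy-deviation}.

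First, expand $q^{P',\pi}(x,a)-q^{P,\pi}(x,a)$ by the standard occupancy-difference (simulation) lemma. This produces a first-order term $\sum q^{P,\pi}(y,b)\,|P'-P|(z\mid y,b)\,q^{P,\pi}(x,a\mid z)$ plus a second-order term in which both factors are differences.

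Second, bound $|P'-P|(z\mid y,b)\leq 2\epsilon_{t-1}(y,b,z)$, using the Bernstein form $\sqrt{P(z\mid y,b)u/N}+u/N$; on $\cE_T$ the empirical $\widehat P$ can be replaced by $P$ up to lower-order terms. Summing the first-order term over $(x,a)$ yields at most $L$ per layer. Cauchy--Schwarz over $z$ with $\sum_z P(z\mid y,b)=1$ then gives $\mathcal O\bigl(L\sum_t\sum_{y,b}q_t(y,b)\sqrt{|X_{k+1}|u/N}\bigr)$. The same martingale and visit-count argument as in Step 2 bounds this by $\mathcal O(L|X|\sqrt{|A|Tu})$.

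Third, the second-order products and the $u/N$ terms sum to $\sum_t\sum q_t\cdot\mathcal O(|X|^2u/N)$. Via the harmonic-sum analogue of \eqref{eq:visit-count-sum}, this contributes $\mathcal O(|X|^3|A|u^2)$ after the martingale step.

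The delicate points are twofold. One must apply the decomposition to the mixture $q_t$ linearly through the weights $p_{t,j}$, which works because each component's expansion is weighted by $q^{P,\pi_{t,j}}$ and the weights recombine to $q_t$. One must also take the maximizing kernel in $u_{t,j}$ separately for each $(x,a)$. Both are handled by bounding each coordinate separately before summing.
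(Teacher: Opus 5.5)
Your proposal is correct and matches the paper's proof. The coarse bound comes from Lemma~\ref{lem:fixed-policy-deviation} giving $(|X|+L)\sum_t b_{t-1}^\top q_t$, and the single-kernel term from $Lb_{t-1}^\top q_t$. The refined bound on $\norm{u_{t,j}-q^{P,\pi_{t,j}}}_1$ uses the first/second-order expansion of \citet{jin2020}, with $\sqrt{P}$-type Bernstein widths, Cauchy--Schwarz, harmonic sums of squared widths, per-coordinate maximizing kernels, and mixture weights recombining to $q_t$. One point to keep explicit: the concentration step for the squared widths must use the multiplicative (Freedman-type) form you state in Step~2, as the paper does, rather than Azuma, because an additive $L\sqrt{T}$ deviation multiplied by $|X|^2$ would not be lower order.
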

\begin{proof}
We use the occupancy expansion of \citet[Appendix B.2]{jin2020} and retain both execution weights and squared-width terms.
\paragraph{Transition errors}
On $\cE_T$, if $\xi_{t-1}(x,a)<1$, then $N_{t-1}(x,a)>4\log\frac{8m|X|^2|A|T}{\delta}$. Equations~\eqref{eq:cost-radius}--\eqref{eq:transition-radius} and $N_{t-1}(x,a)-1\geq N_{t-1}(x,a)/2$ give
\[
\widehat P_{t-1}(x'\mid x,a)
\leq P(x'\mid x,a)
+\sqrt{2\widehat P_{t-1}(x'\mid x,a)}\,\xi_{t-1}(x,a)
+\frac73\xi_{t-1}(x,a)^2.
\]
Completing the square yields $\sqrt{\widehat P_{t-1}(x'\mid x,a)}\leq\sqrt{P(x'\mid x,a)}+3\xi_{t-1}(x,a)$. Both $P'$ and $P$ satisfy the confidence bounds. Substituting the preceding estimate into \eqref{eq:transition-radius} and applying the triangle inequality gives, for every $P'\in\cP_{t-1}$,
\[
|P'(x'\mid x,a)-P(x'\mid x,a)|
\leq16\left(\sqrt{P(x'\mid x,a)}\,\xi_{t-1}(x,a)+\xi_{t-1}(x,a)^2\right).
\]
The same bound holds when $\xi_{t-1}(x,a)=1$, since transition probabilities lie in $[0,1]$. Let $b_{t-1}$ be defined by \eqref{eq:row-width-definition} for $\cP_{t-1}$. For $x\in X_h$, \eqref{eq:row-radius-rate} also gives
\begin{equation}
\norm{P'(\cdot\mid x,a)-P(\cdot\mid x,a)}_1
\leq b_{t-1}(x,a)
\leq\mathcal O\!\left(\sqrt{|X_{h+1}|}\,\xi_{t-1}(x,a)\right).
\label{eq:transition-l1-width}
\end{equation}
Fix $t,j$, and write $\pi=\pi_{t,j}$, $q=q^{P,\pi}$, and $\xi=\xi_{t-1}$ in this part of the proof. Let $q^{P',\pi}(y\mid x')$ denote the probability of reaching $y$
from $x'$ under $(P',\pi)$, where $y$ is in the same or a later layer. 
Define $q^{P,\pi}(z,b\mid x')$ analogously for a state--action pair. 
For $y\in X_k$ with $1\leq k<L$ and $P'\in\cP_{t-1}$, telescoping the transition products gives
\[
q^{P',\pi}(y)-q(y)
=\sum_{h=0}^{k-1}\sum_{\substack{x\in X_h,a\\x'\in X_{h+1}}}
q(x,a)\bigl(P'(x'\mid x,a)-P(x'\mid x,a)\bigr)q^{P',\pi}(y\mid x').
\]
Take absolute values and apply the pointwise transition-error bound. Since $q^{P',\pi}(y\mid x')\leq1$, the terms involving $\xi(x,a)^2$ contribute at most
\[
16\sum_{h=0}^{k-1}|X_{h+1}|\sum_{x\in X_h,a}q(x,a)\xi(x,a)^2.
\]
In the remaining terms, use
\[
q^{P',\pi}(y\mid x')\leq q^{P,\pi}(y\mid x')+
\left|q^{P',\pi}(y\mid x')-q^{P,\pi}(y\mid x')\right|.
\]
For $x'\in X_{h+1}$, a second expansion, now starting at $x'$, gives
\begin{align*}
\left|q^{P',\pi}(y\mid x')-q^{P,\pi}(y\mid x')\right|
&\leq\sum_{r=h+1}^{k-1}\sum_{z\in X_r,b}
q^{P,\pi}(z,b\mid x')\norm{P'(\cdot\mid z,b)-P(\cdot\mid z,b)}_1\\
&\leq\mathcal O\!\left(\sum_{r=h+1}^{k-1}\sqrt{|X_{r+1}|}
\sum_{z\in X_r,b}q^{P,\pi}(z,b\mid x')\xi(z,b)\right).
\end{align*}
For $0\leq h<r<k$, Cauchy--Schwarz bounds the sum of products of confidence widths by
\begin{align*}
&\sum_{\substack{x\in X_h,a,\ x'\in X_{h+1}\\z\in X_r,b}}
q(x,a)\sqrt{P(x'\mid x,a)}\,\xi(x,a)q^{P,\pi}(z,b\mid x')\xi(z,b)\\
&\qquad\leq\sqrt{|X_{h+1}|
\left(\sum_{x\in X_h,a}q(x,a)\xi(x,a)^2\right)
\left(\sum_{z\in X_r,b}q(z,b)\xi(z,b)^2\right)}.
\end{align*}
We use weights $q(x,a)q^{P,\pi}(z,b\mid x')$ and the identities
\[
\sum_{z\in X_r,b}q^{P,\pi}(z,b\mid x')=1,
\qquad
\sum_{\substack{x\in X_h,a\\x'\in X_{h+1}}}
q(x,a)P(x'\mid x,a)q^{P,\pi}(z,b\mid x')=q(z,b).
\]
These estimates hold uniformly over $P'\in\cP_{t-1}$, so the maximizing kernel may be chosen separately for each target state $y$.
For the terms containing $q^{P,\pi}(y\mid x')$, we first sum over $y\in X_k$, using $\sum_{y\in X_k}q^{P,\pi}(y\mid x')=1$. Summing next over $k$ and applying $\sum_{x'\in X_{h+1}}\sqrt{P(x'\mid x,a)}\leq\sqrt{|X_{h+1}|}$ gives
\[
\mathcal O\!\left(L\sum_{h=0}^{L-1}\sqrt{|X_{h+1}|}\sum_{x\in X_h,a}q(x,a)\xi(x,a)\right).
\]
For the squared-width terms, summing over the target states gives
\begin{align*}
16\sum_{k=1}^{L-1}|X_k|\sum_{h=0}^{k-1}|X_{h+1}|\sum_{x\in X_h,a}q(x,a)\xi(x,a)^2
&\leq16|X|\sum_{h=0}^{L-1}|X_{h+1}|\sum_{x\in X_h,a}q(x,a)\xi(x,a)^2\\
&\leq16|X|^2\sum_{x,a}q(x,a)\xi(x,a)^2.
\end{align*}
Each pair of layers in the second expansion contributes to at most $|X|$ target states. The total contribution of these pairs is bounded by a universal constant times
\begin{align*}
&|X|\sum_{0\leq h<r<L}
\sqrt{|X_{h+1}|\sum_{x\in X_h,a}q(x,a)\xi(x,a)^2}
\sqrt{|X_{r+1}|\sum_{z\in X_r,b}q(z,b)\xi(z,b)^2}\\
&\qquad\leq\frac{|X|}{2}
\left(\sum_{h=0}^{L-1}\sqrt{|X_{h+1}|\sum_{x\in X_h,a}q(x,a)\xi(x,a)^2}\right)^2
\leq\frac{|X|^2}{2}\sum_{x,a}q(x,a)\xi(x,a)^2.
\end{align*}
The last inequality follows from Cauchy--Schwarz and $\sum_{h=0}^{L-1}|X_{h+1}|\leq|X|$. The initial-state error is zero. Since $q^{P',\pi}(y,a)=\pi(a\mid y)q^{P',\pi}(y)$, summing over actions gives the state marginal. This proves the required bound on $\norm{u_{t,j}-q}_1$. Lemma~\ref{lem:fixed-policy-deviation} and \eqref{eq:transition-l1-width} give
\begin{equation}
\norm{q^{P_{t,j},\pi}-q}_1\leq Lb_{t-1}^\top q
\leq\mathcal O\!\left(L\sum_{h=0}^{L-1}\sqrt{|X_{h+1}|}\sum_{x\in X_h,a}q(x,a)\xi(x,a)\right).
\label{eq:single-kernel-width}
\end{equation}
\paragraph{Weighted occupancy errors}
We weight both occupancy error bounds by $p_{t,j}$ and sum over $j$. Using $\sum_jp_{t,j}q^{P,\pi_{t,j}}=q_t$, we obtain
\begin{equation}
\begin{aligned}
&\sum_{j=0}^1p_{t,j}
\left(\norm{u_{t,j}-q^{P,\pi_{t,j}}}_1+
\norm{q^{P_{t,j},\pi_{t,j}}-q^{P,\pi_{t,j}}}_1\right)\\
&\qquad\leq\mathcal O\!\left(
L\sum_{k=0}^{L-1}\sqrt{|X_{k+1}|}\sum_{x\in X_k,a}q_t(x,a)\xi_{t-1}(x,a)
+|X|^2\sum_{x,a}q_t(x,a)\xi_{t-1}(x,a)^2\right).
\end{aligned}
\label{eq:weighted-error-estimates}
\end{equation}
Equations~\eqref{eq:cost-radius} and \eqref{eq:visit-count-sum} imply, on every observation sequence,
\begin{align*}
\sum_{t=1}^T\sum_{k=0}^{L-1}\sqrt{|X_{k+1}|}\sum_{x\in X_k,a}I_t(x,a)\xi_{t-1}(x,a)
&\leq6\sqrt{\log\frac{8m|X|^2|A|T}{\delta}}\sum_{k=0}^{L-1}\sqrt{|X_{k+1}|}\sum_{x\in X_k,a}\sqrt{N_T(x,a)}\\
&\leq6|X|\sqrt{|A|T\log\frac{8m|X|^2|A|T}{\delta}}.
\end{align*}
Here $\sum_{x\in X_k,a}N_T(x,a)=T$ and $\sum_k\sqrt{|X_k||X_{k+1}|}\leq|X|$.
Since $\E[I_t\mid\cF_{t-1}]=q_t$, replacing $I_t$ by $q_t$ adds a sum of martingale differences. Each has absolute value at most $\sum_k\sqrt{|X_{k+1}|}\leq|X|$. Azuma--Hoeffding thus gives, with probability at least $1-\delta/8$,
\begin{equation}
\sum_{t=1}^T\sum_{k=0}^{L-1}\sqrt{|X_{k+1}|}\sum_{x\in X_k,a}q_t(x,a)\xi_{t-1}(x,a)
\leq\mathcal O\!\left(|X|\sqrt{|A|T\log\frac{8m|X|^2|A|T}{\delta}}\right).
\label{eq:cumulative-weighted-width}
\end{equation}
On the intersection of this concentration event and $\cE_T$, combining \eqref{eq:single-kernel-width} and \eqref{eq:cumulative-weighted-width} gives
\begin{equation}
\sum_{t=1}^T\sum_{j=0}^1p_{t,j}
\norm{q^{P_{t,j},\pi_{t,j}}-q^{P,\pi_{t,j}}}_1
\leq\mathcal O\!\left(L|X|\sqrt{|A|T\log\frac{8m|X|^2|A|T}{\delta}}\right).
\label{eq:cumulative-single-kernel}
\end{equation}
For each $k$, the sum $\sum_{x\in X_k,a}I_t(x,a)\xi_{t-1}(x,a)^2$ lies in $[0,1]$, since exactly one pair in $X_k\times A$ is visited per episode. Using $e^{-z}\leq1-(1-e^{-1})z$ for $z\in[0,1]$ gives
\[
\E\!\left[\left.
\exp\!\left\{(1-e^{-1})\sum_{x\in X_k,a}q_t(x,a)\xi_{t-1}(x,a)^2
-\sum_{x\in X_k,a}I_t(x,a)\xi_{t-1}(x,a)^2\right\}
\,\right|\cF_{t-1}\right]\leq1.
\]
We iterate this conditional exponential bound over $t$ and apply Markov's inequality with failure probability $\delta/(8L)$ for each layer. A union bound gives
\[
\sum_{t=1}^T\sum_{x,a}q_t(x,a)\xi_{t-1}(x,a)^2
\leq2\sum_{t=1}^T\sum_{x,a}I_t(x,a)\xi_{t-1}(x,a)^2
+2L\log\frac{8L}{\delta}
\]
with probability at least $1-\delta/8$, where $(1-e^{-1})^{-1}<2$.
Indexing visits to a fixed $(x,a)$ in \eqref{eq:cost-radius} gives
\[
\sum_{t=1}^T I_t(x,a)\xi_{t-1}(x,a)^2
\leq4\log\frac{8m|X|^2|A|T}{\delta}\sum_{r=0}^{N_T(x,a)-1}\frac1{\max\{1,r\}}
\leq4\log\frac{8m|X|^2|A|T}{\delta}\left(2+\log T\right).
\]
Summing over $(x,a)$ and using $L<|X|$ and $\log T$, $\log(8L/\delta)\leq\log\frac{8m|X|^2|A|T}{\delta}$ gives, on the same event,
\begin{equation}
\sum_{t=1}^T\sum_{x,a}q_t(x,a)\xi_{t-1}(x,a)^2
\leq\mathcal O\!\left(|X||A|\log^2\frac{8m|X|^2|A|T}{\delta}\right).
\label{eq:cumulative-squared-width}
\end{equation}
\paragraph{Combining the bounds}
Summing \eqref{eq:weighted-error-estimates} over $t$ and applying \eqref{eq:cumulative-weighted-width} and \eqref{eq:cumulative-squared-width} proves the first bound. For the second bound, Lemma~\ref{lem:fixed-policy-deviation} bounds the same sum by $(|X|+L)\sum_t b_{t-1}^\top q_t$. We then apply \eqref{eq:transition-l1-width} and \eqref{eq:cumulative-weighted-width}. By Lemma~\ref{lem:confidence} and a union bound, $\cE_T$ and both concentration bounds hold together with probability at least $1-3\delta/8$.
\end{proof}

We next bound the cumulative cost confidence widths.
\begin{lemma}[Cumulative confidence widths]\label{lem:sops-width}
With probability at least $1-\delta/8$,
\[
\sum_{t=1}^T\xi_{t-1}^\top q_t
\leq\mathcal O\!\left(\sqrt{L|X||A|T\log\frac{8m|X|^2|A|T}{\delta}}\right).
\]
\end{lemma}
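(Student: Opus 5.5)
The plan is to follow the template of the proof of \eqref{eq:cumulative-weighted-width}: first bound the realized sum along the trajectories, then pass to the conditional expectation $q_t$ via a martingale argument.

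\textbf{Step 1 (realized widths).} Replace $q_t$ by the visit indicators $I_t$. For each fixed $(x,a)$, \eqref{eq:cost-radius} gives $I_t(x,a)\xi_{t-1}(x,a)\leq 2\sqrt{u}\,I_t(x,a)/\sqrt{\max\{1,N_{t-1}(x,a)\}}$ with $u=\log\frac{8m|X|^2|A|T}{\delta}$. By \eqref{eq:visit-count-sum} this sums over $t$ to at most $6\sqrt{u\,N_T(x,a)}$. Within each layer $k$ exactly one pair is visited per episode, so $\sum_{x\in X_k,a}N_T(x,a)=T$. Cauchy--Schwarz over the $|X_k||A|$ pairs then gives $\sum_{x\in X_k,a}\sqrt{N_T(x,a)}\leq\sqrt{|X_k||A|T}$. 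A second Cauchy--Schwarz over the $L$ layers, using $\sum_k|X_k|\leq|X|$, gives $\sum_k\sqrt{|X_k|}\leq\sqrt{L|X|}$. Together these yield
\[
\sum_{t=1}^T\xi_{t-1}^\top I_t\leq 6\sqrt{L|X||A|T\,u}
\]
deterministically, on every observation sequence.

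\textbf{Step 2 (martingale correction).} Because $\xi_{t-1}$ and $\pi_t$ are $\cF_{t-1}$-measurable and $\E[I_t\mid\cF_{t-1}]=q_t$, the differences $Z_t=\xi_{t-1}^\top(q_t-I_t)$ form a martingale difference sequence.

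\textbf{Step 3 (the main obstacle).} A plain Azuma bound uses $|Z_t|\leq L$ and gives a deviation of order $L\sqrt{T\log(1/\delta)}$. This does not fit inside $\sqrt{L|X||A|T}$ when $L>|X||A|$. That regime is impossible, however, since $L<|X|$. Hence Azuma--Hoeffding with failure probability $\delta/8$ gives $\sum_t Z_t\leq L\sqrt{2T\log(8/\delta)}\leq\sqrt{2L|X|T\,u}$, which is absorbed into the bound of Step 1.

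If a cleaner argument is preferred, apply Azuma layer by layer instead. The layer-$k$ term $\sum_{x\in X_k,a}(q_t-I_t)(x,a)\xi_{t-1}(x,a)$ has absolute value at most $1$. A union bound over the $L$ layers then gives a deviation of $L\sqrt{2T\log(8L/\delta)}$, which is dominated in the same way. In either version, combining Steps 1 and 2 proves the lemma with probability at least $1-\delta/8$.
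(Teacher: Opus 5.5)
Your proposal is correct and matches the paper's proof. Both bound the realized sum deterministically by $6\sqrt{L|X||A|T\,u}$ using \eqref{eq:cost-radius}, \eqref{eq:visit-count-sum} and Cauchy--Schwarz, then add an Azuma--Hoeffding deviation of range $L$ that is absorbed via $L<|X|$. The only cosmetic difference is that you apply Cauchy--Schwarz within each layer and then across layers, whereas the paper applies it once over all pairs using $\sum_{x,a}N_T(x,a)=LT$; both give the same bound.
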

\begin{proof}
Because $\E_t[I_t]=q_t$, the differences $\xi_{t-1}^\top(q_t-I_t)$ have conditional mean zero and absolute value at most $L$. Azuma--Hoeffding gives
\[
\sum_t\xi_{t-1}^\top q_t
\leq\sum_t\xi_{t-1}^\top I_t+L\sqrt{2T\log(8/\delta)}.
\]
Equations~\eqref{eq:cost-radius} and \eqref{eq:visit-count-sum} give $\sum_{t=1}^T\xi_{t-1}(x,a)I_t(x,a)\leq6\sqrt{N_T(x,a)\log\frac{8m|X|^2|A|T}{\delta}}$ for each $(x,a)$. We sum over $(x,a)$ and apply Cauchy--Schwarz with $\sum_{x,a}N_T(x,a)=LT$ to obtain
\begin{align*}
\sum_{t=1}^T\xi_{t-1}^\top q_t
&\leq6\sqrt{L|X||A|T\log\frac{8m|X|^2|A|T}{\delta}}+L\sqrt{2T\log(8/\delta)}\\
&\leq\mathcal O\!\left(\sqrt{L|X||A|T\log\frac{8m|X|^2|A|T}{\delta}}\right),
\end{align*}
where the last line uses $L<|X|$, $|A|\geq1$, and $\log(8/\delta)\leq\log\frac{8m|X|^2|A|T}{\delta}$.
\end{proof}

\needspace{7\baselineskip}
\begin{proof}[Proof of Lemma~\ref{lem:main-weighted-errors}]
Intersect the events in Lemmas~\ref{lem:weighted-occupancy} and~\ref{lem:sops-width}. Their failure probabilities sum to $\delta/2$, and $\cE_T$ holds on the intersection. Apply Lemma~\ref{lem:weighted-occupancy} with policies $\pi^\diamond,\widehat\pi_t$ and probabilities $\lambda_{t-1},1-\lambda_{t-1}$. By Lemma~\ref{lem:realization}, we can choose $P_t\in\cP_{t-1}$ with $\widehat q_t=q^{P_t,\widehat\pi_t}$.
\paragraph{Constraint cost bounds}
For $t\geq2$, the projection constraint and $\norm{\widehat g_{t-1,i}-\xi_{t-1}}_\infty\leq1$ give, for every $i\in[m]$,
\begin{align*}
w_{t,i}-\alpha_i
&\leq(\widehat g_{t-1,i}-\xi_{t-1})^\top(\widehat u_t-\widehat q_t)
+2\xi_{t-1}^\top\widehat u_t\\
&\leq\norm{\widehat u_t-q^{P,\widehat\pi_t}}_1
+\norm{q^{P,\widehat\pi_t}-\widehat q_t}_1
+2\xi_{t-1}^\top\widehat u_t.
\end{align*}
The right-hand side is nonnegative and independent of $i$, so it also bounds $e_t$. Since $0\leq\xi_{t-1}\leq1$, \eqref{eq:mixture} implies
\[
(1-\lambda_{t-1})\xi_{t-1}^\top\widehat u_t
\leq\xi_{t-1}^\top q_t
+(1-\lambda_{t-1})\norm{\widehat u_t-q^{P,\widehat\pi_t}}_1.
\]
Multiply the bound on $e_t$ by $1-\lambda_{t-1}$ and sum over $t=2,\ldots,T$. Lemma~\ref{lem:weighted-occupancy} bounds both weighted occupancy errors, and Lemma~\ref{lem:sops-width} bounds the cumulative confidence widths.
\paragraph{Loss-estimator occupancy bounds}
Set $u_t^{\pi^\diamond}(x,a)=\max_{P'\in\cP_{t-1}}q^{P',\pi^\diamond}(x,a)$. On $\cE_T$, $q_t\leq u_t$. Bounding the maximum of the mixture by the sum of its componentwise maxima gives
\[
\norm{u_t-q_t}_1
\leq\lambda_{t-1}\norm{u_t^{\pi^\diamond}-q^{P,\pi^\diamond}}_1
+(1-\lambda_{t-1})\norm{\widehat u_t-q^{P,\widehat\pi_t}}_1.
\]
Summing over episodes and applying Lemma~\ref{lem:weighted-occupancy} bounds the cumulative occupancy error. Combining the two estimates gives
\[
\sum_{t=2}^T(1-\lambda_{t-1})e_t
+\sum_{t=1}^T\norm{u_t-q_t}_1
\leq\mathcal O\!\left(L|X|\sqrt{|A|T\log\frac{8m|X|^2|A|T}{\delta}}+|X|^3|A|\log^2\frac{8m|X|^2|A|T}{\delta}\right),
\]
on the same event of probability at least $1-\delta/2$.
The second conclusion of Lemma~\ref{lem:weighted-occupancy} also bounds this sum by $\mathcal O(|X|^2\sqrt{|A|T\log\frac{8m|X|^2|A|T}{\delta}})$ on the same event.
\end{proof}

\subsection{Proof of Theorem~\ref{thm:sops}}\label{app:sops-regret}
We first establish two concentration bounds for the loss estimates using the implicit-exploration argument of \citet{jin2020}.
\begin{lemma}[Loss estimation bounds]\label{lem:loss-estimates}
With probability at least $1-3\delta/8$, $\cE_T$ holds and
\begin{align*}
\sum_{t,x,a}\widehat\ell_t(x,a)
&\leq |X||A|T+\frac L\gamma\log\frac{8L}{\delta},\\
\sum_{t=1}^T(\widehat\ell_t-\ell_t)^\top q
&\leq\frac L\gamma\log\frac{8|X||A|}{\delta}
\end{align*}
for every $q\ge 0$ with $\sum_{x,a}q(x,a)=L$.
\end{lemma}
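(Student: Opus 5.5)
The plan is to follow the implicit-exploration argument of \citet{jin2020}. First I intersect with $\cE_T$ (probability at least $1-\delta/8$ by Lemma~\ref{lem:confidence}). Then I prove each of the two displayed inequalities on a separate event of probability at least $1-\delta/8$, so a union bound gives the total $3\delta/8$. The basic tool is the deterministic inequality
\[
\frac{z}{1+z/2}\le\log(1+z)\qquad (z\ge0),
\]
used inside a supermartingale argument. Throughout, on $\cE_T$ we have $P\in\cP_{t-1}$, hence $q_t\le u_t$ and $\E_t[I_t(x,a)]=q_t(x,a)\le u_t(x,a)$.

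\textbf{Second inequality.} Fix $(x,a)$ and set $\widetilde\ell_t(x,a)=\frac{\ell_t(x,a)I_t(x,a)}{u_t(x,a)+2\gamma\,\ell_t(x,a)/2}$. More directly, use $\frac{\ell_t I_t}{u_t+\gamma}\le\frac{1}{2\gamma}\frac{2\gamma\ell_tI_t/u_t}{1+\gamma\ell_t/u_t}\le\frac1{2\gamma}\log\bigl(1+2\gamma\ell_tI_t/u_t\bigr)$, which follows from $\ell_t\le1$ and the inequality above. Taking conditional expectations gives
\[
\E_t\bigl[\exp(2\gamma\widehat\ell_t(x,a))\bigr]\le 1+2\gamma\,\ell_t(x,a)q_t(x,a)/u_t(x,a)\le\exp\bigl(2\gamma\ell_t(x,a)\bigr).
\]
This step needs care because $u_t$ is $\cF_{t-1}$-measurable but is only guaranteed to dominate $q_t$ on $\cE_T$. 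I handle this by working with the stopped process, or by replacing $u_t$ with $\max\{u_t,q_t\}$ off the event, since the two coincide on $\cE_T$. It follows that $\exp\{2\gamma\sum_{s\le t}(\widehat\ell_s-\ell_s)(x,a)\}$ is a supermartingale. Markov's inequality with failure probability $\delta/(8|X||A|)$ and a union bound over $(x,a)$ then give
\[
\sum_t(\widehat\ell_t-\ell_t)(x,a)\le\frac1{2\gamma}\log\frac{8|X||A|}{\delta}
\]
simultaneously for all pairs. Finally, multiply by $q(x,a)\ge0$ and sum, using $\sum_{x,a}q(x,a)=L$. The resulting bound is uniform over all such $q$, including random ones, because the coordinatewise bound does not depend on $q$.

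\textbf{First inequality.} I work layer by layer. For each $k$, exactly one pair in $X_k\times A$ is visited, so $Y_t^k:=\gamma\sum_{x\in X_k,a}\widehat\ell_t(x,a)\in[0,1]$. The same logarithmic inequality gives $\widehat\ell_t\le\frac{1}{2\gamma}\log(1+2\gamma\ell_tI_t/u_t)$ coordinatewise; since only one coordinate per layer is nonzero, this also holds for the layer sum. The conditional exponential moment of $2\gamma\sum_{x\in X_k,a}\widehat\ell_t$ is therefore at most
\[
\exp\Bigl(2\gamma\sum_{x\in X_k,a}\ell_t q_t/u_t\Bigr)\le\exp\bigl(2\gamma|X_k||A|\bigr),
\]
using $q_t/u_t\le1$ and $\ell_t\le1$. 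Markov's inequality with failure probability $\delta/(8L)$ and a union bound over the $L$ layers give
\[
\sum_t\sum_{x\in X_k,a}\widehat\ell_t\le|X_k||A|T+\frac1{2\gamma}\log\frac{8L}{\delta}.
\]
Summing over $k$ yields $|X||A|T+\frac{L}{2\gamma}\log\frac{8L}{\delta}$, which is within the claimed bound.

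The main obstacle is the measurability issue, namely that $q_t\le u_t$ holds only on $\cE_T$, and making the supermartingale argument rigorous despite it. I expect the cleanest fix is to define the modified denominator $\max\{u_t,q_t\}$, run the argument unconditionally, and note that the modified estimator equals $\widehat\ell_t$ on $\cE_T$.
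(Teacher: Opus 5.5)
Your proposal is correct and follows essentially the paper's route. Both use the predictable surrogate denominator $\max\{u_t,q_t\}$, which equals $u_t$ on $\cE_T$. The comparator term comes from a per-coordinate exponential supermartingale with a union bound over $(x,a)$, and the total estimated loss from a per-layer supermartingale that uses the single visit per layer, with a total failure probability of $3\delta/8$ including $\cE_T$.

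The only difference is the scalar inequality for the conditional exponential moment. You use $\log(1+z)\geq z/(1+z/2)$ at scale $2\gamma$, the standard implicit-exploration bound, where the paper uses $e^z\leq1+z+z^2$ on $[0,1]$ at scale $\gamma$. This gives you slightly sharper constants, $1/(2\gamma)$ instead of $1/\gamma$. The cost is that you divide by the surrogate, so you need the convention that terms with $\max\{u_t(x,a),q_t(x,a)\}=0$ vanish, which is justified because there $I_t(x,a)=0$ almost surely.
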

\begin{proof}
Set $s_t(x,a)=\max\{u_t(x,a),q_t(x,a)\}$ and $\overline\ell_t(x,a)=\ell_t(x,a)I_t(x,a)/(s_t(x,a)+\gamma)$. Then $s_t\geq q_t$ coordinatewise and $\overline\ell_t=\widehat\ell_t$ on $\cE_T$.
For $(x,a)$, write $\ell=\ell_t(x,a)$, $p=q_t(x,a)$, and $s=s_t(x,a)$. Set $z=\gamma\ell/(s+\gamma)\in[0,1]$. Since $e^z\leq1+z+z^2$ and $p\leq s$,
\begin{equation}
p(e^z-1)\leq p(z+z^2)
\leq\gamma\ell\frac{s(s+2\gamma)}{(s+\gamma)^2}
\leq\gamma\ell.
\label{eq:exp-estimator}
\end{equation}
The visit indicator is Bernoulli with conditional probability $p$, so
\[
\E_t[e^{\gamma(\overline\ell_t(x,a)-\ell_t(x,a))}]
\leq e^{-\gamma\ell_t(x,a)}(1+\gamma\ell_t(x,a))\leq1.
\]
By the tower property, the same exponential bound holds conditional on $\cF_{t-1}$. Iterating and applying Markov's inequality gives, with failure probability at most $\delta/(8|X||A|)$ for each coordinate,
\[
\sum_{t=1}^T(\overline\ell_t(x,a)-\ell_t(x,a))
\leq\frac1\gamma\log\frac{8|X||A|}{\delta}.
\]
A union bound over coordinates, followed by weighting each coordinate by $q(x,a)$ and summing, proves the second bound  for every nonnegative $q$ with total mass $L$.
For the first bound, use $\sum_{x\in X_k,a}I_t(x,a)=1$ for each $k=0,\ldots,L-1$. Summing \eqref{eq:exp-estimator} over the possible outcomes at step $k$ gives
\[
\E_t\exp\left\{\gamma\sum_{x\in X_k,a}
(\overline\ell_t(x,a)-\ell_t(x,a))\right\}\leq1.
\]
We iterate this exponential bound using the tower property and apply Markov's inequality with failure probability $\delta/(8L)$ for each $k$. Taking a union bound over the layers and summing over $k$ gives
\[
\sum_{t,x,a}(\overline\ell_t(x,a)-\ell_t(x,a))
\leq\frac L\gamma\log\frac{8L}{\delta}
\]
with failure probability at most $\delta/8$. Adding $\sum_{t,x,a}\ell_t(x,a)\leq|X||A|T$ proves the first bound for $\overline\ell_t$. By a union bound, both estimates and $\cE_T$ hold jointly with probability at least $1-3\delta/8$. On this event, $\overline\ell_t=\widehat\ell_t$, which proves the lemma.
\end{proof}
\begin{proof}[Proof of Theorem~\ref{thm:sops}]
The constraint guarantee follows from Lemma~\ref{lem:sops-safety}. We prove the regret bound on the intersection of $\cE_T$, the events in Lemmas~\ref{lem:weighted-occupancy},~\ref{lem:loss-estimates}, and~\ref{lem:sops-width}, and the martingale event used to bound $R^{(4)}$. Fix any feasible occupancy measure $q$. Select $P_t\in\cP_{t-1}$ so that $\widehat q_t=q^{P_t,\widehat\pi_t}$. Adding and subtracting the estimated losses and occupancies gives
\begin{equation}
\begin{aligned}
\sum_{t=1}^T\ell_t^\top(q_t-q)
={}&\underbrace{\sum_t\ell_t^\top(q_t-q^{P_t,\pi_t})}_{R^{(1)}}
+\underbrace{\sum_t\widehat\ell_t^\top(\widehat q_t-q)}_{R^{(2)}} +\underbrace{\sum_t\ell_t^\top(q^{P_t,\pi_t}-\widehat q_t)}_{R^{(3)}}\\&+\underbrace{\sum_t(\ell_t-\widehat\ell_t)^\top\widehat q_t}_{R^{(4)}}+\underbrace{\sum_t(\widehat\ell_t-\ell_t)^\top q}_{R^{(5)}}.
\label{eq:five-terms}
\end{aligned}
\end{equation}
The five terms correspond to transition estimation error, online mirror descent, regret from baseline use, and loss-estimation errors for the policy and comparator.
\paragraph{Transition error}
Equation~\eqref{eq:mixture} and $\norm{\ell_t}_\infty\leq1$ imply
\begin{equation}
\begin{aligned}R^{(1)}&\leq\sum_t\norm{q_t-q^{P_t,\pi_t}}_1\\
&\leq\sum_t\lambda_{t-1}\norm{q^{P,\pi^\diamond}-q^{P_t,\pi^\diamond}}_1
+\sum_t(1-\lambda_{t-1})\norm{q^{P,\widehat\pi_t}-\widehat q_t}_1\\
&\leq\mathcal O\!\left(L|X|\sqrt{|A|T\log\frac{8m|X|^2|A|T}{\delta}}\right),
\end{aligned}
\label{eq:transition-regret}
\end{equation}
where the last inequality follows from \eqref{eq:cumulative-single-kernel}. On the same event, Lemma~\ref{lem:main-weighted-errors} also gives
\begin{equation}
\sum_t\norm{u_t-q_t}_1\leq\mathcal O\!\left(L|X|\sqrt{|A|T\log\frac{8m|X|^2|A|T}{\delta}}+|X|^3|A|\log^2\frac{8m|X|^2|A|T}{\delta}\right).
\label{eq:mixture-upper-deviation}
\end{equation}
\paragraph{Online mirror descent}
Let $r=\widehat q_{t+1}$ and $y=\widetilde q_{t+1}$. By Lemma~\ref{lem:sops-safety}, both $D(q\|r)$ and $D(q\|y)$ are finite. Optimality of $r$ along the feasible segment $(1-s)r+sq$ gives
\begin{align*}
D(q\|y)-D(q\|r)-D(r\|y)
&=\sum_j(q_j-r_j)\log\frac{r_j}{y_j}\geq0.
\end{align*}
Terms with $q_j=r_j=0$ contribute zero. Combining this projection inequality with $e^{-x}\leq1-x+x^2/2$ for $x\geq0$ yields
\begin{align*}
\eta\widehat\ell_t^\top(\widehat q_t-q)
&=D(q\|\widehat q_t)-D(q\|\widetilde q_{t+1})
+D(\widehat q_t\|\widetilde q_{t+1})\\
&\leq D(q\|\widehat q_t)-D(q\|\widehat q_{t+1})
+\frac{\eta^2}{2}\sum_{x,a}\widehat q_t(x,a)\widehat\ell_t(x,a)^2.
\end{align*}
We sum over $t$ and telescope the divergence terms. The uniform initialization then gives
\begin{equation}
R^{(2)}\leq\frac{L\log(|X|^2|A|)}{\eta}
+\frac\eta2\sum_{t,x,a}\widehat q_t(x,a)\widehat\ell_t(x,a)^2.
\label{eq:omd-bound}
\end{equation}
The initial divergence bound follows from $q(x,a,x')\leq1$ and $1/\widehat q_1(x,a,x')\leq|X|^2|A|$. By \eqref{eq:sops-domination} and $\ell_t\leq1$,
\[
\widehat q_t(x,a)\widehat\ell_t(x,a)^2
\leq\frac{\widehat q_t(x,a)}{u_t(x,a)+\gamma}\widehat\ell_t(x,a)
\leq\frac Ld\widehat\ell_t(x,a).
\]
Applying the bound on the total estimated loss in Lemma~\ref{lem:loss-estimates} gives
\begin{equation}
R^{(2)}\leq\frac{L\log(|X|^2|A|)}\eta
+\frac{L\eta}{d}|X||A|T
+\frac{L^2\eta}{d\gamma}\log\frac{8L}{\delta}.
\label{eq:omd-final}
\end{equation}
\paragraph{Regret from baseline use}
Equation~\eqref{eq:mixture} gives $R^{(3)}\leq L\sum_t\lambda_{t-1}$. We bound the first episode by $L$. For $t\geq2$ with $\lambda_{t-1}>0$, choose an index $i_t$ attaining the maximum in \eqref{eq:sops-mixing}. Then
\begin{equation}
\begin{aligned}
\lambda_{t-1}
&=\frac{w_{t,i_t}-\alpha_{i_t}}{w_{t,i_t}-\beta_{i_t}}\\
&=\frac{(1-\lambda_{t-1})(w_{t,i_t}-\alpha_{i_t})}{\alpha_{i_t}-\beta_{i_t}}\\
&\leq\frac{1-\lambda_{t-1}}d(w_{t,i_t}-\alpha_{i_t}).
\label{eq:exact-identity}
\end{aligned}
\end{equation}
Since $[w_{t,i_t}-\alpha_{i_t}]_+\leq e_t$, Lemma~\ref{lem:main-weighted-errors} and \eqref{eq:exact-identity} yield
\[
\sum_t\lambda_{t-1}\leq1+\mathcal O\!\left(
\frac{L|X|}{d}\sqrt{|A|T\log\frac{8m|X|^2|A|T}{\delta}}+\frac{|X|^3|A|}{d}\log^2\frac{8m|X|^2|A|T}{\delta}\right),
\]
and hence
\begin{equation}
R^{(3)}\leq L+\mathcal O\!\left(
\frac{L^2|X|}{d}\sqrt{|A|T\log\frac{8m|X|^2|A|T}{\delta}}+\frac{L|X|^3|A|}{d}\log^2\frac{8m|X|^2|A|T}{\delta}\right).
\label{eq:mixing-regret}
\end{equation}
The weighting by execution probabilities follows \citet[Appendix E, Lemma E.2]{bollini2026}.
\paragraph{Policy loss-estimation error}
We decompose the policy loss-estimation error into a martingale fluctuation and a conditional bias:
\begin{equation}
R^{(4)}=\sum_t(\E_t[\widehat\ell_t]-\widehat\ell_t)^\top\widehat q_t
+\sum_t(\ell_t-\E_t[\widehat\ell_t])^\top\widehat q_t.
\label{eq:bias-decomposition}
\end{equation}
Define $h_t=\prod_{s=1}^{t-1}\ind{\inf_{r\in\cC_s}D(r\|\widetilde q_{s+1})<\infty}$, with $h_1=1$. The indicator $h_t$ is $\cF_{t-1}$-measurable and equals one on $\cE_T$ by Lemma~\ref{lem:sops-safety}. When $h_t=1$, all preceding projections attain finite minima, so the mixing rule and the definition of $u_t$ give $\widehat q_t\leq(L/d)u_t$. Hence
\[
0\leq\widehat\ell_t^\top\widehat q_t
\leq\frac Ld\sum_{x,a}\ell_t(x,a)I_t(x,a)
\leq\frac{L^2}{d}.
\]
By the tower property, $h_t(\E_t[\widehat\ell_t]-\widehat\ell_t)^\top\widehat q_t$ is a martingale difference sequence with respect to $(\cF_t)$, bounded in absolute value by $L^2/d$. Azuma--Hoeffding gives an event of probability at least $1-\delta/8$. On its intersection with $\cE_T$, we have
\begin{equation}
\sum_t(\E_t[\widehat\ell_t]-\widehat\ell_t)^\top\widehat q_t
\leq\frac{L^2}{d}\sqrt{2T\log(8/\delta)}.
\label{eq:candidate-bias-martingale}
\end{equation}
For the second sum, $\E_t[\widehat\ell_t(x,a)]=q_t(x,a)\ell_t(x,a)/(u_t(x,a)+\gamma)$. On $\cE_T$, $q_t\leq u_t$, so
\begin{align*}
\sum_t(\ell_t-\E_t[\widehat\ell_t])^\top\widehat q_t
&=\sum_{t,x,a}\frac{\widehat q_t(x,a)\ell_t(x,a)}{u_t(x,a)+\gamma}
(u_t(x,a)-q_t(x,a)+\gamma)\\
&\leq\frac Ld\sum_{t,x,a}(u_t(x,a)-q_t(x,a)+\gamma)\\
&\leq\mathcal O\!\left(\frac{L^2|X|}{d}\sqrt{|A|T\log\frac{8m|X|^2|A|T}{\delta}}+\frac{L|X|^3|A|}{d}\log^2\frac{8m|X|^2|A|T}{\delta}\right)+\frac{L\gamma}{d}|X||A|T,
\end{align*}
where the last line uses \eqref{eq:mixture-upper-deviation}. This proves
\begin{equation}\begin{aligned}
R^{(4)}\leq{}&\mathcal O\!\left(\frac{L^2|X|}{d}\sqrt{|A|T\log\frac{8m|X|^2|A|T}{\delta}}+\frac{L|X|^3|A|}{d}\log^2\frac{8m|X|^2|A|T}{\delta}\right)\\
&+\frac{L\gamma}{d}|X||A|T
+\frac{L^2}{d}\sqrt{2T\log(8/\delta)}.
\label{eq:candidate-bias-final}
\end{aligned}\end{equation}
\paragraph{Comparator loss-estimation error}
The comparator bound in Lemma~\ref{lem:loss-estimates} gives
\begin{equation}
R^{(5)}\leq\frac L\gamma\log\frac{8|X||A|}{\delta}.
\label{eq:comparator-bias-final}
\end{equation}
\paragraph{Combining the bounds}
Set $\eta=\gamma=\sqrt{\frac{L}{T|X||A|}\log\frac{8m|X|^2|A|T}{\delta}}$.
Combining \eqref{eq:transition-regret}, \eqref{eq:omd-final}, \eqref{eq:mixing-regret}, \eqref{eq:candidate-bias-final}, and \eqref{eq:comparator-bias-final} gives
\begin{align*}
\sum_{t=1}^T\ell_t^\top(q_t-q)
\leq\mathcal O\!\Biggl(&
\frac{L^2|X|}{d}\sqrt{|A|T\log\frac{8m|X|^2|A|T}{\delta}}
+\frac{L|X|^3|A|}{d}\log^2\frac{8m|X|^2|A|T}{\delta}\\
&+\frac{L\sqrt{L|X||A|}}{d}\sqrt{T\log\frac{8m|X|^2|A|T}{\delta}}
+\frac{L^2}{d}\sqrt{T\log\frac{8m|X|^2|A|T}{\delta}}\\
&+\frac{L^2}{d}\log\frac{8m|X|^2|A|T}{\delta}+L\Biggr).
\end{align*}
Using $0<d\leq L<|X|$, $|A|\geq1$, and $\log\frac{8m|X|^2|A|T}{\delta}\geq1$, we obtain, for every $T\geq1$,
\[
\sum_{t=1}^T\ell_t^\top(q_t-q)
\leq\mathcal O\!\left(
\frac{L^2|X|}{d}\sqrt{|A|T\log\frac{8m|X|^2|A|T}{\delta}}
+\frac{L|X|^3|A|}{d}\log^2\frac{8m|X|^2|A|T}{\delta}
\right).
\]

Using the second bound in Lemma~\ref{lem:weighted-occupancy} in \eqref{eq:five-terms} and the bound $\sum_t\ell_t^\top(q_t-q)\leq LT$, we also obtain
\[
\sum_{t=1}^T\ell_t^\top(q_t-q)
\leq\mathcal O\!\left(\frac{L|X|^2}{d}\sqrt{|A|T\log\frac{8m|X|^2|A|T}{\delta}}\right).
\]
A union bound over the events in Lemmas~\ref{lem:weighted-occupancy},~\ref{lem:loss-estimates}, and~\ref{lem:sops-width}, and in \eqref{eq:candidate-bias-martingale}, gives probability at least $1-\delta$. On their intersection, both regret bounds hold for every feasible $q$, and Lemma~\ref{lem:sops-safety} gives $G^\top q_t\leq\alpha$ for all $t\in[T]$. This proves the theorem.
\end{proof}

\needspace{12\baselineskip}
\section{Analysis of Baseline Search}\label{app:identification}
We prove Theorem~\ref{thm:identification} by bounding the numbers of exploration and baseline episodes in terms of the cumulative margin gap.
We consider the search in Section~\ref{sec:identification-guarantee} with confidence parameter $\delta$ and without the horizon condition $t<T$. Let $\tau$ be its termination episode and $N$ the number of exploration episodes with $\zeta_t=1$, allowing either count to be infinite. If $2d\geq L$, the search terminates immediately with $N=\tau=0$.

\subsection{Exploration Episodes}\label{app:search-process}
\paragraph{Initialization}
With the initialization in Appendix~\ref{app:estimation}, set $\widehat q_1(x,a,x')=1/(|X_k||A||X_{k+1}|)$ for $x\in X_k$ and $x'\in X_{k+1}$. This occupancy induces the uniform initial policy. Since every $q\in\Delta(\cP_0)$ has total sum $L$,
\[
(\widehat g_{0,i}-\xi_0)^\top q=-L\leq\alpha_i-L,
\qquad
(\widehat g_{0,i}+\xi_0)^\top q=L.
\]
Thus $(\widehat q_1,L)$ solves the initial optimistic problem. Write $\overline\rho_1=L$, $w_{1,i}=L$, and $\underline\rho_1=\min_i(\alpha_i-L)\leq0$. The baseline test is $2d\geq\overline\rho_1$, while $2\underline\rho_1<\overline\rho_1$. The initial baseline probability $\lambda_1$ is computed from $w_1$.
\paragraph{Observation sequence}
We use the episode filtration from Appendix~\ref{app:estimation} and keep it constant after termination. Set $\zeta_t=0$ after termination, $\tau_0=0$, and $\tau_n=\inf\{t\geq1:\sum_{s=1}^t\zeta_s\geq n\}$ for $n\geq1$, with $\inf\varnothing=\infty$. Both $\tau$ and each $\tau_n$ are stopping times. Set $\cH_n=\cF_{\tau_n\wedge\tau}$ and write $q_n=q^{P,\widehat\pi_n}$ for each policy selected by the search. The decision to continue exploration is $\cH_{n-1}$-measurable. When exploration continues, $\cH_{n-1}=\cF_{\tau_{n-1}}$, and $\widehat\pi_n$ and $\lambda_n$ are $\cH_{n-1}$-measurable.
\begin{lemma}\label{lem:candidate-sequence}
For each policy $\widehat\pi_n$ selected for exploration, $1-\lambda_n\geq d/L$ and $\tau_n<\infty$ almost surely. Conditional on $\cH_{n-1}$, episode $\tau_n$ follows the transition kernel $P$ and policy $\widehat\pi_n$, and its cost matrix has distribution $\mathcal G$. In particular, $\E[I_n(x,a)\mid\cH_{n-1}]=q_n(x,a)$ for every $(x,a)$.
\end{lemma}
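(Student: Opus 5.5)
The plan is to prove the three claims in order: the lower bound on $1-\lambda_n$, then almost-sure finiteness of $\tau_n$ via geometric waiting times, then the conditional law of episode $\tau_n$ via a strong Markov-type decomposition over the waiting time.

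\textbf{Step 1: the bound $1-\lambda_n\geq d/L$.} For $n=1$, $\lambda_1=\max_i(L-\alpha_i)/(L-\beta_i)$. Hence $1-\lambda_1=\min_i(\alpha_i-\beta_i)/(L-\beta_i)\geq d/L$, since $L-\beta_i\leq L$. For $n\geq2$, each $w_{n,i}\leq L$ by the truncation in \eqref{eq:id-cost}. As in the proof of Lemma~\ref{lem:sops-safety}, the map $s\mapsto(s-\alpha_i)/(s-\beta_i)$ is increasing on $(\alpha_i,L]$, so $\lambda_n\leq\lambda_1$ and the same bound follows. When $w_n\leq\alpha$ we have $\lambda_n=0$, which is trivially fine.

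\textbf{Step 2: finiteness of $\tau_n$.} Suppose exploration continues after $\tau_{n-1}$. Then $\widehat\pi_n$ and $\lambda_n$ are $\cH_{n-1}$-measurable and stay fixed until the next exploration episode. In each subsequent episode, $\zeta_t$ is drawn as a fresh Bernoulli$(1-\lambda_n)$, independent of the past given $\cF_{t-1}$. The waiting time $\tau_n-\tau_{n-1}$ is therefore geometric with success probability at least $d/L>0$. It follows that $\Prob(\tau_n-\tau_{n-1}>k\mid\cH_{n-1})\leq(1-d/L)^k\to0$, so $\tau_n<\infty$ almost surely. Induction on $n$ handles $\tau_{n-1}<\infty$.

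\textbf{Step 3: the conditional law of episode $\tau_n$.} This is the main subtlety, because $\tau_n$ is a random time. I would decompose on the events $\{\tau_n=\tau_{n-1}+k\}$ for $k\geq1$. Fix a bounded measurable function $f$ of the trajectory and cost matrix of an episode. On $\{\tau_n=s\}$, the event $\{\tau_n=s\}$ equals $\{\tau_n>s-1\}\cap\{\zeta_s=1\}$, and $\{\tau_n>s-1\}$ is $\cF_{s-1}$-measurable. Given $\cF_{s-1}$ and $\zeta_s=1$, episode $s$ executes $\widehat\pi_n$ under $P$. Its cost matrix $G_s$ is independent of the policy draw, the action sampling, and the transitions, with law $\mathcal G$, by the observation model in Appendix~\ref{app:estimation}. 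Hence
\[
\E[f(\text{episode }s)\ind{\tau_n=s}\mid\cF_{s-1}]=\ind{\tau_n>s-1}(1-\lambda_n)\,\E_{P,\widehat\pi_n,\mathcal G}[f].
\]
Taking conditional expectations given $\cH_{n-1}$ and summing over $s>\tau_{n-1}$ gives $\E[f(\text{episode }\tau_n)\mid\cH_{n-1}]=\E_{P,\widehat\pi_n,\mathcal G}[f]\cdot\Prob(\tau_n<\infty\mid\cH_{n-1})$. The last factor equals $\E_{P,\widehat\pi_n,\mathcal G}[f]$ by Step 2. The tower property through $\cF_{s-1}\supseteq\cH_{n-1}$ is valid on $\{\tau_{n-1}<s\}$.

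\textbf{Conclusion.} Taking $f=I(x,a)$ yields $\E[I_n(x,a)\mid\cH_{n-1}]=q^{P,\widehat\pi_n}(x,a)=q_n(x,a)$. The only care needed is the measurability bookkeeping in Step 3, namely that $\lambda_n$ and $\widehat\pi_n$ do not change during the intervening baseline episodes. This holds by Line~\ref{line:id-skip} of Algorithm~\ref{alg:identification}.
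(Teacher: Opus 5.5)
Your proposal is correct and follows essentially the same route as the paper: a lower bound on $1-\lambda_n$ (you compare with $\lambda_1$ via monotonicity of $s\mapsto(s-\alpha_i)/(s-\beta_i)$, while the paper reads off $1-\lambda_n=(\alpha_{i_n}-\beta_{i_n})/(w_{n,i_n}-\beta_{i_n})\geq d/L$ directly from the maximizing index), geometric tails for the waiting time, and a decomposition over the number of preceding baseline episodes that uses independence of the policy draw from the cost and transition randomness. The only imprecision is that $\{\tau_n=s\}$ equals $\{\tau_{n-1}<s\leq\tau_n\}\cap\{\zeta_s=1\}$ rather than $\{\tau_n>s-1\}\cap\{\zeta_s=1\}$, and your restriction of the sum to $s>\tau_{n-1}$ already handles this.
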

\begin{proof}
The search keeps $\widehat\pi_n$ and its execution probability $1-\lambda_n$ unchanged until this policy is executed. If $\lambda_n>0$, choose an index $i_n$ attaining the maximum that defines $\lambda_n$. Then
\begin{equation}
1-\lambda_n=\frac{\alpha_{i_n}-\beta_{i_n}}{w_{n,i_n}-\beta_{i_n}}\geq\frac dL.
\label{eq:waiting-parameters}
\end{equation}
For $\lambda_n=0$, the same bound follows from $d\leq L$. For every integer $h\geq1$, the independent policy draws give $\Prob(\tau_n-\tau_{n-1}>h\mid\cH_{n-1})=\lambda_n^h\leq(1-d/L)^h$. Thus $\tau_n<\infty$ almost surely whenever the search continues with policy $\widehat\pi_n$.
For each episode $s$, the event $\{\tau_{n-1}<s\leq\tau_n\}$ is $\cF_{s-1}$-measurable. On this event, the policy draw is independent of the current cost matrix and transition randomness. Thus, conditioning on $\zeta_s=1$ selects $\widehat\pi_n$ without changing their distributions. Averaging over the number of preceding baseline episodes gives
\begin{align*}
\E[I_n(x,a)\mid\cH_{n-1}]
&=\sum_{h=0}^\infty(1-\lambda_n)\lambda_n^h q_n(x,a)
=q_n(x,a).
\end{align*}
\end{proof}

\subsection{Margin Bounds and Constraint Violation}
Recall $\Delta_n=\overline\rho_n-\underline\rho_n$ for each policy whose cost upper bounds $w_n$ are computed, including the initial policy. Let $b_{n-1}$ be defined by \eqref{eq:row-width-definition} for $\cP_{n-1}$.
\begin{lemma}\label{lem:margin-bounds}
On $\cE_{\mathrm{s}}$, the optimistic problem is feasible and $\overline\rho_n\geq\rho$ at initialization and after every estimate update. For the initial policy and every subsequent policy for which $w_n$ is computed, we have $g_i^\top q_n\leq w_{n,i}$ for every $i\in[m]$, $0\leq\Delta_n\leq4Lb_{n-1}^\top q_n+2\xi_{n-1}^\top q_n$, and $\underline\rho_n\leq\min_i(\alpha_i-g_i^\top q_n)\leq\rho$.
\end{lemma}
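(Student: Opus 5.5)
The proof works throughout on $\cE_{\mathrm{s}}$ and treats the four claims in order: feasibility and optimism, cost domination, the gap bound, and the sandwich.

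\emph{Feasibility and optimism.} Let $q^\rho\in\Delta(P)$ attain $\rho$. On $\cE_{\mathrm{s}}$ we have $P\in\cP_{n-1}$, so $q^\rho\in\Delta(\cP_{n-1})$ and the LP \eqref{eq:optimistic-margin-lp} is feasible. Moreover $(\widehat g_{n-1,i}-\xi_{n-1})^\top q^\rho\leq g_i^\top q^\rho$, so the objective of \eqref{eq:optimistic-margin-search} at $q^\rho$ is at least $\min\{L,\rho\}=\rho$. Hence $\overline\rho_n\geq\rho$. At initialization the direct computation in Appendix~\ref{app:search-process} gives $\overline\rho_1=L\geq\rho$.

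\emph{Cost domination.} Since $P\in\cP_{n-1}$ and $g_i\leq\widehat g_{n-1,i}+\xi_{n-1}$, we get $g_i^\top q_n\leq\max_{P'}(\widehat g_{n-1,i}+\xi_{n-1})^\top q^{P',\widehat\pi_n}$. Together with $g_i^\top q_n\leq L$, this gives $g_i^\top q_n\leq w_{n,i}$. Then $\underline\rho_n\leq\min_i(\alpha_i-g_i^\top q_n)\leq\rho$ by the definition of $\rho$. Combined with $\overline\rho_n\geq\rho$, this also gives $\Delta_n\geq0$.

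\emph{Gap bound.} For the initial policy, $\Delta_1=L-\min_i(\alpha_i-L)=\max_i(2L-\alpha_i)\leq2L$. Since $\xi_0=1$ and $q_1$ has mass $L$, we have $2\xi_0^\top q_1=2L$, so the bound holds. For later $n$, write $\widehat q_n=q^{P_n,\widehat\pi_n}$ with $P_n\in\cP_{n-1}$ (Lemma~\ref{lem:realization}), and let $P'_i$ be the maximizer in \eqref{eq:id-cost}. Pick $i$ attaining $\underline\rho_n$. If $\overline\rho_n=L$, use $\overline\rho_n\leq\alpha_i-(\widehat g_{n-1,i}-\xi_{n-1})^\top\widehat q_n$ regardless; otherwise this holds as the defining minimum. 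Also drop the truncation in $w_{n,i}$ by $w_{n,i}\leq(\widehat g_{n-1,i}+\xi_{n-1})^\top q^{P'_i,\widehat\pi_n}$. This yields
\[
\Delta_n\leq(\widehat g_{n-1,i}+\xi_{n-1})^\top q^{P'_i,\widehat\pi_n}-(\widehat g_{n-1,i}-\xi_{n-1})^\top\widehat q_n .
\]
Adding and subtracting $q_n$, and using $\norm{\widehat g\pm\xi}_\infty\leq2$ (entries may be clipped so that they lie in $[-1,2]$; this needs care), the right-hand side is at most $2\xi_{n-1}^\top q_n+2\norm{q^{P'_i,\widehat\pi_n}-q_n}_1+2\norm{\widehat q_n-q_n}_1$, up to cross terms $\xi^\top(q'-q_n)$ absorbed into the norms. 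By Lemma~\ref{lem:fixed-policy-deviation}, each norm is at most $Lb_{n-1}^\top q_n$, giving $4Lb_{n-1}^\top q_n+2\xi_{n-1}^\top q_n$.

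\emph{Expected obstacle.} The main obstacle is the constant bookkeeping. Since $\widehat g-\xi\in[-1,1]$ and $\widehat g+\xi\in[0,2]$, I will expand as $(\widehat g+\xi)^\top(q'-q_n)+(\widehat g-\xi)^\top(q_n-\widehat q_n)+2\xi^\top q_n$. The first term is at most $2\norm{q'-q_n}_1$, the second at most $\norm{q_n-\widehat q_n}_1$, giving $3Lb^\top q_n\leq4Lb^\top q_n$. The $\min\{L,\cdot\}$ truncations must also be checked, but each only weakens in the favorable direction.
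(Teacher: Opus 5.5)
Your proposal is correct and follows essentially the same route as the paper. Optimism comes from $q^\rho\in\Delta(\cP_{n-1})$, cost domination from $P\in\cP_{n-1}$ and $g_i\le\widehat g_{n-1,i}+\xi_{n-1}$, and the gap bound from combining $\overline\rho_n\le\alpha_i-(\widehat g_{n-1,i}-\xi_{n-1})^\top\widehat q_n$ with Lemma~\ref{lem:fixed-policy-deviation} applied to both $\widehat q_n$ and the maximizing occupancy. The differences are cosmetic: the paper bounds $(\widehat g_{n-1,i}+\xi_{n-1})^\top r$ uniformly over $r\in\Delta(\cP_{n-1},\widehat\pi_n)$ and all $i$, which also covers $n=1$ without your separate check, whereas your exact regrouping $(\widehat g+\xi)^\top(q'-q_n)+(\widehat g-\xi)^\top(q_n-\widehat q_n)+2\xi^\top q_n$ gives the slightly sharper constant $3L$ in place of $4L$ (and makes your earlier heuristic remark about entries in $[-1,2]$ and absorbed cross terms unnecessary).
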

\begin{proof}
Let $q^\rho$ attain the maximum defining $\rho$ in Section~\ref{sec:setting}. On $\cE_{\mathrm{s}}$, it belongs to $\Delta(\cP_{n-1})$ and satisfies
\[
(\widehat G_{n-1}-\Xi_{n-1})^\top q^\rho
\leq G^\top q^\rho\leq\alpha-\rho\one.
\]
The occupancy $q^\rho$ is feasible for \eqref{eq:optimistic-margin-search}. Since $\rho\leq L$, its objective value is at least $\rho$, which gives $\overline\rho_n\geq\rho$.
The true occupancy $q_n$ belongs to $\Delta(\cP_{n-1},\widehat\pi_n)$. Its cost is bounded by both $L$ and $(\widehat g_{n-1,i}+\xi_{n-1})^\top q_n$, so the definition of $w_{n,i}$ gives $g_i^\top q_n\leq w_{n,i}$.
For any $r\in\Delta(\cP_{n-1},\widehat\pi_n)$, Lemmas~\ref{lem:realization} and~\ref{lem:fixed-policy-deviation} give
$\norm{r-q_n}_1\leq Lb_{n-1}^\top q_n$ and 
$\norm{\widehat q_n-q_n}_1\leq Lb_{n-1}^\top q_n$.

The optimistic constraint and $\norm{\widehat g_{n-1,i}-\xi_{n-1}}_\infty\leq1$ imply
\begin{align*}
(\widehat g_{n-1,i}+\xi_{n-1})^\top r
&=(\widehat g_{n-1,i}-\xi_{n-1})^\top\widehat q_n
+(\widehat g_{n-1,i}-\xi_{n-1})^\top(r-\widehat q_n)\\
&\qquad+2\xi_{n-1}^\top r\\
&\leq\alpha_i-\overline\rho_n
+\norm{r-\widehat q_n}_1+2\xi_{n-1}^\top r\\
&\leq\alpha_i-\overline\rho_n
+3\norm{r-q_n}_1+\norm{q_n-\widehat q_n}_1+2\xi_{n-1}^\top q_n\\
&\leq\alpha_i-\overline\rho_n+4Lb_{n-1}^\top q_n+2\xi_{n-1}^\top q_n.
\end{align*}
The second inequality uses the triangle inequality and $\norm{\xi_{n-1}}_\infty\leq1$. Maximizing over $r$ and using the definition of $w_{n,i}$ give the same upper bound for $w_{n,i}$. Taking the minimum over $i$ in $\alpha_i-w_{n,i}$ proves the upper bound on $\Delta_n$. The bounds on $\underline\rho_n$ follow from $g_i^\top q_n\leq w_{n,i}$ and the definition of $\rho$. Together with $\overline\rho_n\geq\rho$, they also give $\Delta_n\geq0$.
\end{proof}
Write $\mu_n=\lambda_n/(1-\lambda_n)$ for the conditional expected number of baseline episodes before policy $\widehat\pi_n$ is executed.
\begin{lemma}\label{lem:identification-safety}
On $\cE_{\mathrm{s}}$, every randomized policy used during the search satisfies $G^\top q_t\leq\alpha$. For every integer $n\in [N]$, $\mu_n\leq\Delta_n/d$.
\end{lemma}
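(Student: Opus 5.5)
The plan has two parts: safety for each type of search episode, then the bound on $\mu_n$ via the mixing rule.

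\textbf{Safety.} Each search episode executes the mixture $q_t=\lambda_n q^{P,\pi^\diamond}+(1-\lambda_n)q_n$ for the current index $n$. Assumption~\ref{ass:baseline} gives $G^\top q^{P,\pi^\diamond}\leq\beta$. Lemma~\ref{lem:margin-bounds} gives $g_i^\top q_n\leq w_{n,i}$ on $\cE_{\mathrm{s}}$, for the initial policy (where $w_{1,i}=L$) and for every later policy whose $w_n$ is computed. Hence $g_i^\top q_t\leq\lambda_n\beta_i+(1-\lambda_n)w_{n,i}$ for each $i\in[m]$. I then split constraints as in the proof of Lemma~\ref{lem:sops-safety}:
\begin{itemize}
\item If $w_{n,i}\leq\alpha_i$, the convex combination is at most $\alpha_i$, since $\beta_i<\alpha_i$.
\item Otherwise, the definition of $\lambda_n$ as a maximum gives $\lambda_n\geq(w_{n,i}-\alpha_i)/(w_{n,i}-\beta_i)$. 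Then $\lambda_n\beta_i+(1-\lambda_n)w_{n,i}=w_{n,i}-\lambda_n(w_{n,i}-\beta_i)\leq\alpha_i$.
\end{itemize}
For $n=1$, $\lambda_1=\max_i(L-\alpha_i)/(L-\beta_i)$ is exactly this rule with $w_1=L\one$, so it is covered. The main bookkeeping point is checking which pair $(\widehat\pi_n,\lambda_n)$ is in force at each episode. After an update where the algorithm breaks, no further search episode occurs. After an update where it continues, $\widehat\pi_n$, $w_n$ and $\lambda_n$ are all set consistently before the next execution. So every search episode uses a pair covered by the argument above.

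\textbf{Bound on $\mu_n$.} If $\lambda_n=0$, then $\mu_n=0\leq\Delta_n/d$, since Lemma~\ref{lem:margin-bounds} gives $\Delta_n\geq0$. Otherwise, let $i_n$ attain the maximum defining $\lambda_n$. Rearranging as in \eqref{eq:main-mixing-identity} gives
\[
\mu_n=\frac{\lambda_n}{1-\lambda_n}=\frac{w_{n,i_n}-\alpha_{i_n}}{\alpha_{i_n}-\beta_{i_n}}\leq\frac{\max_i(w_{n,i}-\alpha_i)}{d}=\frac{[-\underline\rho_n]_+}{d},
\]
which is \eqref{eq:main-gap-mixing}. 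It remains to show $[-\underline\rho_n]_+\leq\Delta_n$, i.e.\ $-\underline\rho_n\leq\overline\rho_n-\underline\rho_n$ whenever $\underline\rho_n<0$. This holds because $\overline\rho_n\geq\rho>0$ on $\cE_{\mathrm{s}}$ by Lemma~\ref{lem:margin-bounds}.

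For the indices this applies to: the index-$n$ policy is executed only if the search continued past its stopping tests, and these tests are exactly where $w_n$ (or $w_1$ at initialization) is computed. So for every $n\in[N]$ the quantity $\Delta_n$ is defined and the bound applies, with $n=1$ handled by $\overline\rho_1=L$.
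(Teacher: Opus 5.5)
Your proof is correct and follows essentially the same route as the paper. Safety comes from the convex-combination bound $\lambda_n\beta_i+(1-\lambda_n)w_{n,i}\leq\alpha_i$, split on whether $w_{n,i}>\alpha_i$, and the bound on $\mu_n$ comes from the rearranged mixing rule $\mu_n\leq[-\underline\rho_n]_+/d$ together with $\overline\rho_n\geq\rho>0$ from Lemma~\ref{lem:margin-bounds}. The only cosmetic difference is that the paper obtains $\mu_n=\max_i[w_{n,i}-\alpha_i]_+/(\alpha_i-\beta_i)$ in one step from the monotonicity of $z\mapsto z/(1-z)$, whereas you use a maximizing index and treat $\lambda_n=0$ separately.
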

\begin{proof}
The function $z\mapsto z/(1-z)$ is increasing on $[0,1)$. Applying it to both branches of the mixing rule gives
\begin{align*}
\mu_n
&=\max_{i\in[m]}\frac{[w_{n,i}-\alpha_i]_+}{\alpha_i-\beta_i}
\leq\frac{[-\underline\rho_n]_+}{d}\\
&=\frac{[\Delta_n-\overline\rho_n]_+}{d}
\leq\frac{\Delta_n}{d}.
\end{align*}
The last inequality follows from Lemma~\ref{lem:margin-bounds}.
During an episode with search index $n$, the randomized policy has occupancy $q_t=\lambda_nq^{P,\pi^\diamond}+(1-\lambda_n)q_n$. Fix $i\in[m]$. If $w_{n,i}\leq\alpha_i$, then $g_i^\top q_t\leq\lambda_n\beta_i+(1-\lambda_n)w_{n,i}\leq\alpha_i$. If $w_{n,i}>\alpha_i$, the mixing rule gives $\lambda_n\geq(w_{n,i}-\alpha_i)/(w_{n,i}-\beta_i)$, and hence
\begin{align*}
g_i^\top q_t
&\leq\lambda_n\beta_i+(1-\lambda_n)w_{n,i}\\
&=w_{n,i}-\lambda_n(w_{n,i}-\beta_i)\\
&\leq\alpha_i.
\end{align*}
\end{proof}
\subsection{Number of Exploration Episodes}\label{app:search-gap}
\begin{proof}[Proof of Lemma~\ref{lem:main-counts}]
The factor $\log\frac{m|X|^2|A|(n+1)^3}{\delta}$ increases with the search index. Applying \eqref{eq:row-radius-rate} and \eqref{eq:visit-count-sum} therefore gives, for any $x\in X_k$ and $a\in A$, 
\[
\sum_{j=1}^n b_{j-1}(x,a)I_j(x,a)
\leq\mathcal O\!\left(\sqrt{|X_{k+1}|N_n(x,a)\log\frac{m|X|^2|A|(n+1)^3}{\delta}}\right).
\]
Since $\sum_{x\in X_k,a}N_n(x,a)=n$, we use the Cauchy--Schwarz inequality and it yields
\begin{equation}
\begin{aligned}
\sum_{j=1}^n b_{j-1}^\top I_j
&\leq\mathcal O\!\left(\sqrt{|A|n\log\frac{m|X|^2|A|(n+1)^3}{\delta}}\sum_{k=0}^{L-1}\sqrt{|X_k||X_{k+1}|}\right)\\
&\leq\mathcal O\!\left(|X|\sqrt{|A|n\log\frac{m|X|^2|A|(n+1)^3}{\delta}}\right).
\label{eq:row-visits}
\end{aligned}
\end{equation}
The last inequality uses $\sqrt{uv}\leq(u+v)/2$. Applying \eqref{eq:cost-radius} and \eqref{eq:visit-count-sum} to the cost confidence widths gives
\begin{equation}
\sum_{j=1}^n\xi_{j-1}^\top I_j\leq6\sqrt{L|X||A|n\log\frac{m|X|^2|A|(n+1)^3}{\delta}}.
\label{eq:candidate-width-visits}
\end{equation}
\paragraph{Martingale bounds}
By Lemma~\ref{lem:candidate-sequence}, $b_{j-1}^\top(q_j-I_j)$ and $\xi_{j-1}^\top(q_j-I_j)$ have conditional mean zero given $\cH_{j-1}$ and absolute values at most $2L$ and $L$, respectively. Extend both sequences by zero after the search terminates. The decision to continue exploration is $\cH_{j-1}$-measurable, so the extension preserves the martingale difference property.
For each integer $n\geq1$, apply Azuma--Hoeffding with failure probability $\delta/[8(n+1)^2]$ to each sequence. Since $m|X|^2|A|(n+1)\geq8$, we have $\log(8(n+1)^2/\delta)\leq\log\frac{m|X|^2|A|(n+1)^3}{\delta}$. A union bound gives
\begin{align}
\sum_{j=1}^n b_{j-1}^\top q_j
&\leq\sum_{j=1}^n b_{j-1}^\top I_j+2L\sqrt{2n\log\frac{m|X|^2|A|(n+1)^3}{\delta}},\label{eq:row-martingale}\\*
\sum_{j=1}^n\xi_{j-1}^\top q_j
&\leq\sum_{j=1}^n\xi_{j-1}^\top I_j+L\sqrt{2n\log\frac{m|X|^2|A|(n+1)^3}{\delta}},
\label{eq:width-martingale}
\end{align}
for every integer $n\in [N]$, with total failure probability less than $\delta/4$.
On the intersection of this event and $\cE_{\mathrm{s}}$, the pointwise gap bound in Lemma~\ref{lem:margin-bounds} and \eqref{eq:row-visits}--\eqref{eq:width-martingale} give
\begin{equation}
\begin{aligned}
\sum_{j=1}^n\Delta_j
&\leq4L\sum_{j=1}^n b_{j-1}^\top q_j
+2\sum_{j=1}^n\xi_{j-1}^\top q_j\\
&\leq\mathcal O\!\left(L|X|\sqrt{|A|n\log\frac{m|X|^2|A|(n+1)^3}{\delta}}\right)
\end{aligned}
\label{eq:cumulative-margin-gap}
\end{equation}
for every finite integer $n\in [N]$, where we used $|X|\geq L+1$. The combined failure probability is at most $3\delta/8\leq\delta/2$, proving the lemma.
\end{proof}

\begin{lemma}\label{lem:candidate-stopping}
On the event established in the proof of Lemma~\ref{lem:main-counts}, the search terminates after
\[
N\leq\mathcal O\!\left(\frac{L^2|X|^2|A|}{\rho^2}
\log\frac{mL^2|X|^4|A|^2}{\rho^2\delta}\right)
\]
exploration episodes. The returned policy, cost bounds, and margin satisfy the guarantees in Theorem~\ref{thm:identification}.
\end{lemma}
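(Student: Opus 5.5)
We work on the intersection of $\cE_{\mathrm{s}}$ and the martingale event from the proof of Lemma~\ref{lem:main-counts}, on which \eqref{eq:cumulative-margin-gap} holds for every finite $n\in[N]$. The plan has three parts: a per-step lower bound on $\Delta_j$ for every executed search policy, a comparison with \eqref{eq:cumulative-margin-gap} that forces $N$ to be finite and bounded, and a check of the guarantees on the returned pair.

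\textbf{Lower bound on each gap.} Fix $j\in[N]$, so that $\widehat\pi_j$ is executed; this means neither stopping test fired at index $j$. For $j=1$ we have $\overline\rho_1=L$ and $\underline\rho_1\le0$, so $\Delta_1\ge L\ge\rho$. For $j\ge2$, the policy $\widehat\pi_j$ was selected only after both tests failed, giving $2d<\overline\rho_j$ and $2\underline\rho_j<\overline\rho_j$. The second inequality gives $\Delta_j=\overline\rho_j-\underline\rho_j>\overline\rho_j/2$, and Lemma~\ref{lem:margin-bounds} gives $\overline\rho_j\ge\rho$. Hence $\Delta_j>\rho/2$ for every $j\in[N]$, and so $\sum_{j=1}^n\Delta_j>n\rho/2$ for every finite $n\le N$.

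\textbf{Bounding $N$.} Combining this with \eqref{eq:cumulative-margin-gap}, with $u_n=\log\frac{m|X|^2|A|(n+1)^3}{\delta}$, yields
\[
n\rho/2\le C L|X|\sqrt{|A|n\,u_n}
\quad\text{for every finite } n\le N,
\]
that is, $n\le 4C^2L^2|X|^2|A|u_n/\rho^2$. If $N$ were infinite, this inequality would hold for all $n$, which is impossible because its right-hand side grows only logarithmically in $n$. So $N$ is finite and the inequality holds at $n=N$. This step is the main obstacle: solving an implicit inequality of the form $n\le K\log(c(n+1)^3)$. I would use the standard fact that $n\le K\log(cn^3)$ implies $n=\mathcal O(K\log(cK^3))$. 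With $K=L^2|X|^2|A|/\rho^2$, this gives a logarithm of order $\log\frac{mL^6|X|^8|A|^4}{\rho^6\delta}$, which is within a constant factor of the stated $\log\frac{mL^2|X|^4|A|^2}{\rho^2\delta}$ after using $m\ge1$ and $\delta<1$. Termination also requires $\tau_N<\infty$, which Lemma~\ref{lem:candidate-sequence} provides. After the $N$-th exploration episode the algorithm updates its estimates, and since no further exploration happens, one of the two tests must fire (or $t$ reaches $T$).

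\textbf{Guarantees on the returned pair.} There are two cases.
\begin{itemize}
\item If the test $2d\ge\overline\rho_n$ fires, the search returns $(\pi^\diamond,\beta)$. Assumption~\ref{ass:baseline} gives $G^\top q^{P,\pi^\diamond}\le\beta$, and the margin satisfies $d\ge\overline\rho_n/2\ge\rho/2$ by Lemma~\ref{lem:margin-bounds}. This case includes $2d\ge L\ge\rho$.
\item If the test $2\underline\rho_n\ge\overline\rho_n$ fires, the search returns $(\widehat\pi_n,w_n)$. Lemma~\ref{lem:margin-bounds} gives $G^\top q_n\le w_n$, and the known margin is $\min_i(\alpha_i-w_{n,i})=\underline\rho_n\ge\overline\rho_n/2\ge\rho/2$.
\end{itemize}
Safety during the search follows from Lemma~\ref{lem:identification-safety}. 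The bound on the number of search episodes is left to the baseline-count argument (Lemma~\ref{lem:main-baseline-count}).
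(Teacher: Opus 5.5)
Your proposal is correct and follows essentially the same route as the paper. Both arguments lower-bound each executed gap $\Delta_j$ by $\rho/2$ using the failed stopping test and $\overline\rho_j\ge\rho$, compare this with the cumulative gap bound to get a self-bounding inequality in $n$ (the paper solves it explicitly via $\log x\le x-1$ rather than citing the standard fact), and read off the returned-pair guarantees from the two stopping tests.

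One imprecision: $\tau_N<\infty$ is automatic once $N$ is finite. What Lemma~\ref{lem:candidate-sequence} is actually needed for is to rule out the search idling forever on baseline episodes after $\tau_N$. If neither test fired, a continuing policy $\widehat\pi_{N+1}$ would almost surely be executed, contradicting the definition of $N$.
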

\begin{proof}
The $j$-th exploration episode occurs only if $2\underline\rho_j<\overline\rho_j$. Lemma~\ref{lem:margin-bounds} then gives $\Delta_j=\overline\rho_j-\underline\rho_j>\rho/2$. For every finite integer $n\in [N]$, \eqref{eq:cumulative-margin-gap} implies
\begin{equation}\begin{aligned}
\frac{n\rho}{2}&<\sum_{j=1}^n\Delta_j\\
&\leq\mathcal O\!\left(L|X|\sqrt{|A|n\log\frac{m|X|^2|A|(n+1)^3}{\delta}}\right).
\label{eq:fixed-point}
\end{aligned}\end{equation}
Squaring gives $n\leq a\log(b(n+1)^3)$ for some $a=\mathcal O(L^2|X|^2|A|/\rho^2)$ with $a\geq2$, where $b=m|X|^2|A|/\delta\geq2$. The inequality $\log x\leq x-1$ gives
\begin{align*}
n
&\leq a\log\bigl(b(6a)^3\bigr)+3a\log\frac{n+1}{6a}\\
&\leq a\log\bigl(b(6a)^3\bigr)+\frac{n+1}{2}-3a.
\end{align*}
Hence $n\leq2a\log(b(6a)^3)=\mathcal O(a\log(ab))$. Since this bound holds for every finite $n\leq N$, we have $N<\infty$. Lemma~\ref{lem:candidate-sequence} ensures that each selected exploration policy is executed in finite time. The search therefore terminates, with
\begin{equation}
N\leq\mathcal O\!\left(
\frac{L^2|X|^2|A|}{\rho^2}
\log\frac{mL^2|X|^4|A|^2}{\rho^2\delta}\right).
\label{eq:explicit-candidate-count}
\end{equation}
The bound also holds for $N=0$.
For $N\geq1$, the stopping tests are evaluated after the update following episode $\tau_N$, so $\tau=\tau_N$.
\paragraph{Returned policy}
At termination, the search index is $N+1$. If $2d\geq\overline\rho_{N+1}$, then $d\geq\overline\rho_{N+1}/2\geq\rho/2$, so $(\pi^\diamond,\beta,d)$ satisfies the stated guarantee. Otherwise, the search returns $(\widehat\pi_{N+1},w_{N+1})$ after verifying $2\underline\rho_{N+1}\geq\overline\rho_{N+1}$. Thus $\underline\rho_{N+1}\geq\overline\rho_{N+1}/2\geq\rho/2$, and Lemma~\ref{lem:margin-bounds} gives $G^\top q_{N+1}\leq w_{N+1}<\alpha$ and $\underline\rho_{N+1}\leq\rho$. The optimistic problem remains feasible on $\cE_{\mathrm{s}}$, so termination occurs through one of these two tests.
\end{proof}
\subsection{Number of Baseline Episodes}\label{app:baseline-episodes}
For $n\in [N]$, let $H_n=\tau_n-\tau_{n-1}-1$ be the number of baseline episodes between consecutive exploration episodes, with $\tau_0=0$. Conditional on $\cH_{n-1}$, it satisfies $\Prob(H_n=h\mid\cH_{n-1})=(1-\lambda_n)\lambda_n^h$ for $h=0,1,\ldots$, so $\E[H_n\mid\cH_{n-1}]=\mu_n$.
\begin{lemma}\label{lem:waiting}
With probability at least $1-\delta/4$, it holds that 
$\sum_{j=1}^n H_j
\leq2\sum_{j=1}^n\mu_j+\frac{4L}{d}\log\frac4\delta$ 
for every integer $n\in [N]$.
\end{lemma}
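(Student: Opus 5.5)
We will prove the bound with an exponential supermartingale built from the geometric waiting times $H_j$. Conditional on $\cH_{j-1}$, $H_j$ is geometric with failure parameter $\lambda_j$, so for $s>0$ with $s<\log(1/\lambda_j)$,
\[
\E[e^{sH_j}\mid\cH_{j-1}]=\frac{1-\lambda_j}{1-\lambda_je^{s}}.
\]
By Lemma~\ref{lem:candidate-sequence}, $1-\lambda_j\geq d/L$. We will therefore fix the single deterministic parameter $s=d/(4L)$, chosen so that $s\lesssim 1-\lambda_j$ uniformly in $j$. The target is the inequality $\log\E[e^{sH_j}\mid\cH_{j-1}]\leq 2s\mu_j$, with $\mu_j=\lambda_j/(1-\lambda_j)$.

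To obtain it, we will write
\[
\log\frac{1-\lambda}{1-\lambda e^s}=-\log\Bigl(1-\mu(e^s-1)\Bigr),
\]
using $1-\lambda e^s=(1-\lambda)\bigl(1-\mu(e^s-1)\bigr)$. Since $\mu\leq L/d-1$ and $e^s-1\leq s+s^2\leq 5s/4$ for $s\leq 1/4$, we get $\mu(e^s-1)\leq (L/d)\cdot(5/4)(d/(4L))=5/16$. Then $-\log(1-x)\leq x/(1-x)\leq (16/11)x$ gives the bound $(16/11)(5/4)s\mu\leq 2s\mu$. This elementary inequality is the only delicate step: the constants must close with the factor $2$ in front of $\sum_j\mu_j$, and we would tune $s$ (for example $d/(4L)$ versus $d/(2L)$) until they do.

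Next, define
\[
M_n=\exp\Bigl\{s\sum_{j=1}^n(H_j-2\mu_j)\Bigr\},
\]
extended as constant after termination. The previous step shows that $M_n$ is a nonnegative supermartingale with respect to $(\cH_n)$, with $M_0=1$. This uses that $\mu_j$ and $\lambda_j$ are $\cH_{j-1}$-measurable and that the decision to continue is $\cH_{j-1}$-measurable, so setting $H_j=\mu_j=0$ after termination preserves the property. Ville's maximal inequality then gives
\[
\Prob\bigl(\exists n:\ M_n\geq 4/\delta\bigr)\leq\delta/4.
\]
On the complement, for every $n\in[N]$ we have $s\sum_{j\le n}(H_j-2\mu_j)<\log(4/\delta)$, that is,
\[
\sum_{j\le n}H_j\leq 2\sum_{j\le n}\mu_j+\frac{4L}{d}\log\frac4\delta .
\]
This is the claimed inequality, and it holds uniformly in $n$ without any union bound.
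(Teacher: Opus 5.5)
Your proof is correct and follows the paper's argument essentially step for step. Both use the parameter $\theta=d/(4L)$ and rewrite the geometric moment generating function as $1/(1-\mu_j(e^\theta-1))$. Both then apply $-\log(1-x)\le x/(1-x)$ and Ville's inequality to the supermartingale $\exp\{\theta\sum_j(H_j-2\mu_j)\}$, extended by zero after termination. The only difference is cosmetic: you bound $\mu_j(e^\theta-1)\le 5/16$ via $\mu_j<L/d$, whereas the paper gets $1/3$ via $\theta\le(1-\lambda_j)/4$ and $e^\theta-1\le 4\theta/3$. Your constants already close at $20/11<2$, so no tuning of $\theta$ is needed.
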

\begin{proof}
Set $\theta=d/(4L)$ and extend $H_j$ and $\mu_j$ by zero after the search terminates. While the search continues, \eqref{eq:waiting-parameters} gives $\theta\leq(1-\lambda_j)/4$. Since $\theta\leq1/4$ and $e^\theta-1\leq4\theta/3$, we have $\mu_j(e^\theta-1)\leq1/3$. The geometric distribution therefore gives
\[
\E[e^{\theta H_j}\mid\cH_{j-1}]
=\frac{1-\lambda_j}{1-\lambda_je^\theta}
=\frac1{1-\mu_j(e^\theta-1)}.
\]
Using $-\log(1-x)\leq x/(1-x)$ for $0\leq x<1$, we obtain
\begin{align*}
\log\E[e^{\theta H_j}\mid\cH_{j-1}]
&=-\log\bigl(1-\mu_j(e^\theta-1)\bigr)\\
&\leq\frac{\mu_j(e^\theta-1)}{1-\mu_j(e^\theta-1)}\\
&\leq\frac32\mu_j(e^\theta-1)\leq2\theta\mu_j.
\end{align*}
The decision to continue exploration is $\cH_{j-1}$-measurable. Thus $M_n=\exp\{\theta\sum_{j=1}^n(H_j-2\mu_j)\}$ is a nonnegative supermartingale with $M_0=1$. Ville's inequality yields $\Prob(\sup_{n\geq0}M_n\geq4/\delta)\leq\delta/4$. Rearranging on the complementary event proves the bound.
\end{proof}

\begin{proof}[Proof of Lemma~\ref{lem:main-baseline-count}]
Equation~\eqref{eq:main-gap-mixing} gives $\mu_j\leq[-\underline\rho_j]_+/d$ for every exploration policy. On the event in Lemma~\ref{lem:waiting}, we use $\tau_n-n=\sum_{j=1}^nH_j$ and substitute this bound to obtain
\[
\tau_n-n\leq\frac2d\sum_{j=1}^n[-\underline\rho_j]_+
+\frac{4L}{d}\log\frac4\delta
\]
 for every finite integer $n\in [N]$, with probability at least $1-\delta/4$.
\end{proof}

\needspace{9\baselineskip}
\subsection{Proof of Theorem~\ref{thm:identification}}\label{app:search-episodes}
\begin{proof}[Proof of Theorem~\ref{thm:identification}]
The intersection of the events in the proof of Lemma~\ref{lem:main-counts} and in Lemma~\ref{lem:waiting} has failure probability at most $5\delta/8$. On this event, Lemma~\ref{lem:candidate-stopping} gives termination, the bound on $N$, and the guarantee for the returned policy. Lemma~\ref{lem:identification-safety} gives $G^\top q_t\leq\alpha$ for every $t\leq\tau$.
It remains to bound $\tau$. If $N=0$, then $\tau=0$. Otherwise, $\tau=\tau_N$.
Equation~\eqref{eq:explicit-candidate-count} gives $\log\frac{m|X|^2|A|(N+1)^3}{\delta}=\mathcal O(\log(mL^2|X|^4|A|^2/(\rho^2\delta)))$. Substituting \eqref{eq:explicit-candidate-count} into \eqref{eq:cumulative-margin-gap} therefore yields
\[
\sum_{j=1}^N\Delta_j
\leq\mathcal O\!\left(\frac{L^2|X|^2|A|}{\rho}
\log\frac{mL^2|X|^4|A|^2}{\rho^2\delta}\right).
\]
Since $\tau=N+\sum_{j=1}^N H_j$, \eqref{eq:explicit-candidate-count} and Lemmas~\ref{lem:waiting} and~\ref{lem:identification-safety} give
\begin{equation}
\begin{aligned}
\tau
&\leq N+\frac2d\sum_{j=1}^N\Delta_j
+\frac{4L}{d}\log\frac4\delta\\
&\leq\mathcal O\!\left(\frac{L^2|X|^2|A|}{d\rho}
\log\frac{mL^2|X|^4|A|^2}{\rho^2\delta}\right),
\label{eq:explicit-episode-count}
\end{aligned}
\end{equation}
where the last inequality uses $d\leq\rho\leq L$.
Together with the bounds for the returned policy and $5\delta/8<\delta$, this proves the theorem.
\end{proof}

\section{Proof of Theorem~\ref{thm:combined}}\label{app:combined}
\begin{proof}[Proof of Theorem~\ref{thm:combined}]
Let $\tau$ be the termination episode of the search defined in Appendix~\ref{app:identification}. If needed, continue the search beyond $T$ using independent samples from the same model and zero losses. This agrees with Algorithm~\ref{alg:identification} through episode $\tau\wedge T$.
Apply Theorem~\ref{thm:identification} with confidence parameter $\delta/2$ and let $\cE$ be the event on which its guarantees hold. On $\cE$, every randomized policy used during the search satisfies the constraints, \eqref{eq:explicit-episode-count} bounds $\tau$, and the returned cost upper bounds give $\widehat\rho:=\min_{i\in[m]}(\alpha_i-\widehat\beta_i)\geq\rho/2$.
\paragraph{Regret after the search}
We use the episode filtration from Appendix~\ref{app:estimation}. At the bounded stopping time $\tau\wedge T$, define
\[
\mathcal V=\{\tau<T,\ G^\top q^{P,\widehat\pi^\diamond}\leq\widehat\beta<\alpha\}
\in\cF_{\tau\wedge T}.
\]
On $\mathcal V$, the returned baseline and cost bounds satisfy Assumption~\ref{ass:baseline}. These inputs and the remaining horizon $T'=T-\tau$ are $\cF_{\tau\wedge T}$-measurable. Conditional on $\cF_{\tau\wedge T}$, the remaining episodes satisfy the observation model in Section~\ref{sec:setting}, with non-anticipating losses. Algorithm~\ref{alg:identification} initializes S-OPS with zero observation counts, confidence parameter $\delta/2$, and learning rates set for $T'$.
Let $\mathcal B$ be the event that S-OPS is restarted and the common high-probability event established in Appendix~\ref{app:sops-regret} fails. Applying Theorem~\ref{thm:sops} conditionally on $\cF_{\tau\wedge T}$ gives
\begin{equation}
\ind{\mathcal V}\Prob(\mathcal B\mid\cF_{\tau\wedge T})
\leq\frac\delta2\ind{\mathcal V}.
\label{eq:conditional-restart}
\end{equation}
On $\mathcal V\cap\mathcal B^c$, the regret bound holds for all feasible occupancy measures, so it applies to $q^\star$ in \eqref{eq:regret}. On $\cE\cap\{\tau<T\}\cap\mathcal B^c$, $G^\top q_t\leq\alpha$ for every $\tau<t\leq T$ and
\begin{equation}
\begin{aligned}
\sum_{t=\tau+1}^T\ell_t^\top(q_t-q^\star)
&\leq\mathcal O\!\left(\frac{L^2|X|}{\widehat\rho}
\sqrt{|A|T'\log\frac{16m|X|^2|A|T'}{\delta}}
+\frac{L|X|^3|A|}{\widehat\rho}\log^2\frac{16m|X|^2|A|T'}{\delta}\right)\\
&\leq\mathcal O\!\left(\frac{L^2|X|}{\rho}
\sqrt{|A|T\log\frac{16m|X|^2|A|T}{\delta}}
+\frac{L|X|^3|A|}{\rho}\log^2\frac{16m|X|^2|A|T}{\delta}\right).
\label{eq:second-phase}
\end{aligned}
\end{equation}
The last inequality uses $\widehat\rho\geq\rho/2$ and $T'\leq T$.
\paragraph{Regret during the search}
The regret against $q^\star$ in each episode is at most $L$, since losses lie in $[0,1]$ and every occupancy measure has total sum $L$. On $\cE$, the episode bound \eqref{eq:explicit-episode-count} with confidence parameter $\delta/2$ gives
\begin{equation}
\begin{aligned}
\sum_{t=1}^{\tau\wedge T}\ell_t^\top(q_t-q^\star)
&\leq L(\tau\wedge T)\leq L\tau\\
&\leq\mathcal O\!\left(\frac{L^3|X|^2|A|}{d\rho}
\log\frac{mL^2|X|^4|A|^2}{\rho^2\delta}\right).
\label{eq:identification-regret}
\end{aligned}
\end{equation}
\paragraph{Combining the phases}
For every $T\geq1$, the inequalities $(a+b)^2\leq2a^2+2b^2$ and $(\log z)^2\leq3\sqrt z$ for $z\geq1$ give
\[
|X|^3|A|\log^2\frac{16m|X|^2|A|T}{\delta}
\leq2|X|^3|A|\log^2\frac{16m|X|^6|A|^2}{\delta}
+6|X|\sqrt{|A|T}.
\]
When $\tau<T$, we combine this estimate with \eqref{eq:second-phase} and \eqref{eq:identification-regret} and use $d\leq\rho\leq L<|X|$ to obtain
\begin{equation}
R_T\leq\mathcal O\!\left(
\frac{L^2|X|}{\rho}
\sqrt{|A|T\log\frac{16m|X|^2|A|T}{\delta}}
+\frac{L^2|X|^3|A|}{d\rho}
\log^2\frac{16mL^2|X|^6|A|^2}{\rho^2\delta}\right).
\label{eq:combined-explicit}
\end{equation}
If $\tau\geq T$, \eqref{eq:identification-regret} gives the same bound. The tower property and \eqref{eq:conditional-restart} give
\[
\Prob\bigl(\cE^c\cup(\mathcal V\cap\mathcal B)\bigr)
\leq\Prob\bigl(\cE^c\bigr)
+\E\!\left[\ind{\mathcal V}\Prob(\mathcal B\mid\cF_{\tau\wedge T})\right]
\leq\delta.
\]
On the complementary event, the search succeeds and S-OPS satisfies its guarantee whenever it is invoked. Thus, with probability at least $1-\delta$, $G^\top q_t\leq\alpha$ for every $t\in[T]$ and the stated regret bound holds.
\end{proof}

\needspace{12\baselineskip}
\section{Proof of Theorem~\ref{thm:lower}}\label{app:lower}
We use two pairs of instances to prove the $\sqrt T/\rho$ and $1/(d\rho)$ terms separately. Both pairs have initial margin $d$ and Slater margin $\rho$.
\begin{proof}[Proof of Theorem~\ref{thm:lower}]
Fix an algorithm satisfying the safety requirement of Theorem~\ref{thm:lower}.
In both constructions, we use $L=m=1$, actions $a_0,a_1,a_2$, and threshold $\alpha=1/2$.
The baseline policy always chooses $a_0$, whose deterministic cost is $\beta=1/2-d$.
The costs of $a_1$ and $a_2$ are Bernoulli variables, independent across actions and episodes. The losses are deterministic and constant across episodes.
The instances in each pair differ only in their cost distributions. All vectors are indexed by $(a_0,a_1,a_2)$.\par
In episode $t$, let $q_t$ be the action distribution selected by the algorithm, $A_t\sim q_t$ the sampled action, and $Y_t$ its observed cost. For each pair $M_1,M_2$, let $\mathbb P^j$ and $\mathbb E^j$ denote the law of $(q_t,A_t,Y_t)_{t=1}^T$ and the corresponding expectation in $M_j$.

\paragraph{The learning term}
Set $\epsilon=1/(8\sqrt T)$ and use the constant loss vector $\ell=(1/2,0,1/2)$. The horizon condition and $d\leq\rho$ give $T\geq1/\rho^2$, so $\epsilon\leq\rho/8$. Consider instances $M_1,M_2$ with mean cost vectors
\[\begin{aligned}
g^1&=(1/2-d,\,1/2+\epsilon,\,1/2-\rho),\\
g^2&=(1/2-d,\,1/2,\,1/2-\rho).
\end{aligned}\]
For $j=1,2$, the Slater margin is $1/2-\min_a g^j(a)=\rho$, and the baseline margin is $d$. In $M_1$,
\begin{align*}
(g^1)^\top q_t-1/2
&=\epsilon-(\epsilon+d)q_t(a_0)-(\epsilon+\rho)q_t(a_2)\\
&\geq\epsilon-(\epsilon+\rho)(1-q_t(a_1)).
\end{align*}
Let $E=\{\sum_{t=1}^T(1-q_t(a_1))\geq T\epsilon/(\epsilon+\rho)\}$. Zero constraint violation in $M_1$ requires $1-q_t(a_1)\geq\epsilon/(\epsilon+\rho)$ in every episode, so $\mathbb P^1(E)\geq1-\delta$. Only the cost distribution of $a_1$ differs between the two instances. The chain rule for KL divergence \citep{lattimore2020,pacchiano2021}, applied to the conditional distributions of policy choices, actions, and observed costs, gives
\begin{equation}
\begin{aligned}
\KL(\mathbb P^1\Vert\mathbb P^2)
&=\mathbb E^1\left[\sum_{t=1}^Tq_t(a_1)\right]\kl(1/2+\epsilon,1/2)\\
&\leq4T\epsilon^2=1/16.
\label{eq:learning-kl}
\end{aligned}
\end{equation}
The inequality uses $\kl(r,s)\leq(r-s)^2/(s(1-s))$ for $r,s\in(0,1)$. Pinsker's inequality then gives $\TV(\mathbb P^1,\mathbb P^2)<1/4$ and $\mathbb P^2(E)\geq3/4-\delta$. In $M_2$, action $a_1$ is feasible and has zero loss. On $E$,
\begin{equation}
\begin{aligned}
R_T&=\frac12\sum_{t=1}^T(1-q_t(a_1))
\geq\frac{T\epsilon}{2(\epsilon+\rho)}
\geq\frac{T\epsilon}{4\rho}
=\frac{\sqrt T}{32\rho}.
\label{eq:learning-lower}
\end{aligned}
\end{equation}
This construction extends the comparison in \citet[Appendix F]{stradi2025} by prescribing the initial baseline margin separately from the Slater margin.
\paragraph{The exploration term}
Suppose $d\leq\rho/4$. Take $\ell=(1,0,0)$ and
\[\begin{aligned}
g^1&=(1/2-d,\,1/2-\rho,\,1/2+2\rho),\\
g^2&=(1/2-d,\,1/2+2\rho,\,1/2-\rho).
\end{aligned}\]
Again, $1/2-\min_a g^j(a)=\rho$ and the baseline margin is $d$. Each instance has a feasible zero-loss action. Set $p_t=q_t(a_1)+q_t(a_2)$ and $p=2d/(\rho+2d)$. Since
\[
(g^1)^\top q_t+(g^2)^\top q_t=1-2d+(\rho+2d)p_t,
\]
a distribution feasible in both instances satisfies
\begin{equation}
p_t\leq p\leq\frac13.
\label{eq:common-feasibility}
\end{equation}
For $t\in[T]$, let $\cF_{t-1}=\sigma(q_1,(A_s,Y_s,q_{s+1})_{1\leq s<t})$ be the history before sampling $A_t$, and set $\cF_T=\cF_{T-1}\vee\sigma(A_T,Y_T)$. Let $n=\lceil1/(768d\rho)\rceil$ and let $\sigma$ be the first episode in which the selected policy is infeasible in either instance, capped at $n$:
\[
\sigma=\min\bigl(\{t\in[n]:\max_{j\in\{1,2\}}(g^j)^\top q_t>1/2\}\cup\{n\}\bigr).
\]
For every $t<\sigma$, the policy is feasible in both instances, so \eqref{eq:common-feasibility} gives $p_t\leq p$. Since $d\rho\leq1/256$, we have $n\leq1+1/(768d\rho)\leq1/(d\rho)\leq T$. The policy $q_t$ is $\cF_{t-1}$-measurable, so $\sigma-1$ is a bounded stopping time. Let $\mathbb Q^j=\mathbb P^j|_{\cF_{\sigma-1}}$ for $j=1,2$. This $\sigma$-algebra contains the policy choices through $q_\sigma$ and the actions and observed costs from episodes $t<\sigma$.
For each action $a$, let $\nu_{j,a}$ be its cost distribution in $M_j$. The conditional policy and action distributions are the same in both instances, so their contributions to the likelihood ratio cancel. The common stopping rule therefore gives
\begin{equation}
\log\frac{d\mathbb Q^1}{d\mathbb Q^2}
=\sum_{t=1}^{n-1}\ind{t<\sigma}
\log\frac{d\nu_{1,A_t}}{d\nu_{2,A_t}}(Y_t).
\label{eq:stopped-likelihood}
\end{equation}
For $a_1$ and $a_2$, the mean costs differ by $3\rho$ between instances and lie in $[3/8,3/4]$. Since $s(1-s)\geq3/16$ on this interval, their Bernoulli KL divergences are at most $48\rho^2$ in either direction. The KL divergence for the baseline action is zero. Taking expectations in \eqref{eq:stopped-likelihood} and conditioning on $\cF_{t-1}$ gives
\begin{equation}
\begin{aligned}
\KL(\mathbb Q^1\Vert\mathbb Q^2)
&=\mathbb E^1\left[
\sum_{t=1}^{n-1}\ind{t<\sigma}\sum_aq_t(a)\KL(\nu_{1,a}\Vert\nu_{2,a})\right]\\
&\leq48\rho^2\mathbb E^1
\left[\sum_{t=1}^{n-1}\ind{t<\sigma}p_t\right]\\
&\leq48(n-1)p\rho^2
\leq96(n-1)d\rho\leq1/8.
\label{eq:stopped-kl}
\end{aligned}
\end{equation}
The last two lines use $p_t\leq p$ before $\sigma$, $p\leq2d/\rho$, and $n-1<1/(768d\rho)$. Pinsker's inequality therefore gives $\TV(\mathbb Q^1,\mathbb Q^2)\leq1/4$.
Define $E_j=\{(g^j)^\top q_\sigma>1/2\}$ for $j=1,2$. Each event $E_j$ belongs to $\cF_{\sigma-1}$ and implies a constraint violation in $M_j$. Hence $\mathbb P^j(E_j)\leq\delta$. The event $E_1\cup E_2$ occurs precisely when a policy in the first $n$ episodes violates a constraint in either instance. Consequently,
\begin{equation}
\begin{aligned}
\mathbb P^1(E_1\cup E_2)
&\leq\mathbb P^1(E_1)+\mathbb P^1(E_2)\\
&\leq\delta+\mathbb P^2(E_2)+\TV(\mathbb Q^1,\mathbb Q^2)\\
&\leq2\delta+1/4.
\label{eq:stopping-probability}
\end{aligned}
\end{equation}
Thus $\mathbb P^1((E_1\cup E_2)^c)\geq3/4-2\delta$. On this event, $\sigma=n$ and every policy through episode $n$ is feasible in both instances. Therefore $p_t\leq p\leq1/3$ for every $t\leq n$, and
\begin{equation}
\begin{aligned}
R_T&=\sum_{t=1}^T(1-p_t)\geq\sum_{t=1}^n(1-p_t)
\geq\frac{2n}{3}\geq\frac1{1152d\rho}.
\label{eq:exploration-lower}
\end{aligned}
\end{equation}
\paragraph{Combining the two terms}
If $d\sqrt T\geq1/2$, then $1/(d\rho)\leq2\sqrt T/\rho$, so \eqref{eq:learning-lower} gives $R_T\geq(\sqrt T/\rho+1/(d\rho))/96$ in instance $M_2$ of the first construction.

Otherwise, $T\geq1/(d\rho)$ implies $d/\rho\leq d^2T<1/4$, and $\sqrt T/\rho<1/(2d\rho)$. The second construction therefore applies, and \eqref{eq:exploration-lower} gives $R_T\geq(\sqrt T/\rho+1/(d\rho))/1728$ in its instance $M_1$. Both cases establish the stated lower bound with probability at least $3/4-2\delta$.
\end{proof}

\end{document}